\newcommand{\printfnsymbol}[1]{%
	\textsuperscript{\@fnsymbol{#1}}%
}
\algnewcommand\INPUT{\item[\textbf{Input:}]}
\algnewcommand\OUTPUT{\item[\textbf{Output:}]}
\newcommand{\mcs}{\mathcal S}
\newcommand{\mca}{\mathcal A}
\newcommand{\mE}{\mathbb{E}}
\newcommand{\mR}{\mathbb{R}}
\newcommand{\ltwo}[1]{\left\|#1\right\|_2}
\newcommand{\kl}[2]{\mathrm{KL}\left(#1 \middle\| #2\right)}
\newcommand{\lt}[1]{\left\|#1\right\|_2}
\newcommand{\prob}{\mathbb{P}}
\newcommand{\gd}{\nabla}
\newcommand{\sgd}{\widehat{\nabla}}
\newcommand{\gdt}{\nabla_\theta}
\newcommand{\gda}{\nabla_\alpha}
\newcommand{\sgdt}{\sgd_\theta}
\newcommand{\sgda}{\sgd_\alpha}
\newcommand{\defeq}{\mathrel{\mathop:}=}
\newcommand{\Ac}{\mathcal{A}}
\newcommand{\Sc}{\mathcal{S}}
\DeclareMathOperator*{\argmin}{argmin}
\DeclareMathOperator*{\argmax}{argmax} 
\newcommand{\inner}[2]{\left\langle #1, #2 \right\rangle}
\newcommand{\norm}[1]{\left\lVert #1 \right\rVert}
\newcommand{\eptt}{\mathbb{E}}
\newcommand{\tp}{{\theta}_{op}}
\newcommand{\ap}{{\alpha}_{op}}
\newcommand{\dtv}{\mathrm{d}_{TV}}
\newcommand{\bigO}[1]{\mathcal{O}\left(#1\right)}
\newcommand{\indicator}[1]{\mathbbm{1}\left\{#1\right\}}
\newcommand{\ceiling}[1]{\left\lceil #1 \right\rceil}
\newcommand{\parat}[1]{\left(#1\right)}
\newcommand{\abs}[1]{\left|#1\right|}
\newtheorem{theorem}{Theorem}
\crefname{theorem}{theorem}{theorems}
\newtheorem{lemma}{Lemma}
\crefname{lemma}{lemma}{lemmas}
\newtheorem{assumption}{Assumption}
\crefname{assumption}{assumption}{assumptions}
\crefname{remark}{remark}{remarks}
\newtheorem{definition}{Definition}
\crefname{definition}{definition}{definitions}
\newtheorem{proposition}{Proposition}
\crefname{definition}{definition}{definitions}
\crefname{corollary}{corollary}{corollaries}
\renewcommand{\arraystretch}{1.4}
\begin{document}

\title{When Will Generative Adversarial Imitation Learning Algorithms Attain Global Convergence}

\author{Ziwei Guan, Tengyu Xu, Yingbin Liang}
\affil{Department of Electrical and Computer Engineering, The Ohio State University}
\affil{\{guan.283, xu.3260, liang.889\}@osu.edu}

\date{}
\maketitle

\begin{abstract}
Generative adversarial imitation learning (GAIL) is a popular inverse reinforcement learning approach for jointly optimizing policy and reward from expert trajectories. A primary question about GAIL is whether applying a certain policy gradient algorithm to GAIL attains a global minimizer (i.e., yields the expert policy), for which existing understanding is very limited. Such global convergence has been shown only for the linear (or linear-type) MDP and linear (or linearizable) reward. In this paper, we study GAIL under general MDP and for nonlinear reward function classes (as long as the objective function is strongly concave with respect to the reward parameter). We characterize the global convergence with a sublinear rate for a broad range of commonly used policy gradient algorithms, all of which are implemented in an alternating manner with stochastic gradient ascent for reward update, including projected policy gradient (PPG)-GAIL, Frank-Wolfe policy gradient (FWPG)-GAIL, trust region policy optimization (TRPO)-GAIL and natural policy gradient (NPG)-GAIL. This is the first systematic theoretical study of GAIL for global convergence.
\end{abstract}
\section{Introduction}
\begin{sloppypar}
In reinforcement learning (RL), the reward function generally plays an important role to guide the design of policy optimization to attain the best long-term accumulative reward. However, a reward function may not be known in many situations, and imitation learning \cite{osa2018algorithmic} aims to find a desirable policy in such cases, which produces behaviors as close as possible to expert demonstrations.
Two popular classes of approaches for imitation learning have been developed. The first approach is behavioral cloning (BC) \cite{pomerleau1991efficient}, which directly provides a mapping strategy from the state space to the action space based on supervised learning to match expert demonstrations. The BC method often suffers from high sample complexity due to covariate shift \cite{ross2010efficient,ross2011reduction} for achieving the desired performance, which is mitigated by improved algorithms such as DAgger \cite{ross2011reduction} and Dart \cite{laskey2017iterative} that require further interaction with the expert's demonstration. The second approach is the so-called inverse reinforcement learning (IRL) \cite{russell1998learning,ng2000algorithms}, which attempts to recover the unknown reward function based on the expert's trajectories, and then find an optimal policy by using such a reward function. 

A popular IRL method has been developed in \cite{finn2016connection,ho2016generative,fu2018learning}, which leverages the connection of IRL to the training of generative adversarial networks (GANs) \cite{goodfellow2014generative}. In particular, the generative adversarial imitation learning (GAIL) framework \cite{ho2016generative} formulates a min-max optimization problem as in the GAN training. The maximization is over the reward function (which serves as a discriminator) to best distinguish between the trajectories generated by the expert and the learner, and the minimization is then over the learner's policy (which serves as a generator) to best match the expert's trajectories. Since the policy optimization in GAIL is nonconvex, its joint optimization with reward function in GAIL in general can be guaranteed to converge only to a stationary point. Such a type of result was recently established in \cite{Chen2020On}, which studied GAIL under general MDP model and reward function class, and showed that
the gradient-decent and gradient-ascent algorithm converges to a stationary point (not necessarily the global minimum).  

More recently, it has been shown that some popular policy gradient algorithms \cite{agarwal2019optimality,xu2020improving,shani2020adaptive,liu2019neural,wang2019neural} can converge to a globally optimal policy under certain policy parameterizations. Then a natural question to ask is whether such global convergence continues to hold in GAIL when these algorithms are further implemented in an alternating fashion with the reward optimization in GAIL. The global convergence does not necessarily hold in general, because the policy optimization is still over a nonconvex objective function, which can induce complicated and undesirable geometries jointly with the reward optimization as a min-max problem in GAIL. Thus, existing exploration on this topic in \cite{cai2019global,zhang2020generative}, which established global convergence for GAIL, requires restrictive conditions: (1) linear (but possibly infinite dimensional) MDP  and (2) linear reward function or linearizable reward function such as overparameterized ReLU neural networks.

This paper aims to substantially expand the aforementioned global convergence results as follows.
\begin{list}{$\bullet$}{\topsep=0.ex \leftmargin=0.15in \rightmargin=0.in \itemsep =0.02in}
\item We allow general MDP models, not necessarily linear MDP. We study nonlinear reward functions as long as the resulting objective function is strongly concave with respect to the reward parameter. This is a much bigger class than linear reward, and is satisfied easily by incorporating a strongly concave regularizer which has been commonly used in GAIL practice.

\item In addition to the projected gradient and NPG that have been studied in \cite{cai2019global,zhang2020generative} for GAIL, we also study Frank-Wolfe policy gradient, which is easier to implement than projected policy gradient, and TRPO which is widely adopted in GAIL in practice. 

\item Existing convergence characterization for GAIL assumed that the samples are either identical and independently distributed (i.i.d.) as in \cite{Chen2020On,zhang2020generative} or follows the LQR dynamics as in \cite{cai2019global}, whereas here we assume that samples follow a general Markovian distribution. 

\end{list}

\subsection{Main Contributions}

\renewcommand\arraystretch{1.5}
\begin{table*}[!t]
	\centering
	\caption{Comparison among GAIL algorithms studied in this paper}\label{tab:results}
	\vspace{0.1cm}
\begin{threeparttable} 
	\begin{tabular}{|c|c|c|}
		\hline
		Algorithms & Convergence rate & Total Complexity\tnote{1,2}$\quad$\\
		\hhline{|---|} 
		PPG-GAIL  & $\bigO{\frac{1}{(1-\gamma)^3\sqrt{T}}}$ & $\tilde{\mathcal{O}}\parat{\frac{1}{\epsilon^4}}$ \\ 
		\cline{1-3}
		FWPG-GAIL & $\bigO{\frac{1}{(1-\gamma)^3\sqrt{T}}}$ &  $\tilde{\mathcal{O}}(\frac{1}{\epsilon^4})$ \\ 
		\cline{1-3}
		TRPO-GAIL (unregularized)&$\bigO{\frac{1}{(1-\gamma)^2\sqrt{T}}}$& $\tilde{\mathcal{O}}(\frac{1}{\epsilon^3})$\\
		\cline{1-3}
		TRPO-GAIL (regularized) &$\tilde{\mathcal{O}}\parat{\frac{1}{(1-\gamma)^3T}}$ & $\tilde{\mathcal{O}}(\frac{1}{\epsilon^2})$ \\
		\cline{1-3} 
		NPG-GAIL & $\bigO{\frac{1}{(1-\gamma)^2\sqrt{T}}}$&    $\tilde{\mathcal{O}}(\frac{1}{\epsilon^4})$\\
		\cline{1-3}
	\end{tabular} 
\begin{tablenotes}
	\item[1] Total complexity refers to the total number of samples needed to achieve an $\epsilon$-accurate globally optimal point.
	\item[2] $\tilde{\mathcal{O}}(\cdot)$ does not include the logarithmic terms. 
\end{tablenotes}
\end{threeparttable} 
\end{table*}
In this paper, we establish the first {\em global} convergence guarantee for GAIL under the general MDP model and the nonlinear reward function class (as long as the objective function is strongly concave with respect to the reward parameter). We provide the convergence rate for three major types of algorithms, all of which alternate between gradient ascent (for reward update) and policy gradient descent (for policy update), respectively being (a) projected policy gradient (PPG)-GAIL and Frank-Wolfe policy gradient (FWPG)-GAIL (with direct policy parameterization); (b) trust region policy optimization (TRPO)-GAIL (with direct policy parameterization); and (c) natural policy gradient (NPG)-GAIL (with general non-linear policy parameterization). We show that all these alternating algorithms converge to the {\em global} minimum with a sublinear rate. We summarize our results on the convergence performance of the GAIL algorithms in \Cref{tab:results}. Comparing among these algorithms indicates that TRPO-GAIL with regularized MDP achieves the best convergence rate, and TRPO-GAIL with regularized and unregularized MDP outperform the other algorithms in terms of the overall sample complexity.

Technically, the global convergence guarantee for GAIL does not follow from the existing min-max optimization theory. In fact, the GAIL problem here falls into nonconvex-strongly-concave min-max optimization framework, for which existing optimization theory does not provide the global convergence in general. Thus, our establishment of global convergence for GAIL develops several new properties specially for GAIL. Furthermore, in contrast to conventional min-max optimization, which is under i.i.d.\ sampling by certain static distribution, GAIL is under Markovian sampling by time-varying distributions due to the policy update. Thus, the convergence analysis for GAIL is more challenging than that for min-max optimization.

\subsection{Related Work}\label{sec:relatedwork}
Due to the significant growth of studies in imitation learning, this section focuses only on those studies that are highly relevant to the theoretical analysis of the convergence for GAIL algorithms.

{\bf Theory for IRL via adversarial training:} The idea of generative adversarial training \cite{goodfellow2014generative} has motivated a popular approach for IRL problems \cite{finn2016connection,ho2016generative,fu2018learning}. Among these studies, GAIL \cite{ho2016generative} formulated a min-max problem for jointly optimizing the reward and policy, where reward and policy serve analogous roles as the discriminator and the generator in GANs. Naturally, such an approach has been explored via the divergence  minimization perspective in \cite{ke2019imitation,ghasemipour2019divergence}, by leveraging GAN training \cite{nowozin2016fgan}. Moreover, the generalization performance and sample complexity have been studied for the setting where the expert's demonstrations include only the states but no actions. 

Most relevant to our study is the recent studies \cite{cai2019global,Chen2020On,zhang2020generative} on the convergence rate for the algorithms developed for GAIL. Among these studies, \cite{Chen2020On} studied GAIL under the general MDP model and the reward function class, and showed that the gradient-decent and gradient-ascent algorithm converges to a stationary point (not necessarily the global minimum). \cite{cai2019global,zhang2020generative} provided the global convergence result. More specifically, \cite{cai2019global} studied GAIL under linear quadratic regulator (LQR) dynamics and the linear reward function class, and showed that the alternating gradient algorithm converges to the unique saddle point. \cite{zhang2020generative} studied GAIL under a type of linear but infinite dimensional MDP and with overparameterized neural networks for parameterizing the policy and reward function, and showed that the alternating algorithm between gradient-ascent (for reward update) and NPG (for policy update) converges to the neighborhood of a global optimal point, where the representation power of neural networks determines the convergence error. Our study here establishes global convergence for GAIL for general MDP and the nonlinear reward function class.

{\bf Difference from conventional min-max problems:} Although the GAIL framework is formulated as a min-max optimization problem, the stochastic algorithms that we use for solving such a problem have the following major differences from the conventional min-max optimization problem. First, since these algorithms continuously update the policy, the samples that are used for iterations are sampled by time-varying policies; whereas the conventional min-max problem typically has a fixed sampling distribution. Second, since the samples are obtained following an MDP process, the samples are distributed with correlation rather than in the i.i.d.\ manner as in the conventional optimization. These two differences cause the convergence analysis to be more complicated for GAIL than the conventional min-max problem. Furthermore, the min-max problem that we encounter here for GAIL is nonconvex-strongly-concave, for which the conventional min-max optimization \cite{nouiehed2019solving,lin2020near} has been shown to converge only to a stationary point, whereas this paper exploits further properties in GAIL and establishes the global convergence guarantee.

{\bf Connection to policy gradient algorithms:} In the GAIL framework, the policy optimization is jointly performed with the reward optimization via a min-max optimization. Thus, the variation of the reward function during the algorithm execution continuously change the objective function for the policy optimization. Hence, even if the policy gradient algorithms (running for a fixed objective function) converge globally, for example, PPG \cite{agarwal2019optimality}, NPG \cite{agarwal2019optimality}, and TRPO \cite{shani2020adaptive}, the global convergence is generally not guaranteed if these algorithms are executed in an alternating fashion with reward iterations. Two special cases have been shown to retain such global convergence, namely, LQR model shown in \cite{cai2019global} and overparameterized neural networks for a linear type MDP\cite{zhang2020generative}. This paper significantly expands such a set of cases by establishing the global convergence guarantee for more general MDP and reward class and a broader range of algorithms.
\end{sloppypar}
\section{Problem Formulation and Preliminaries}\label{sec:preliminaries}
\subsection{Markov Decision Process}
The imitation learning framework that we study is based on the Markov decision process (MDP) denoted by $(\mcs, \mca, \mathsf{P},r,\gamma)$. We assume that both the state space $\mcs\subset \mR^d$ and the action space $\mca$ are finite, and use $s\in \mcs$ and $a\in \mca$ to denote a state and an action, respectively. A policy $\pi$ describes the probability to take an action $a\in \mca$ at each state $s \in \mcs$ in terms of the conditional probability $\pi(a|s)$.  Then the system moves to a next state $s'\in \mcs$ governed by the probability transition kernel $\mathsf{P}(s'|s,a)$, and receives a reward $r_t=r(s, a)$, which is assumed to be bounded by  $R_{\max}$.


Suppose the initial state takes a distribution $\zeta$. For a given policy $\pi$ and a reward function $r$, we define the average value function as: 
\begin{align*}
\resizebox{0.98\hsize}{!}{$V(\pi,r)=\mE\big[\sum_{t=0}^{\infty}\gamma^t r(s_t,a_t)\big|s_0\sim \zeta, a_t\sim \pi(a_t|s_t),s_{t+1}\sim \mathsf{P}(s_{t+1}|s_t,a_t)\big]=\frac{1}{1-\gamma}\eptt_{(s,a)\sim \nu_{\pi}(s,a)}[r(s,a)]$},
\end{align*}
 where $\gamma\in(0,1)$ is a discount factor and $\nu_\pi(s, a) \defeq (1-\gamma)\sum_{t=0}^\infty\gamma^t \prob(s_t =s, a_t = a)$ is the  state-action visitation distribution. 
It has been shown in \cite{konda2002actor} that $\nu_\pi(s,a)$ is the stationary distribution of the Markov chain with the transition kernel $\tilde{\mathsf{P}}(\cdot|s,a) = (1-\gamma)\zeta(\cdot) + \gamma \mathsf{P}(\cdot|s,a)$ and policy $\pi$ if the Markov chain is ergodic. Thus $\tilde{\mathsf{P}}$ is used in sampling for estimating the value function.


\subsection{Generative Adversarial Imitation Learning (GAIL)}\label{sec:gail}
For imitation learning, in which the reward function is not known, GAIL \cite{ho2016generative} is a framework to jointly learn the reward function and optimize the policy. We parameterize the reward function by $\alpha \in \Lambda\subset\mathbb{R}^q$, which takes the form $r_\alpha(s,a)$ at the state-action pair $(s,a)$. We assume that $\Lambda$ is a bounded closed set, i.e., $\norm{\alpha_1-\alpha_2}_2 \le C_\alpha$, $\forall \alpha_1, \alpha_2\in\Lambda$. 


We let $\pi_E$ represent the expert policy, and let the learner's policy be parameterized by $\theta \in \Theta$ and be denoted as $\pi_\theta$. In this paper, we consider two types of parameterization for the learner's policy. The first is the direct parameterization, where $\theta =\{\theta_{s,a}, s\in \mcs, a\in \mca\}$, and $\pi_{\theta}(a|s)=\theta_{s,a}$ where $\theta \in \Theta_p:=\{\theta: \theta_{s,a} \ge 0, \sum_{a\in \mca} \theta_{s,a}=1, \; \text{ for all }\; s\in \mcs, a\in \mca \}$. The second is the general nonlinear policy class, which satisfies certain smoothness conditions as given in \Cref{assp:generalpolicyparameterization}.

The GAIL framework is formulated as the following min-max optimization problem.
\begin{align}\label{eq:minmax}
\min_{\theta \in \Theta}\max_{\alpha \in \Lambda} F(\theta,\alpha):= V(\pi_E,r_\alpha)-V(\pi_\theta,r_\alpha)-\psi(\alpha),
\end{align}
where the objective function is given by the discrepancy of the accumulated rewards between the expert's and learner's policies, regularized by a function $\psi(\alpha)$ of the reward parameter. Thus, the maximization in \cref{eq:minmax} aims to find the reward function that best distinguishes between the expert's and the learner's policies and the minimization aims to find the learner's policy that matches the expert's policy as close as possible. Such a formulation is analogous to the GANs, with the reward serving as a discriminator and the policy serving as a generator. 

\begin{algorithm}
	\caption{Nested-loop GAIL framework}\label{alg:nestedloopIL}
	\begin{algorithmic}[1]
		\STATE \textbf{Input:} Outer loop length $T$, inner loop length $K$, stepsize $\eta$, $\beta$
		\FOR {$t= 0, 1, ..., T-1$}
		\STATE Randomly pick $\alpha_0^t \in \Lambda$
		\FOR {$k= 0, 1, ..., K-1$}
		\STATE Query a length-$B$ trajectory $(s^E_i,a^E_i) \sim \mathsf{\tilde{P}}^{\pi_E}$ and a length-$B$ mini-batch $(s_i^\theta, a_i^\theta)\sim \mathsf{\tilde{P}}^{\pi_\theta}$\footnotemark
		\STATE $\sgda F(\theta, \alpha) = \frac{1}{(1-\gamma)B} \sum_{i=0}^{B-1}\left[ \gda r_{\alpha} (s_i^E, a_i^E) - \gda r_{\alpha} (s_i^{\theta}, a_i^{\theta})\right] - \gda \psi(\alpha)$
		\STATE $\alpha_{k+1}^t = P_{\Lambda}\parat{\alpha_{k}^t + \beta\sgda F(\theta_t, \alpha_{k}^t)}$
		\ENDFOR
		\STATE $\alpha_t = \alpha_{K}^t$
		\STATE $\theta_{t+1} =$ {\bf Options}: {\bf PPG} in \cref{eq:ppgupdate}; {\bf FWPG} in \cref{eq:fwupdate}; {\bf TRPO} in \cref{eq:trpo}; {\bf NPG}
in \cref{eq:npg}		
		\ENDFOR
	\end{algorithmic}
\end{algorithm}
\footnotetext{The samples are obtained over a single trajectory path for the entire algorithm execution.}

In this paper, we study four GAIL algorithms, all of which follow the nested-loop framework described in \Cref{alg:nestedloopIL}. Namely, at each time step $t$ (associated with one outer loop), there is an entire inner loop updates of the reward parameter $\alpha_t$ to a certain accuracy and one update step of the policy parameter $\theta_t$. Specifically, $\alpha_t$ is updated by the stochastic projected gradient ascent given by
\begin{align*}
\alpha^{k+1}_{t}=P_{\Lambda}\parat{\alpha^k_{t}+\beta \widehat{\nabla}_\alpha F(\theta_t,\alpha^k_{t})},
\end{align*} 
where the gradient estimator $\widehat{\nabla}_\alpha F(\theta_t,\alpha^k_{t})$ is obtained via a Markovian sample trajectory. Then the policy parameter $\theta_t$ is updated for one step, determined by any of the four policy gradient algorithms, namely, PPG in \cref{eq:ppgupdate}, FWPG in \cref{eq:fwupdate}, TRPO in \cref{eq:trpo} and NPG in \cref{eq:npg}.  

\subsection{Technical Preliminaries}

For the GAIL problem in \cref{eq:minmax} to be well posed, we assume that $\max_{\alpha\in\Lambda} F(\theta, \alpha)$ exists for any $\theta\in \Theta$, and define the marginal-maximum function 
of $F(\theta, \alpha)$
\begin{align}\label{eq:gh}
g(\theta): = \max_{\alpha\in\Lambda} F(\theta, \alpha).
\end{align}
 We further define the corresponding optimizer $\ap(\theta)  := \argmax_{\alpha \in \Lambda} F(\theta, \alpha)$. If there exists more than one optimizer, $\ap(\theta)$ denotes the elements of the corresponding optimizer set. 
\begin{definition}
Let $\theta^* = \argmin_{\theta \in\Theta} g(\theta)$. The output $\bar{\theta}$ of an algorithm is said to attain an $\epsilon$-global convergence if $g(\bar{\theta}) - g(\theta^*) \leq \epsilon$ holds for a prescribed accuracy $\epsilon \in (0,1)$.
\end{definition}
As remarked in \cite{zhang2020generative}, $\epsilon$-global convergence further implies 
\begin{align*}
\max_{\alpha \in \Lambda}[V(\pi_E,r_\alpha)-V(\pi_{\bar{\theta}},r_\alpha)] \leq \max_{\alpha \in \Lambda} \psi(\alpha)+\epsilon.
\end{align*}
Hence, as long as $\psi(\alpha)$ is chosen properly (for example, with a small regularization coefficient), $\pi_{\bar{\theta}}$ is guaranteed to be sufficiently close to the expert policy.


In this paper, we make the following standard assumptions for our analysis.
\begin{assumption}\label{assp:regularizer}
The regularizer function $\psi(\alpha)$ is differentiable with gradient Lipschitz constant $L_\psi$.
\end{assumption}
\Cref{assp:regularizer} captures the property for designing a regularizer and can be easily attained.

\begin{assumption}\label{assp:objectivefunction}
For any given $\theta$, the objective function $F(\theta, \alpha)$ in \cref{eq:minmax} is $\mu$-strongly concave on $\alpha$.
\end{assumption}
\Cref{assp:objectivefunction} includes the linear function class as a special case. In practice, a strongly convex regularizer $\psi(\alpha)$ is often used to guarantee the strong concavity of $F(\theta, \alpha)$.
\begin{assumption}[Ergodicity]\label{aspt:ergodic}
	For any policy parameter $\theta\in \Theta$, consider the MDP with policy $\pi_\theta$ and transition kernel $\mathsf{P}(\cdot|s,a)$ or $\tilde{\mathsf{P}}(\cdot|s,a) = \gamma\mathsf{P}(\cdot|s,a) + (1-\gamma)\zeta(\cdot)$. There exist constants $C_M>0$ and $0<\rho<1$ such that $\forall t\ge 0$, 	
	\begin{equation*}
	\sup_{s\in \Sc}\dtv\left(\mathbb{P}(s_t\in\cdot|s_0 =s), \chi_{\theta}\right) \le C_M \rho^{t},
	\end{equation*}
	where $\chi_{\theta}$ is the stationary distribution of the given transition kernel $\mathsf{P}(\cdot|s,a)$ or $\tilde{\mathsf{P}}(\cdot|s,a)$ under policy $\pi_\theta$ and $\dtv\left(\cdot, \cdot\right)$ is the total variation distance. 
\end{assumption}
\Cref{aspt:ergodic} holds for any time-homogeneous Markov chain with finite state space or any uniformly ergodic Markov chain with general state space.
\begin{assumption}\label{assp:rewardassumption}	
	The reward parameterization satisfies the following requirements:
	\begin{list}{$\bullet$}{\topsep=-0.1in \leftmargin=0.3in \rightmargin=0.in \itemsep =-0.1in}
		\item[(1)] Bounded gradient: $\exists C_r\in\mathbb{R}$ such that $\forall \alpha\in\Lambda$, $\norm{\gda r_\alpha}_{\infty,2} := \sqrt{\sum_{i=1}^q \norm{\frac{\partial  r_\alpha}{\partial \alpha_i}}_\infty^2}\le C_r$.
		\item[(2)] Gradient Lipschitz: $\exists L_r\in\mathbb{R}$, such that $\forall s\in\Sc, a\in \Ac$ and $\forall \alpha_1, \alpha_2 \in \Lambda$,
		$$\lt{\gda r_{\alpha_1} (s, a) - \gda r_{\alpha_2} (s, a)} \le L_r \lt{\alpha_1 - \alpha_2}.$$
	\end{list}
\end{assumption}

We next provide the following Lipschitz properties, which are vital for the analysis of convergence, and were often taken as assumptions in the literature of min-max optimization \cite{jin2019local,nouiehed2019solving}. 
\begin{proposition}\label{lemma:lipschitzcondtion}
	Suppose Assumptions \ref{assp:regularizer}, \ref{aspt:ergodic} and \ref{assp:rewardassumption} hold. Then the GAIL min-max problem in \cref{eq:minmax} with direct parameterization satisfies the following Lipschitz conditions:  $\forall \theta_1, \theta_2\in \Theta$ and $\forall \alpha_1, \alpha_2 \in \Lambda$,
	\begin{align*}
	\norm{\gdt F(\theta_1, \alpha_1) - \gdt F(\theta_2, \alpha_2)} &\le L_{11} \norm{\theta_1 - \theta_2} + L_{12} \norm{\alpha_1 -\alpha_2},\\
	\norm{\gda F(\theta_1, \alpha_1) - \gda F(\theta_2, \alpha_2)} &\le L_{21} \norm{\theta_1 - \theta_2} + L_{22} \norm{\alpha_1 -\alpha_2},
	\end{align*}
	where $L_{11} =\frac{2\sqrt{2}|\Ac|C_rC_\alpha}{(1-\gamma)^2}(1 + \ceiling{\log_{\rho} C_M^{-1}} + (1-\rho)^{-1})$, $L_{12} = \frac{\sqrt{|\Ac|}C_r}{(1-\gamma)^2}$, $L_{21} = \frac{C_r\sqrt{|\Ac|}}{1-\gamma}(1 + \lceil\log_{\rho} C_M^{-1}\rceil + (1-\rho)^{-1})$, and $L_{22} = \frac{2\sqrt{q}L_r}{1-\gamma} + L_\psi$. Furthermore, if $\theta_1 = \theta_2$, the above second bound holds with a general parameterization for the policy.
\end{proposition}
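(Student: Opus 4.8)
The plan is to establish both Lipschitz inequalities by writing out the gradients of $F(\theta,\alpha) = V(\pi_E, r_\alpha) - V(\pi_\theta, r_\alpha) - \psi(\alpha)$ explicitly under direct parameterization, and then controlling each term separately. Recall that $V(\pi_\theta, r_\alpha) = \frac{1}{1-\gamma}\mE_{(s,a)\sim\nu_{\pi_\theta}}[r_\alpha(s,a)]$. For the $\alpha$-gradient, the expert term $V(\pi_E, r_\alpha)$ depends only on $\alpha$, so $\gda F(\theta,\alpha) = \frac{1}{1-\gamma}\mE_{\nu_{\pi_E}}[\gda r_\alpha] - \frac{1}{1-\gamma}\mE_{\nu_{\pi_\theta}}[\gda r_\alpha] - \gda\psi(\alpha)$. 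First I would bound $\norm{\gda F(\theta_1,\alpha_1) - \gda F(\theta_2,\alpha_2)}$ by splitting into a term that varies $\alpha$ only (controlled by the reward gradient-Lipschitz constant $L_r$ from \Cref{assp:rewardassumption}(2) and $L_\psi$ from \Cref{assp:regularizer}, giving $L_{22} = \frac{2\sqrt q L_r}{1-\gamma} + L_\psi$ — the factor $2$ and $\sqrt q$ coming from the two visitation-distribution terms and the conversion between the $\norm{\cdot}_{\infty,2}$ seminorm and the Euclidean norm over $q$ coordinates) and a term that varies $\theta$ only, for which I need $\abs{\mE_{\nu_{\pi_{\theta_1}}}[\partial r_\alpha/\partial\alpha_i] - \mE_{\nu_{\pi_{\theta_2}}}[\partial r_\alpha/\partial\alpha_i]} \le \norm{\partial r_\alpha/\partial\alpha_i}_\infty \cdot \norm{\nu_{\pi_{\theta_1}} - \nu_{\pi_{\theta_2}}}_1$, so the real work is a bound $\norm{\nu_{\pi_{\theta_1}} - \nu_{\pi_{\theta_2}}}_1 \lesssim C_M \cdot$ (mixing factor) $\cdot \norm{\theta_1-\theta_2}$. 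This last inequality is where the ergodicity assumption (\Cref{aspt:ergodic}) enters and produces the characteristic $\big(1 + \ceiling{\log_\rho C_M^{-1}} + (1-\rho)^{-1}\big)$ factor: one couples the two Markov chains, uses the $t$-step TV contraction $C_M\rho^t$ to truncate the geometric series at horizon $\Oc(\log_\rho C_M^{-1})$, and pays $(1-\rho)^{-1}$ for the tail, while on the pre-mixing horizon the per-step perturbation of the transition dynamics caused by changing $\theta$ accumulates linearly. Combining gives $L_{21} = \frac{C_r\sqrt{|\Ac|}}{1-\gamma}\big(1 + \ceiling{\log_\rho C_M^{-1}} + (1-\rho)^{-1}\big)$.

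Next I would handle the $\theta$-gradient. Since $V(\pi_E, r_\alpha)$ does not depend on $\theta$, $\gdt F(\theta,\alpha) = -\gdt V(\pi_\theta, r_\alpha)$, and under direct parameterization the policy gradient has the standard closed form in terms of the $Q$-function: $\frac{\partial}{\partial \theta_{s,a}} V(\pi_\theta, r_\alpha) = \frac{1}{1-\gamma}\, d^{\pi_\theta}_\zeta(s)\, Q^{\pi_\theta}_{r_\alpha}(s,a)$ (up to the usual normalization), where $d^{\pi_\theta}_\zeta$ is the discounted state-visitation distribution. To bound $\norm{\gdt F(\theta_1,\alpha_1) - \gdt F(\theta_2,\alpha_2)}$ I would again split: the $\alpha$-variation term only changes $Q^{\pi_\theta}_{r_\alpha}$ through $r_\alpha$, and since $\abs{Q^{\pi}_{r_{\alpha_1}}(s,a) - Q^{\pi}_{r_{\alpha_2}}(s,a)} \le \frac{1}{1-\gamma}\norm{r_{\alpha_1} - r_{\alpha_2}}_\infty \le \frac{C_r}{1-\gamma}\norm{\alpha_1-\alpha_2}$ (the $\norm{\cdot}_{\infty,2}$ bound $C_r$ gives Lipschitzness of $r_\alpha$ in $\alpha$ via Cauchy–Schwarz), and $d^{\pi_\theta}_\zeta$ is a probability vector, this term contributes $L_{12} = \frac{\sqrt{|\Ac|}C_r}{(1-\gamma)^2}$ (the $\sqrt{|\Ac|}$ from summing the squared per-coordinate bounds over actions at a fixed state, or over the vectorized parameter). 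The $\theta$-variation term requires bounding how $d^{\pi_{\theta}}_\zeta(s) Q^{\pi_\theta}_{r_\alpha}(s,a)$ changes with $\theta$: $Q^{\pi_\theta}$ is bounded by $R_{\max}/(1-\gamma)$ and Lipschitz in $\theta$ (again via a coupling / perturbation-of-dynamics argument, bounded rewards, and the discount), and $d^{\pi_\theta}_\zeta$ is Lipschitz in $\theta$ in $\ell_1$ with a factor of the same mixing type; multiplying out the bounds and using $C_rC_\alpha$ to absorb the reward magnitude (since $\abs{r_\alpha}$ is controlled on $\Lambda$ via $\norm{\gda r_\alpha}$ and the diameter $C_\alpha$) yields $L_{11} = \frac{2\sqrt2 |\Ac| C_r C_\alpha}{(1-\gamma)^2}\big(1 + \ceiling{\log_\rho C_M^{-1}} + (1-\rho)^{-1}\big)$, with the extra $1/(1-\gamma)$ relative to $L_{21}$ coming from the $Q$-function (one more geometric sum) and the $|\Ac|$ (rather than $\sqrt{|\Ac|}$) from combining a per-action gradient-norm bound with an $\ell_1$ sum over actions.

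For the final claim — that the second inequality holds with $L_{21}\norm{\theta_1-\theta_2}$ when $\theta_1=\theta_2$, i.e.\ just $L_{22}\norm{\alpha_1-\alpha_2}$, under a general nonlinear policy parameterization — I observe that the only place the direct parameterization was used in the $\alpha$-gradient bound is in relating $\nu_{\pi_\theta}$ to $\theta$; when $\theta$ is held fixed, $\nu_{\pi_\theta}$ is a fixed distribution and the entire $\theta$-dependent machinery disappears, leaving only the $\alpha$-Lipschitzness of $\gda r_\alpha$ and $\gda\psi$, which is parameterization-agnostic. So that case follows immediately from the already-established $L_{22}$ term.

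\textbf{Main obstacle.} The crux is the sensitivity analysis of the (state-action) visitation distribution with respect to the policy parameter: proving $\norm{\nu_{\pi_{\theta_1}} - \nu_{\pi_{\theta_2}}}_1$ (and likewise the Lipschitzness of $Q^{\pi_\theta}$ and $d^{\pi_\theta}_\zeta$) is $\Oc\big((1 + \ceiling{\log_\rho C_M^{-1}} + (1-\rho)^{-1})\norm{\theta_1-\theta_2}\big)$ in a way that cleanly threads the ergodicity constants $C_M,\rho$. This needs a careful truncation argument — short horizon where per-step dynamics perturbations accumulate linearly, long horizon where the TV-mixing bound $C_M\rho^t$ kills the contribution — together with bookkeeping to match the stated constants. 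Everything else (triangle inequalities, Cauchy–Schwarz to pass between $\norm{\cdot}_{\infty,2}$ and $\ltwo\cdot$, bounding $Q$ and $r_\alpha$ in sup-norm, and the $|\Ac|$/$\sqrt{|\Ac|}$ factors from summing over actions) is routine once this perturbation lemma is in place.
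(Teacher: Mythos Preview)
Your proposal is correct and follows essentially the same route as the paper: the same triangle-inequality split in $(\theta,\alpha)$, the same closed-form policy gradient $\partial_{\theta_{s,a}} V = \frac{1}{1-\gamma}d_{\pi_\theta}(s)Q^{\pi_\theta}_\alpha(s,a)$, and the same identification of the visitation-distribution sensitivity $\norm{\nu_{\pi_{\theta_1}}-\nu_{\pi_{\theta_2}}}_{TV}\lesssim \sqrt{|\Ac|}\big(1+\ceiling{\log_\rho C_M^{-1}}+(1-\rho)^{-1}\big)\lt{\theta_1-\theta_2}$ (and the induced $Q$-Lipschitzness) as the crux. The only procedural difference is that the paper does not prove that perturbation lemma from scratch but imports it (together with the $Q$-Lipschitz bound) from \cite{xu2020improving}; your truncation-plus-mixing sketch is exactly how that cited lemma is established, so nothing is missing.
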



\section{Global Convergence of GAIL Algorithms}
In this section, we provide the global convergence guarantee for four GAIL algorithms.
\subsection{PPG-GAIL and FWPG-GAIL Algorithms}
In this section, we study the PPG-GAIL and FWPG-GAIL algorithms, both of which take the general framework in \Cref{alg:nestedloopIL}, and update the policy parameter $\theta$ respectively based on projected policy gradient (PPG) and Frank-Wolfe policy gradient (FWPG). 

We take the direct parameterization for the policy. At each time $t$ of the outer loop, both PPG-GAIL and FWPG-GAIL first estimate the stochastic policy gradient by drawing a minibatch sample trajectory with length $b$ as $(s_i,a_i)\sim \mathsf{\tilde{P}}^{\pi_{\theta_t}}$ as follows.
\begin{equation}
\sgdt F(\theta_t, \alpha_t) (s,a) = -\frac{\hat{Q}(s, a)}{b(1-\gamma)}\sum_{i=0}^{b-1}\indicator{s_i= s}, \label{eq:policygradient}
\end{equation}
for all $s\in\Sc, a\in \Ac$, where $\hat{Q}(s, a)$ applies EstQ in \cite{zhang2019global} (see \Cref{app:alg}) with the reward function $r_{\alpha_t}(s,a)$. Then, PPG-GAIL updates $\theta_t$ as
\begin{align}\label{eq:ppgupdate}
\theta_{t+1}=P_{\Theta_p}\parat{\theta_{t}-\eta \sgdt F(\theta_t,\alpha_t)},
\end{align} 
where $\Theta_p$ is the probability simplex defined in \Cref{sec:gail}.

Differently from PPG-GAIL, FWPG-GAIL updates $\theta_t$ based on the Frank-Wolfe gradient as given by
\begin{align}\label{eq:fwupdate}
&\hat{v}_t = \argmax_{\theta\in\Theta_p} \inner{\theta}{-\sgdt F(\theta_{t}, \alpha_{t})}, \qquad
\theta_{t+1} = \theta_t + \eta \left(\hat{v}_t - \theta_t\right).
\end{align} 



To analyze the convergence, we first define the gradient dominance property.
\begin{definition}\label{def:pl-likecondition}
A function $f(\theta)$ satisfies the gradient dominance property, if there exists a positive $C$, such that $f(\theta) - f(\theta^*) \le C \max_{\bar{\theta}\in \Theta} \inner{\theta -\bar{\theta}}{\gd_\theta f(\theta)}$ for any given $\theta\in \Theta$, where $\theta^*:=\argmin_{\theta\in \Theta}f(\theta)$.
\end{definition} 
The following proposition facilitates to prove global convergence for PPG-GAIL and FWPG-GAIL.
\begin{proposition}\label{prop:PL-like}
The function $g(\theta)$ given in \cref{eq:gh} satisfies the gradient dominance property.
\end{proposition}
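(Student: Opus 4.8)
The plan is to reduce the gradient dominance of the marginal-maximum function $g$ to the classical policy-gradient domination property for a \emph{fixed} reward, which is known to hold under direct parameterization \cite{agarwal2019optimality}. Two ingredients are needed: (i) Danskin's theorem, which identifies $\gd_\theta g(\theta)$ with the partial gradient $\gd_\theta F(\theta,\ap(\theta)) = -\gd_\theta V(\pi_\theta, r_{\ap(\theta)})$; and (ii) the gradient domination inequality for value maximization over the simplex $\Theta_p$, which upper bounds the value suboptimality of $\pi_\theta$ for a fixed reward $r$ by a constant multiple of $\max_{\bar\theta\in\Theta_p}\inner{\bar\theta-\theta}{\gd_\theta V(\pi_\theta,r)}$.

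First, fix $\theta\in\Theta_p$ and set $\bar\alpha := \ap(\theta)$, the unique maximizer of $F(\theta,\cdot)$ (uniqueness from the strong concavity in \Cref{assp:objectivefunction}). By definition $g(\theta)=F(\theta,\bar\alpha)$, whereas $g(\theta^*)=\max_{\alpha\in\Lambda}F(\theta^*,\alpha)\ge F(\theta^*,\bar\alpha)$. Subtracting and cancelling the $\theta$-independent terms $V(\pi_E,r_{\bar\alpha})-\psi(\bar\alpha)$ gives
\[
g(\theta)-g(\theta^*)\;\le\;V(\pi_{\theta^*},r_{\bar\alpha})-V(\pi_\theta,r_{\bar\alpha})\;\le\;\max_{\pi}V(\pi,r_{\bar\alpha})-V(\pi_\theta,r_{\bar\alpha}),
\]
where the last bound uses that every policy is realizable under direct parameterization, so $V(\pi_{\theta^*},r_{\bar\alpha})$ is dominated by the optimal value for the reward $r_{\bar\alpha}$.

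Next, apply the fixed-reward gradient domination property to the MDP with reward $r_{\bar\alpha}$: there is a constant $C'$ with $\max_{\pi}V(\pi,r_{\bar\alpha})-V(\pi_\theta,r_{\bar\alpha})\le C'\max_{\bar\theta\in\Theta_p}\inner{\bar\theta-\theta}{\gd_\theta V(\pi_\theta,r_{\bar\alpha})}$. By Danskin's theorem, $\gd_\theta g(\theta)=\gd_\theta F(\theta,\bar\alpha)=-\gd_\theta V(\pi_\theta,r_{\bar\alpha})$, hence $\inner{\bar\theta-\theta}{\gd_\theta V(\pi_\theta,r_{\bar\alpha})}=\inner{\theta-\bar\theta}{\gd_\theta g(\theta)}$. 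Chaining the two displays yields $g(\theta)-g(\theta^*)\le C'\max_{\bar\theta\in\Theta_p}\inner{\theta-\bar\theta}{\gd_\theta g(\theta)}$, which is the claimed property with $C=C'$.

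The main obstacle is showing that $C'$ can be taken uniform in $\theta$. Up to the factor $1/(1-\gamma)$, $C'$ is the distribution-mismatch coefficient between the state-visitation distribution of the optimal policy for $r_{\bar\alpha}$ and the sampling distribution, and it must be bounded independently of both $\theta$ and $\bar\alpha$. This is where one invokes $d_{\pi}(s)\ge(1-\gamma)\zeta(s)$ for every policy $\pi$ (with $d_\pi$ the discounted state-visitation marginal of $\nu_\pi$), so that, assuming $\zeta$ has full support --- a standard requirement for gradient domination under direct parameterization --- the coefficient is at most $1/\big((1-\gamma)\min_{s}\zeta(s)\big)$, which is free of $\theta$ and of the reward, the reward never entering this coefficient. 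Two secondary points to verify are the hypotheses of Danskin's theorem --- differentiability of $\theta\mapsto F(\theta,\alpha)$, which holds since $V(\pi_\theta,r_\alpha)$ is smooth in $\theta$ under direct parameterization, together with compactness of $\Lambda$ and uniqueness of the maximizer from \Cref{assp:objectivefunction} --- and the precise form of the simplex gradient domination inequality, which is imported from the policy-gradient literature \cite{agarwal2019optimality}.
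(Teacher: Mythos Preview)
Your proof is correct and follows essentially the same route as the paper's: reduce $g(\theta)-g(\theta^*)$ to a fixed-reward suboptimality at the reward $\ap(\theta)$, invoke the Agarwal et al.\ gradient-domination lemma for direct parameterization (the paper's \Cref{lemma:rlgradientdomination}, with the same constant $C_d=\frac{1}{(1-\gamma)\min_s\zeta(s)}$), and then identify $\gd_\theta F(\theta,\ap(\theta))$ with $\gd g(\theta)$ via Danskin. The only cosmetic difference is in the first reduction: you bound $g(\theta^*)\ge F(\theta^*,\ap(\theta))$ and then relax $V(\pi_{\theta^*},r_{\ap(\theta)})\le\max_\pi V(\pi,r_{\ap(\theta)})$, whereas the paper inserts the point $\tp(\ap(\theta))=\argmin_{\theta'}F(\theta',\ap(\theta))$ and checks $F(\tp(\ap(\theta)),\ap(\theta))\le F(\theta^*,\ap(\theta^*))$ via two inequalities---these lead to the same bound.
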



The following theorem characterizes the global convergence of PPG-GAIL.
\begin{theorem}\label{thm:ppg}
Suppose \Cref{assp:regularizer,assp:objectivefunction,aspt:ergodic,assp:rewardassumption} hold. Consider PPG-GAIL 
with the $\theta$-update stepsize $\eta =  \left(L_{11} +  \frac{L_{12}L_{21}}{\mu}\right)^{-1}$ and the $\alpha$-update stepsize $\beta = \frac{\mu}{4L_{22}^2}$, where $L_{11}$, $L_{12}$, $L_{21}$ and $L_{22}$ are given in \Cref{lemma:lipschitzcondtion}. Then we have 
	\begin{align}\label{eq:ppgconv}
	\frac{1}{T}&\sum_{t=0}^{T-1} \eptt\left[ g(\theta_t)\right] - g(\theta^*)  \le  \resizebox{0.65\hsize}{!}{$\bigO{\frac{1}{(1-\gamma)^{3} \sqrt{T}}} + \bigO{{e^{-(1-\gamma)^2K}}} + \bigO{\frac{1}{(1-\gamma)^3 \sqrt{B}}}+ \bigO{\frac{1}{(1-\gamma)^3\sqrt{b}}}$}. 
	\end{align}
\end{theorem}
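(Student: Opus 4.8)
The plan is to combine three ingredients: (i) the gradient dominance property of $g(\theta)$ from \Cref{prop:PL-like}, which translates a bound on $\max_{\bar\theta}\inner{\theta_t-\bar\theta}{\gdt g(\theta_t)}$ into a bound on the global optimality gap $g(\theta_t)-g(\theta^*)$; (ii) a one-step descent inequality for the projected-gradient step on the \emph{true} function $g$, which is $L_g$-smooth with $L_g = L_{11}+L_{12}L_{21}/\mu$ (the standard smoothness constant of the marginal-maximum of a smooth, strongly-concave-in-$\alpha$ function, via Danskin's theorem and \Cref{lemma:lipschitzcondtion}); and (iii) error-propagation bounds controlling the gap between the quantities the algorithm actually uses and the idealized ones — namely the inner-loop reward error $\norm{\alpha_t-\ap(\theta_t)}$ and the stochastic policy-gradient error $\sgdt F(\theta_t,\alpha_t)-\gdt F(\theta_t,\ap(\theta_t))$.

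First I would set up the descent argument. Write the PPG update \cref{eq:ppgupdate} as a projected step and, using $L_g$-smoothness of $g$ together with the projection's firm nonexpansiveness, derive an inequality of the form $g(\theta_{t+1}) \le g(\theta_t) - \frac{c}{\eta}\norm{\theta_{t+1}-\theta_t}^2 + (\text{bias terms})$, where the bias terms collect $\inner{\gdt g(\theta_t) - \sgdt F(\theta_t,\alpha_t)}{\cdot}$. I would split $\gdt g(\theta_t) - \sgdt F(\theta_t,\alpha_t) = \big(\gdt F(\theta_t,\ap(\theta_t)) - \gdt F(\theta_t,\alpha_t)\big) + \big(\gdt F(\theta_t,\alpha_t) - \sgdt F(\theta_t,\alpha_t)\big)$; the first piece is bounded by $L_{11}\norm{\alpha_t-\ap(\theta_t)}$ via \Cref{lemma:lipschitzcondtion}, and the second is the zero-mean-plus-Markovian-bias stochastic estimation error of the policy gradient in \cref{eq:policygradient}, whose mean-square is controlled by $\bigO{1/((1-\gamma)^2 b)}$ after handling the Markovian correlation with the mixing assumption \Cref{aspt:ergodic}.

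Next I would bound the inner-loop reward error. Since $F(\theta_t,\cdot)$ is $\mu$-strongly concave (\Cref{assp:objectivefunction}) and $L_{22}$-smooth in $\alpha$ (\Cref{lemma:lipschitzcondtion}), $K$ steps of stochastic projected gradient ascent with stepsize $\beta=\mu/(4L_{22}^2)$ give $\eptt\norm{\alpha_t-\ap(\theta_t)}^2 \le (1-\mu\beta)^K C_\alpha^2 + \bigO{\sigma_\alpha^2/\mu}$ where $\sigma_\alpha^2$ is the variance of $\sgda F$; again the Markovian sampling bias must be absorbed, and the batch length $B$ drives the variance term down to $\bigO{1/((1-\gamma)^2 B)}$. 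This produces the $e^{-(1-\gamma)^2 K}$ and $1/((1-\gamma)^3\sqrt B)$ contributions in \cref{eq:ppgconv}. Then I would telescope the descent inequality over $t=0,\dots,T-1$, divide by $T$, and use the relation between $\frac1T\sum_t \norm{\theta_{t+1}-\theta_t}^2$ (equivalently the squared gradient-mapping norm) and $\frac1T\sum_t \max_{\bar\theta}\inner{\theta_t-\bar\theta}{\gdt g(\theta_t)}$ to invoke \Cref{prop:PL-like}; a Jensen/Cauchy-Schwarz step converts the $1/T + (\text{error})$ bound on the squared gradient mapping into the $1/\sqrt T + \sqrt{\text{error}}$ form, explaining the square roots on $B$ and $b$ and the $1/\sqrt T$ rate.

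The main obstacle I anticipate is the interaction between the time-varying sampling distribution and the Markovian correlation: because the trajectory is a single path whose policy changes every outer iteration, the conditional bias $\eptt[\sgdt F(\theta_t,\alpha_t)\mid \mathcal{F}_t] - \gdt F(\theta_t,\alpha_t)$ (and likewise for the $\alpha$-gradient) does not vanish and must be controlled by a careful coupling/mixing argument — shifting back $\ceil{\log_\rho C_M^{-1}}$ steps so the chain is near its (current) stationary distribution $\nu_{\pi_{\theta_t}}$, while simultaneously bounding how much $\theta$ (hence the stationary distribution) drifts over that window using the per-step bound $\norm{\theta_{t+1}-\theta_t}\le\eta\norm{\sgdt F}$. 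This is exactly where the $(1 + \ceil{\log_\rho C_M^{-1}} + (1-\rho)^{-1})$ factors in $L_{11}, L_{21}$ come from, and managing these terms cleanly — rather than the otherwise routine smoothness/strong-concavity bookkeeping — is the crux of the proof. A secondary subtlety is verifying the gradient-mapping-to-gradient-dominance conversion on the simplex $\Theta_p$, since \Cref{def:pl-likecondition} is phrased via $\max_{\bar\theta\in\Theta}\inner{\theta-\bar\theta}{\gdt g(\theta)}$ rather than the usual squared-gradient form, so one must relate the Frank-Wolfe-type gap to the projected-gradient step length.
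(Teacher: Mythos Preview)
Your proposal is essentially the paper's proof: descent via $L_g$-smoothness of $g$ (which the paper proves as \Cref{lemma:gradientlipschitz}), the projection property to relate $\max_{\bar\theta}\inner{\theta_t-\bar\theta}{\gdt g(\theta_t)}$ to $\lt{\theta_{t+1}-\theta_t}$, gradient dominance (\Cref{prop:PL-like}), then telescoping with Jensen to get the $\sqrt{\cdot}$ dependence. The two error channels you identify --- inner-loop reward error (handled by a linear-convergence-plus-variance lemma, the paper's \Cref{thm:alphaupdate}) and stochastic policy-gradient error (the paper's \Cref{lemma:pgerrorbound}) --- are exactly what the paper uses. One slip: the first piece of your gradient split is bounded by $L_{12}\lt{\alpha_t-\ap(\theta_t)}$, not $L_{11}$.

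Your ``main obstacle'' is misidentified, however. Because the algorithm draws each length-$B$ (resp.\ length-$b$) mini-batch with a \emph{fixed} policy $\pi_{\theta_t}$, there is no policy drift within a batch, and the Markovian bias is handled directly by pairing cross terms and bounding them with $C_M\rho^{|i-j|}$ mixing; no shift-back/coupling argument over a drifting policy is needed. The factor $(1+\ceiling{\log_\rho C_M^{-1}}+(1-\rho)^{-1})$ in $L_{11},L_{21}$ does not come from gradient-estimation bias at all --- it enters through \Cref{lemma:lipschitzcondtion} via the Lipschitz property of the visitation distribution $\nu_{\pi_\theta}$ in $\theta$ (the paper's \Cref{lemma:lipchitznu}). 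The genuinely delicate step in the paper is the one you flag as ``secondary'': converting the projected-step length $\lt{\theta_{t+1}-\theta_t}$ into a lower bound on the Frank--Wolfe gap $\max_{\bar\theta}\inner{\theta_t-\bar\theta}{\gdt g(\theta_t)}$, which the paper does by revisiting the projection optimality condition a second time (its \cref{eq:midppg}).
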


\Cref{thm:ppg} implies that if we set $T = \bigO{\frac{1}{\epsilon^2}}$, $K = \bigO{\log(\frac{1}{\epsilon})}$, $B =\bigO{\frac{1}{\epsilon^2}}$ and $b =\bigO{\frac{1}{\epsilon^2}}$, then PPG-GAIL converges to an $\epsilon$-accurate {\em globally} optimal value with an overall sample complexity $T(KB + b) = \tilde{\mathcal{O}}\parat{\frac{1}{\epsilon^4}}$. Due to the Markovian sampling for updating both the reward and policy parameters $\alpha$ and $\theta$, our analysis bounds the two corresponding bias error terms by $\mathcal{O}( \frac{1}{\sqrt{B}})$ and $\mathcal{O}(\frac{1}{\sqrt{b}})$ as shown in \cref{eq:ppgconv}. Hence, the choices for the mini-batch sizes $B$ and $b$ trade off between the convergence error and the computational complexity. To achieve a given accuracy $\epsilon$, the tradeoff yields the overall complexity of $ \tilde{\mathcal{O}}\parat{\frac{1}{\epsilon^4}}$. We also note that the result here provides the first convergence rate for projected stochastic gradient with non-i.i.d. sampling.


We next provide the following theorem, which characterizes the global convergence of FWPG-GAIL.
\begin{theorem}\label{thm:fwalg}
	Suppose \Cref{assp:regularizer,assp:objectivefunction,aspt:ergodic,assp:rewardassumption} hold. Consider FWPG-GAIL with the $\theta$-update stepsize $\eta = \frac{1-\gamma}{\sqrt{T}}$ and $\alpha$-update stepsize $\beta = \frac{\mu}{4L_{22}^2}$, where $L_{22}$ is given in \Cref{lemma:lipschitzcondtion}.
	Then we have 
	\begin{align*}
	\frac{1}{T} \sum_{t=0}^{T-1} &\eptt\left[g(\theta_t)\right] - g(\theta^*) \le \bigO{\frac{1}{(1-\gamma)^{3} \sqrt{T}}} + \bigO{{e^{-(1-\gamma)^2K}}}+ \bigO{\frac{1}{(1-\gamma)^3 \sqrt{B}}} 
	+ \bigO{\frac{1}{(1-\gamma)^3 \sqrt{b}}}.
	\end{align*}
\end{theorem}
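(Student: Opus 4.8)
The plan is to run a standard Frank--Wolfe descent on the outer loop while reusing, essentially verbatim, the inner-loop error control and the structural properties of $g$ that are established for \Cref{thm:ppg}. Concretely, I would first record three ingredients. (i) By a Danskin-type envelope argument based on \Cref{assp:objectivefunction} and \Cref{lemma:lipschitzcondtion}, $g$ is differentiable with $\gdt g(\theta) = \gdt F(\theta,\ap(\theta))$ and is $L_g$-smooth with $L_g = L_{11} + L_{12}L_{21}/\mu = \bigO{(1-\gamma)^{-3}}$ (this is exactly the reciprocal of the PPG stepsize). (ii) By \Cref{prop:PL-like}, $g$ obeys the gradient dominance property with a constant $C = \bigO{(1-\gamma)^{-1}}$. (iii) $g(\theta_0)-g(\theta^*) = \bigO{(1-\gamma)^{-1}}$, since $|F|$ is bounded by $\bigO{R_{\max}(1-\gamma)^{-1}}$. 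Write $D_\Theta$ for the $\ell_2$-diameter of the product simplex $\Theta_p$ (a problem-size constant, $\bigO{\sqrt{|\Sc|}}$).

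Next, apply the $L_g$-smoothness descent inequality to the FW update $\theta_{t+1}=\theta_t+\eta(\hat v_t-\theta_t)$, using $\norm{\hat v_t-\theta_t}\le D_\Theta$:
\[
g(\theta_{t+1}) \le g(\theta_t) + \eta\inner{\gdt g(\theta_t)}{\hat v_t - \theta_t} + \tfrac12 L_g\eta^2 D_\Theta^2 .
\]
To convert the inner product into a decrease of the suboptimality gap, set $e_t \defeq \sgdt F(\theta_t,\alpha_t) - \gdt g(\theta_t)$ and let $\bar\theta_t \in \argmax_{\bar\theta\in\Theta_p}\inner{\theta_t-\bar\theta}{\gdt g(\theta_t)}$. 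Since $\hat v_t$ minimizes $\inner{\cdot}{\sgdt F(\theta_t,\alpha_t)}$ over $\Theta_p$, we have $\inner{\hat v_t-\theta_t}{\sgdt F(\theta_t,\alpha_t)} \le \inner{\bar\theta_t-\theta_t}{\sgdt F(\theta_t,\alpha_t)}$, whence
\[
\inner{\gdt g(\theta_t)}{\hat v_t - \theta_t} \;\le\; \inner{\gdt g(\theta_t)}{\bar\theta_t - \theta_t} + 2\norm{e_t}D_\Theta \;\le\; -\tfrac1C\big(g(\theta_t)-g(\theta^*)\big) + 2\norm{e_t}D_\Theta ,
\]
the last step being the gradient dominance property at $\bar\theta_t$. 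Substituting, rearranging, taking expectations, telescoping over $t=0,\dots,T-1$ and dividing by $\eta T/C$ gives
\[
\frac1T\sum_{t=0}^{T-1}\eptt[g(\theta_t)] - g(\theta^*) \;\le\; \frac{C\,(g(\theta_0)-g(\theta^*))}{\eta T} + \frac{2CD_\Theta}{T}\sum_{t=0}^{T-1}\eptt\norm{e_t} + \frac{CL_g\eta D_\Theta^2}{2} .
\]
With $\eta=(1-\gamma)/\sqrt T$, ingredients (i)--(iii) make the first and third terms $\bigO{(1-\gamma)^{-3}T^{-1/2}}$.

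It remains to bound $\eptt\norm{e_t}$, which I would split as
\[
\norm{e_t} \le L_{12}\norm{\alpha_t - \ap(\theta_t)} + \norm{\sgdt F(\theta_t,\alpha_t) - \gdt F(\theta_t,\alpha_t)} ,
\]
using the $L_{12}$-Lipschitz bound of \Cref{lemma:lipschitzcondtion} with $\theta_1=\theta_2=\theta_t$ together with $\gdt g(\theta_t)=\gdt F(\theta_t,\ap(\theta_t))$. The first term is handled by the inner-loop lemma from \Cref{thm:ppg}: projected stochastic gradient ascent for $K$ steps on the $\mu$-strongly-concave, $L_{22}$-smooth map $\alpha\mapsto F(\theta_t,\alpha)$, with stepsize $\beta=\mu/(4L_{22}^2)$, mini-batch $B$ of Markovian samples and an initialization in a set of diameter $C_\alpha$, contracts at rate $1-\mu\beta/2$ with $\mu\beta=\bigO{(1-\gamma)^2}$ and accumulates a bias/variance floor of order $1/B$ (the $(1-\gamma)^{-2}$ in the $\alpha$-gradient variance cancelling against $\beta$), so $\eptt\norm{\alpha_t-\ap(\theta_t)}^2 = \bigO{e^{-c(1-\gamma)^2K}} + \bigO{1/B}$ and hence $\eptt\norm{\alpha_t-\ap(\theta_t)} = \bigO{e^{-c(1-\gamma)^2K}} + \bigO{1/\sqrt B}$. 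The second term is the standard bias/variance bound for the Markovian policy-gradient estimator \cref{eq:policygradient} (which calls EstQ of \cite{zhang2019global} with reward $r_{\alpha_t}$), giving $\eptt\norm{\sgdt F(\theta_t,\alpha_t)-\gdt F(\theta_t,\alpha_t)} = \bigO{(1-\gamma)^{-2}b^{-1/2}}$. Multiplying through by $2CD_\Theta$ with $C=\bigO{(1-\gamma)^{-1}}$ produces exactly the remaining terms $\bigO{e^{-(1-\gamma)^2K}}$, $\bigO{(1-\gamma)^{-3}B^{-1/2}}$ and $\bigO{(1-\gamma)^{-3}b^{-1/2}}$, and collecting the four contributions completes the argument.

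The Frank--Wolfe bookkeeping above is routine; the genuine obstacle --- and the reason the result is not immediate from min-max theory --- is the inner-loop lemma bounding $\eptt\norm{\alpha_t-\ap(\theta_t)}^2$ under \emph{non-i.i.d.}, time-varying Markovian sampling, where one must cleanly separate the geometric contraction in $K$ from the mixing-induced bias and the finite-batch variance (both $\bigO{1/B}$), and then pair it with the smoothness and gradient-dominance properties of the nonconvex envelope $g$; these are precisely the pieces shared with \Cref{thm:ppg}, which is why the two theorems have identical right-hand sides.
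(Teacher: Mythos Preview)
Your proposal is correct and follows essentially the same route as the paper: apply the $L_g$-smooth descent lemma for $g$ (Lemma~\ref{lemma:gradientlipschitz}) to the Frank--Wolfe step, use the linear-minimization optimality of $\hat v_t$ to swap in the direction $v_t=\bar\theta_t$ that realizes the Frank--Wolfe gap, invoke the gradient-dominance \Cref{prop:PL-like}, telescope, and then control the two error terms via \Cref{thm:alphaupdate} and \Cref{lemma:pgerrorbound}. The only cosmetic difference is that you bundle the two error sources into a single $e_t$ and split at the end, whereas the paper keeps $\gdt g(\theta_t)-\gdt F(\theta_t,\alpha_t)$ and $\gdt F(\theta_t,\alpha_t)-\sgdt F(\theta_t,\alpha_t)$ separate throughout; the supporting lemmas and the final accounting are identical.
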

\Cref{thm:fwalg} implies that if we let $T = \bigO{\frac{1}{\epsilon^2}}$, $K = \bigO{\log(\frac{1}{\epsilon})}$, $B=\bigO{\frac{1}{\epsilon^2}}$ and $b=\bigO{\frac{1}{\epsilon^2}}$, then FWPG-GAIL converges to an $\epsilon$-accurate {\em globally} optimal value with overall sample complexity $T(KB + b) = \tilde{\mathcal{O}}\parat{\frac{1}{\epsilon^4}}$, which is the same as that of PPG-GAIL. The analysis of FWPG-GAIL also needs to bound the two bias terms due to the Markovian sampling for updating the reward and policy parameters. This is the first analysis that provides the convergence rate for stochastic Frank-Wolfe gradient with non-i.i.d. sampling.

\vspace{-2mm}
\subsection{TRPO-GAIL Algorithm}
\vspace{-2mm}
In this section, we study the TRPO-GAIL algorithm, which takes the general framework in \Cref{alg:nestedloopIL} and updates the policy parameter $\theta$ based on TRPO under $\lambda$-regularized MDP. At each time $t$ of the outer loop, TRPO-GAIL adopts the update rule in \cite{shani2020adaptive} for updating $\theta_t$ as follows:
\begin{align*}
	\pi_{\theta_{t+1}}(\cdot|s) \in \argmin_{\pi \in \Delta_{\Ac}} \inner{-\hat{Q}^{\pi_{\theta_t}}_{\lambda,\alpha_t}(s, \cdot) + \lambda \nabla \omega (\pi_{\theta_t}(\cdot | s))}{\pi - \pi_{\theta_t}(\cdot|s)} + \eta_t^{-1} B_{\omega} (\pi, \pi_{\theta_t}(\cdot |s)),
\end{align*}
where $\hat{Q}^{\pi_{\theta_t}}_{\lambda, \alpha_t}$ denotes the estimation of the Q-function based on EstQ \cite{zhang2019global} (see \Cref{app:alg}), the regularized reward $r_{\lambda,\alpha_t}(s, a) \defeq r_{\alpha_t}(s,a) + \lambda\omega(\pi_{\theta}(\cdot|s))$, the negative entropy function $\omega(\pi(\cdot|s))\defeq \sum_{a\in\Ac}\pi(\cdot|s)\log \pi(\cdot|s) +\log{|\Ac|}$, and the Bregman distance $B_{\omega}(x, y) \defeq \omega(x) - \omega(y) - \inner{\nabla \omega(y)}{x-y}$ associated with $\omega(x)$, which is the KL-divergence here. We consider the direct parameterization for the policy, and hence the update for the policy parameter $\theta$ can be analytically computed \cite{shani2020adaptive} as follows. For each $(s,a) \in \Sc\times\Ac$,
\begin{align}\label{eq:trpo}
\theta_{t+1}(s,a) = \frac{ \theta_t(s, a) \exp\left(\eta_t (\hat{Q}^{\pi_{\theta_t}}_{\lambda, \alpha_t}(s,a) - \lambda \log \theta_t(s,a) )\right)}{\sum_{a' \in \Ac} \theta_t(s, a') \exp\left(\eta_t (\hat{Q}^{\pi_{\theta_t}}_{\lambda, \alpha_t}(s,a') - \lambda \log \theta_t(s,a') )\right)}.
\end{align}

The following theorem provides the global convergence of TRPO-GAIL under the unregularized MDP, where $\lambda = 0$.
\begin{theorem}\label{thm:trpo1} 
		Suppose \Cref{assp:regularizer,assp:objectivefunction,aspt:ergodic,assp:rewardassumption} hold. Consider unregularized TRPO-GAIL ($\lambda = 0$) with 
		$\theta$-update stepsize $\eta_t = \frac{1-\gamma}{\sqrt{T}}$ and $\alpha$-update stepsize $\beta = \frac{\mu}{4L_{22}^2}$, where $L_{22}$ is given in \Cref{lemma:lipschitzcondtion}. Then we have, 
		\begin{align*}
		\frac{1}{T} \sum_{t=0}^{T-1} \eptt\left[g(\theta_t)\right] - g(\theta^*) &\le \bigO{\frac{1}{(1-\gamma)^2 \sqrt{T}}} + \bigO{e^{-(1-\gamma)^2K}}+ \bigO{\frac{1}{(1-\gamma)^4 B}}.
		\end{align*}
\end{theorem}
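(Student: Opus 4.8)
The plan is to turn the per-step TRPO guarantee of \cite{shani2020adaptive} --- which is naturally an \emph{online-learning regret bound against an arbitrary fixed comparator} --- into a \emph{global} bound on $g$, by pairing it with the strong concavity of $F(\theta,\cdot)$; unlike in the PPG/FWPG case, the gradient-dominance property (\Cref{prop:PL-like}) is not needed here. Write $\ap(\theta_t)=\argmax_{\alpha\in\Lambda}F(\theta_t,\alpha)$ and let $\pi_{\theta^*}$ be the comparator policy. The starting point is the decomposition
\[
g(\theta_t)-g(\theta^*)\;\le\;\big(g(\theta_t)-F(\theta_t,\alpha_t)\big)\;+\;\big(F(\theta_t,\alpha_t)-F(\theta^*,\alpha_t)\big),
\]
which uses only $g(\theta^*)=\max_{\alpha\in\Lambda}F(\theta^*,\alpha)\ge F(\theta^*,\alpha_t)$. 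The first bracket is the inner-loop optimization gap (it is $\ge 0$ since $\ap(\theta_t)$ is the true maximizer); the second bracket, since $F(\theta,\alpha)=V(\pi_E,r_\alpha)-V(\pi_\theta,r_\alpha)-\psi(\alpha)$, equals $V(\pi_{\theta^*},r_{\alpha_t})-V(\pi_{\theta_t},r_{\alpha_t})$, i.e.\ the suboptimality of the \emph{current} policy against the \emph{fixed} comparator $\pi_{\theta^*}$ under the \emph{time-varying} reward $r_{\alpha_t}$.

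For the second bracket I would run the TRPO/mirror-descent analysis. With $\lambda=0$, the update \cref{eq:trpo} is exactly the Hedge/exponentiated-gradient update $\pi_{\theta_{t+1}}(a|s)\propto\pi_{\theta_t}(a|s)\exp(\eta_t\hat{Q}^{\pi_{\theta_t}}_{\alpha_t}(s,a))$, so for each state $s$ the standard online-mirror-descent regret bound applies against \emph{any} fixed $u(\cdot|s)\in\Delta_{\Ac}$ and for an \emph{arbitrary} loss sequence: $\sum_t\langle \hat{Q}^{\pi_{\theta_t}}_{\alpha_t}(s,\cdot),u(\cdot|s)-\pi_{\theta_t}(\cdot|s)\rangle\le \eta^{-1}\mathrm{KL}(u(\cdot|s)\,\|\,\pi_{\theta_0}(\cdot|s))+\bigO{\eta\sum_t\|\hat{Q}^{\pi_{\theta_t}}_{\alpha_t}(s,\cdot)\|_\infty^2}$. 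Taking $u=\pi_{\theta^*}$, taking expectation (the EstQ estimator of \cite{zhang2019global} is conditionally unbiased with variance $\bigO{(1-\gamma)^{-2}}$, hence enters only through $\mathbb{E}\|\hat{Q}\|_\infty^2=\bigO{(1-\gamma)^{-2}}$), weighting the per-state inequalities by the \emph{fixed} normalized state-visitation distribution of $\pi_{\theta^*}$ and summing over $s$, and then applying the performance-difference lemma turns the weighted-and-summed left-hand side into $(1-\gamma)\sum_t\mathbb{E}[V(\pi_{\theta^*},r_{\alpha_t})-V(\pi_{\theta_t},r_{\alpha_t})]$. Dividing through by $(1-\gamma)T$, using $\mathrm{KL}(\pi_{\theta^*}\|\pi_{\theta_0})\le\log|\Ac|$, $\|\hat{Q}\|_\infty=\bigO{R_{\max}/(1-\gamma)}$ and $\eta_t=(1-\gamma)/\sqrt T$, gives $\frac1T\sum_t\mathbb{E}[F(\theta_t,\alpha_t)-F(\theta^*,\alpha_t)]\le\bigO{\frac{1}{(1-\gamma)^2\sqrt T}}$.

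For the first bracket I would use \Cref{assp:objectivefunction} ($F(\theta_t,\cdot)$ is $\mu$-strongly concave) together with the $L_{22}$-smoothness from \Cref{lemma:lipschitzcondtion}: via the gradient-mapping inequality for strongly concave objectives, the expected gap $\mathbb{E}[g(\theta_t)-F(\theta_t,\alpha^t_K)]$ is controlled by the expected squared iterate error $\mathbb{E}\|\alpha^t_K-\ap(\theta_t)\|^2$ of the inner loop. The latter is the standard analysis of projected stochastic gradient ascent run for $K$ steps with stepsize $\beta=\mu/(4L_{22}^2)$: the optimization error contracts geometrically, $(1-\mu\beta)^K\le e^{-\mu^2K/(4L_{22}^2)}$, which is $\bigO{e^{-(1-\gamma)^2K}}$ since $L_{22}=\bigO{1/(1-\gamma)}$; and the residual noise floor from the length-$B$ Markovian minibatches is $\bigO{1/((1-\gamma)^4B)}$ (the power of $1-\gamma$ coming from propagating $\mu$, $L_{22}$, $\beta$ and the $(1-\gamma)^{-2}$ scale of the $\alpha$-gradient variance), with the Markovian bias controlled exactly as in the proof of \Cref{lemma:lipschitzcondtion} via the mixing bound of \Cref{aspt:ergodic} --- hence the hidden $\lceil\log_\rho C_M^{-1}\rceil$-type factors. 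Summing the two brackets over $t=0,\dots,T-1$ and dividing by $T$ produces the three stated terms.

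The main obstacle is the second step: one must recognize that, precisely because the Hedge/mirror-descent regret bound holds against an arbitrary fixed comparator for arbitrarily (even adversarially) varying losses, the per-step TRPO guarantee designed for a \emph{static} reward continues to deliver \emph{global} convergence when $r_{\alpha_t}$ is updated concurrently --- provided one compares against $\pi_{\theta^*}$ and absorbs the resulting objective mismatch through the strong-concavity gap of the first step. The accompanying technical work --- that $\alpha_t$, $\pi_{\theta_t}$ and the EstQ rollouts are all correlated along a single sample path, and that the inner-loop bias must be bounded uniformly over the slowly drifting $\theta_t$ --- is routine given \Cref{aspt:ergodic} and \Cref{lemma:lipschitzcondtion}, but is where most of the bookkeeping lies.
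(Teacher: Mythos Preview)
Your proposal is correct and is essentially the paper's own proof. The paper uses the identical decomposition $g(\theta_t)-g(\theta^*)\le\big(g(\theta_t)-F(\theta_t,\alpha_t)\big)+\big(F(\theta_t,\alpha_t)-F(\theta^*,\alpha_t)\big)$, bounds the first bracket by $L_{22}^2\,\mathbb{E}\|\alpha_t-\ap(\theta_t)\|^2$ via \Cref{lemma:lipschitzcondtion} and then invokes the inner-loop projected-SGA bound (their \Cref{thm:alphaupdate}), and handles the second bracket exactly as you describe --- it derives the per-state OMD/Hedge inequality explicitly from the first-order optimality condition of the mirror step together with the three-point identity and Pinsker (their \cref{eq:middletrpo}--\cref{eq:fundamentalineqoftrpo}), weights by $d_{\pi_{\theta^*}}(s)$, applies the performance-difference relation (their \Cref{lemma:fundamentaltransition}), and telescopes with $\eta_t=(1-\gamma)/\sqrt{T}$. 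Your observation that \Cref{prop:PL-like} is not used here is also correct.
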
		
		
We further consider the regularized MDP, where $\lambda>0$.
\begin{theorem}\label{thm:trpo2}		
	Suppose \Cref{assp:regularizer,assp:objectivefunction,aspt:ergodic,assp:rewardassumption} hold. Consider regularized TRPO-GAIL ($\lambda >0$) with $\theta$-update stepsize $\eta_t = \frac{1}{\lambda(t+2)}$ and $\alpha$-update stepsize $\beta =\frac{\mu}{4L_{22}^2}$, where $L_{22}$ is given in \Cref{lemma:lipschitzcondtion}. Then we have, 
		\begin{align*}
		\frac{1}{T} \sum_{t=0}^{T-1} \eptt\left[g(\theta_t)\right]  -g(\theta^*) &\le \tilde{\mathcal{O}}\parat{{\frac{1}{(1-\gamma)^3 T}}} + \bigO{e^{-(1-\gamma)^2K}}+ \bigO{\frac{1}{(1-\gamma)^4 B}}.
		\end{align*}
\end{theorem}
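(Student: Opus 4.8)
The plan is to split the global suboptimality of $g$ into a reward-inexactness part and a policy-progress part, control the first by a strongly-concave stochastic-gradient argument, and control the second by a regularized policy-mirror-descent argument in the style of \cite{shani2020adaptive}. Concretely, since $g(\theta_t)=F(\theta_t,\ap(\theta_t))$ and $g(\theta^*)=\max_{\alpha}F(\theta^*,\alpha)\ge F(\theta^*,\alpha_t)$, and since in $F(\theta,\alpha)=V(\pi_E,r_\alpha)-V(\pi_\theta,r_\alpha)-\psi(\alpha)$ the terms $V(\pi_E,r_{\alpha_t})$ and $\psi(\alpha_t)$ cancel, one obtains
\[
g(\theta_t)-g(\theta^*)\;\le\;\underbrace{\big[F(\theta_t,\ap(\theta_t))-F(\theta_t,\alpha_t)\big]}_{(\mathrm{A})}\;+\;\underbrace{\big[V(\pi_{\theta^*},r_{\alpha_t})-V(\pi_{\theta_t},r_{\alpha_t})\big]}_{(\mathrm{B})} .
\]
This inequality plays here the role that the gradient-dominance property of \Cref{prop:PL-like} plays for PPG-GAIL: it reduces the global gap of $g$ to a policy-optimization gap under the current estimated reward, against the \emph{fixed} comparator policy $\pi_{\theta^*}$.

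\textbf{Bounding (A).} Conditioned on $\theta_t$, the inner loop is $K$ steps of stochastic projected gradient ascent on $\alpha\mapsto F(\theta_t,\alpha)$, which is $\mu$-strongly concave by \Cref{assp:objectivefunction} and $L_{22}$-smooth by the second bound of \Cref{lemma:lipschitzcondtion}. I would bound the bias of the Markovian gradient estimator $\sgda F$ through the geometric mixing in \Cref{aspt:ergodic} (the same mechanism producing the $\lceil\log_\rho C_M^{-1}\rceil$-type constants in \Cref{lemma:lipschitzcondtion}) and its variance through the minibatch size $B$; the standard strongly-convex SGD recursion then gives $\mE\big[F(\theta_t,\ap(\theta_t))-F(\theta_t,\alpha_t)\big]\le(1-\beta\mu)^K\cdot\mathcal{O}(1)+\mathcal{O}(1/B)$, where the function-value gap (not merely the iterate gap) decays at the $\mathcal{O}(1/B)$ variance level because it is controlled by the squared norm of the projected gradient. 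With $\beta=\mu/(4L_{22}^2)$ and $L_{22}=\Theta(1/(1-\gamma))$, and after tracking the $(1-\gamma)$ factors in the gradient variance and in the EstQ estimation error, this becomes $\mathcal{O}(e^{-(1-\gamma)^2K})+\mathcal{O}(1/((1-\gamma)^4B))$.

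\textbf{Bounding (B).} The TRPO update \cref{eq:trpo} is exactly KL-regularized mirror ascent, performed at every state, on the $\lambda$-regularized value of $\pi_\theta$ under $r_{\alpha_t}$. I would first pass from the unregularized $V$ to the regularized value at the cost of the entropy bias $\mathcal{O}(\lambda\log|\Ac|/(1-\gamma))$ (since $0\le\omega\le\log|\Ac|$), then apply the regularized performance-difference lemma with the fixed reference $\pi_{\theta^*}$ to rewrite the resulting gap as $\mE_{s\sim d_{\pi_{\theta^*}}}\inner{Q^{\pi_{\theta_t}}_{\lambda,\alpha_t}(s,\cdot)}{\pi_{\theta^*}(\cdot|s)-\pi_{\theta_t}(\cdot|s)}$ plus an entropy-difference term, and bound the inner product by the three-point identity for \cref{eq:trpo}. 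This produces, per step, the telescopable quantity $\eta_t^{-1}\big[\kl{\pi_{\theta^*}}{\pi_{\theta_t}}-\kl{\pi_{\theta^*}}{\pi_{\theta_{t+1}}}\big]$, a $\hat Q$-versus-$Q$ error (which, weighted by $\eta_t$ and summed, is absorbed into the $\mathcal{O}(1/((1-\gamma)^4B))$ term), and a one-step drift term bounded via $\kl{\pi_{\theta_{t+1}}}{\pi_{\theta_t}}$. Following \cite{shani2020adaptive}, the $-\lambda\kl{\pi_{\theta^*}}{\pi_{\theta_t}}$ coming from the entropy term merges with $\eta_t^{-1}\kl{\pi_{\theta^*}}{\pi_{\theta_t}}$, and with $\eta_t=1/(\lambda(t+2))$ the weighted sum telescopes cleanly to $\mathcal{O}(\lambda\log|\Ac|)$; dividing by $T$, adding one more $(1-\gamma)^{-1}$ for the discounted-visitation change of measure, and using $\norm{\alpha_t-\ap(\theta_t)}$ (already controlled in (A)) to pass from $r_{\alpha_t}$ back to $r_{\ap(\theta_t)}$, term (B) contributes $\tilde{\mathcal{O}}(1/((1-\gamma)^3T))+\mathcal{O}(e^{-(1-\gamma)^2K})+\mathcal{O}(1/((1-\gamma)^4B))$ once $\lambda$ is chosen of order $1/((1-\gamma)^2T)$ so that the entropy bias is itself $\tilde{\mathcal{O}}(1/((1-\gamma)^3T))$ — which is why $\lambda$ does not appear in the stated rate. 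Combining (A) and (B), taking total expectation and averaging over $t$ gives the theorem.

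\textbf{Main obstacle.} Unlike in \cite{shani2020adaptive}, the MDP that the policy step optimizes changes every outer iteration through $\alpha_t$, so the $Q$-functions, the regularized value, and a priori the optimal policy all drift from step to step. The decomposition above is the crucial maneuver that tames this: it fixes the comparator to $\pi_{\theta^*}$ once and for all — so the KL telescoping is never corrupted by $\norm{\pi^*_{\lambda,\alpha_{t+1}}-\pi^*_{\lambda,\alpha_t}}$, a quantity that would not vanish — and charges the reward drift solely to $\norm{\alpha_t-\ap(\theta_t)}$, which is small precisely because the inner loop targets $\ap(\theta_t)$. The remaining delicate points are (i) keeping the fast $1/T$ rate intact while simultaneously balancing the entropy-regularization bias against $\lambda$, and (ii) propagating the Markovian, time-varying sampling distributions through both (A) and (B); the estimates of \Cref{lemma:lipschitzcondtion} and \Cref{aspt:ergodic} make (ii) routine, so the main work is the bookkeeping in (i) together with the verification that the $\alpha_t$-drift enters the mirror-descent analysis only through the benign $\norm{\alpha_t-\ap(\theta_t)}$.
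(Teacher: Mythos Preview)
Your overall decomposition and the mirror-descent argument for part (B) are exactly what the paper does: the same inequality $g_\lambda(\theta_t)-g_\lambda(\theta^*)\le\big[F_\lambda(\theta_t,\ap(\theta_t))-F_\lambda(\theta_t,\alpha_t)\big]+\big[V_\lambda(\pi_{\theta^*},r_{\alpha_t})-V_\lambda(\pi_{\theta_t},r_{\alpha_t})\big]$, the same three-point identity plus regularized performance-difference lemma against the \emph{fixed} comparator $\pi_{\theta^*}$, and the same telescoping $(1-\lambda\eta_t)/\eta_t=\lambda(t+1)$ with $\eta_t=1/(\lambda(t+2))$. Part (A) in the paper is handled precisely by the strongly-concave SGD bound you describe (their \Cref{thm:alphaupdate}).

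There is, however, a genuine gap in your treatment of $\lambda$. The theorem is about the \emph{regularized} objective $g_\lambda$; the paper makes this explicit at the start of the proof section and remarks after the theorem that the regularized MDP ``changes the objective function with $\lambda$-regularized perturbation.'' Because of this, there is no entropy-bias term $\mathcal{O}(\lambda\log|\Ac|/(1-\gamma))$ to eliminate, and $\lambda>0$ is a fixed constant that is simply absorbed into the $\tilde{\mathcal{O}}$. Your proposal to take $\lambda$ of order $1/((1-\gamma)^2T)$ would break the argument: the ``one-step drift'' you allude to is $\eta_t\|{-}\hat Q^{\pi_{\theta_t}}_{\lambda,\alpha_t}+\lambda\nabla\omega(\pi_{\theta_t})\|_\infty^2/(2(1-\gamma))\le \eta_t C_\omega(t;\lambda)^2/(2(1-\gamma))$, and summing $\eta_t=1/(\lambda(t+2))$ produces a factor $1/\lambda$, so the averaged drift is $\tilde{\mathcal{O}}\big(1/((1-\gamma)^3\lambda T)\big)$. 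With $\lambda\propto 1/T$ this term does \emph{not} vanish; balancing it against your entropy bias would at best yield an $\tilde{\mathcal{O}}(1/\sqrt{T})$ rate for the unregularized $g$, not the claimed $\tilde{\mathcal{O}}(1/T)$. In short, the reason $\lambda$ is absent from the stated rate is not that it has been tuned away, but that the statement is for $g_\lambda$ with $\lambda$ fixed.

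Two smaller points. First, the $\hat Q$-versus-$Q$ error does not feed the $1/B$ term: EstQ is unbiased, so after conditioning on $\mathcal{F}_t$ the linear-in-$\hat Q$ term becomes linear in $Q$ exactly (this is the passage from the paper's \cref{eq:middletrpo} to \cref{eq:fundamentalineqoftrpo}); the entire $\mathcal{O}(1/((1-\gamma)^4B))$ contribution comes from part (A) via \Cref{thm:alphaupdate}. Second, there is no need to ``pass from $r_{\alpha_t}$ back to $r_{\ap(\theta_t)}$'' in (B): your own decomposition already compares $V(\pi_{\theta^*},r_{\alpha_t})$ against $V(\pi_{\theta_t},r_{\alpha_t})$ at the \emph{same} reward $r_{\alpha_t}$, so that step is superfluous.
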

\Cref{thm:trpo1} indicates that if we set $T = \bigO{\frac{1}{\epsilon^2}}$, $K =\bigO{\log(\frac{1}{\epsilon})}$ and $B = \bigO{\frac{1}{\epsilon}}$, then TRPO-GAIL with unregularized MDP converges to an $\epsilon$-accurate {\em globally} optimal value with a total sample complexity $TKB =  \tilde{\mathcal{O}}\parat{\frac{1}{\epsilon^3}}$. \Cref{thm:trpo2} indicates that if we let $T=\tilde{\mathcal{O}}(\frac{1}{\epsilon})$, $K = \bigO{\log(\frac{1}{\epsilon})}$, and $B = \bigO{\frac{1}{\epsilon}}$, then TRPO-GAIL with regularized MDP converges to an $\epsilon$-accurate {\em globally} optimal value with an overall sample complexity $TKB = \tilde{\mathcal{{O}}}\parat{\frac{1}{\epsilon^2}}$. The regularized MDP changes the objective function with $\lambda$-regularized perturbation and yields orderwisely better sample complexity. Moreover, the sample complexity here is with respect to the convergence in expectation, which improves that in high-probability convergence in \cite{shani2020adaptive} by a factor of $\tilde{\mathcal{O}}\parat{\frac{1}{\epsilon}}$.


\subsection{NPG-GAIL Algorithm}

In this section, we study the NPG-GAIL algorithm, which takes the general framework in \Cref{alg:nestedloopIL} and updates the policy parameter $\theta$ based on natural policy gradient (NPG).

We consider the general nonlinear parameterization for the policy, so that the state space may not be finite and for example can be $\mathbb{R}^d$. At each time $t$ of the outer loop, NPG-GAIL ideally should update $\theta_t$ via a regularized natural gradient $-(F(\theta_t)+ \lambda I)^{-1}\gdt V(\pi_{\theta_t}, r_{\alpha_t})$, where $F(\theta) = \eptt_{(s,a)\sim\nu_{\pi_{\theta}}}\left[\gdt\log(\pi_{\theta}(a|s)) \gdt\log(\pi_{\theta}(a|s))^\top\right]$ is the Fisher-information matrix, and $\lambda$ is the regularization coefficient for avoiding singularity. In practice, we estimate such a natural gradient via solving the problem
$\min_{w\in R^d}\eptt_{(s,a)\sim\nu_{\pi_\theta}} \left[\gdt \log(\pi_{\theta}(a|s))^\top w  - A^{\pi_\theta}_\alpha(s,a)\right]^2$  using the mini-batch linear stochastic approximation (SA) algorithm over a Markovian sampled trajectory, where $A^{\pi_\theta}_\alpha(s,a) \defeq Q^{\pi_\theta}_\alpha(s,a) - V^{\pi_\theta}_\alpha(s)$ is the advance function under reward $r_\alpha$. More details are provided in \Cref{alg:npggail} in \Cref{app:alg}. Suppose such an algorithm provides an output $w_t$. Then the policy parameter is updated as
\begin{align}\label{eq:npg}
\theta_{t+1}=\theta_t - \eta w_t.
\end{align}

Since we take the general nonlinear parameterization for the policy, we make the following assumptions for the policy parameterization, which are standard in the literature \cite{kumar2019sample,zhang2019global,agarwal2019optimality,xu2020reanalysis}.
\begin{assumption}\label{assp:generalpolicyparameterization}
	For any $\theta, \theta^\prime\in \Theta$, and any state-action pair $(s,a)\in \Sc\times\Ac$, there exist positive constants $L_\pi$, $L_\phi$, $C_\phi$ and $C_\pi$, such that the following bounds hold:
	\begin{list}{$\bullet$}{\topsep=0.ex \leftmargin=0.3in \rightmargin=0.in \itemsep =-0.0in}
		\item[(1)] $\lt{\gdt \log(\pi_{\theta}(a|s)) - \gdt \log(\pi_{\theta^\prime}(a|s))} \le L_{\phi}\lt{\theta - \theta^\prime}$,
		\item[(2)] $\lt{\gdt \log(\pi_{\theta}(a|s))} \le C_{\phi}$,
		\item[(3)] $\norm{\pi_{\theta}(\cdot|s) - \pi_{\theta^\prime} (\cdot|s)}_{TV} \le C_\pi \lt{\theta - \theta^\prime}$, where
		$\norm{\cdot}_{TV}$ denotes the total-variation norm.
	\end{list}
\end{assumption}

Next, we provide the following theorem, which characterizes the global convergence of NPG-GAIL.
\begin{theorem}\label{thm:npg}
 Suppose \Cref{assp:regularizer,assp:objectivefunction,aspt:ergodic,assp:rewardassumption,assp:generalpolicyparameterization} hold. Consider NPG-GAIL 
with $\theta$-update stepsize $\eta= \frac{1-\gamma}{\sqrt{T}}$, $\alpha$-update stepsize $\beta = \frac{\mu}{4L_{22}^2}$, and the SA-update stepsize $\beta_W = \frac{\lambda_P}{4(C_\phi^2 +\lambda)^2}$, where $L_{22}$ is given in \Cref{lemma:lipschitzcondtion}. Then we have
\begin{align*}
\frac{1}{T} \sum_{t=0}^{T-1} \eptt\left[g(\theta_t)\right] - g(\theta^*) &\le \bigO{\frac{1}{(1-\gamma)^2 \sqrt{T}}} + \bigO{e^{-(1-\gamma)^2K}}+ \bigO{\frac{1}{(1-\gamma)^4 B}} + \bigO{e^{-T_c}} \\
&\quad+\bigO{\frac{\zeta^\prime}{(1-\gamma)^{3/2}}}+\bigO{\frac{\lambda}{1-\gamma}}+ \bigO{\frac{1}{(1-\gamma)^2\sqrt{M}}}.
\end{align*}
	where $\zeta^\prime = \max_{\theta\in\Theta, \alpha\in\Lambda}\min_{w\in R^d}\sqrt{\eptt_{\nu_{\pi_\theta}} \left[\gdt \log(\pi_{\pi_\theta}(a|s))^\top w  - A^{\pi_\theta}_\alpha(s,a)\right]^2}$ and $T_c$ and $M$ are defined in \Cref{alg:npggail} in \Cref{app:alg}. 
\end{theorem}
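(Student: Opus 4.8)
\textbf{Proof plan for \Cref{thm:npg}.} I would start from the decomposition, valid for every $t$,
\begin{align*}
g(\theta_t) - g(\theta^*) = \underbrace{\big[g(\theta_t) - F(\theta_t,\alpha_t)\big]}_{(\mathrm{I})} + \underbrace{\big[F(\theta_t,\alpha_t) - F(\theta^*,\alpha_t)\big]}_{(\mathrm{II})} + \underbrace{\big[F(\theta^*,\alpha_t) - g(\theta^*)\big]}_{(\mathrm{III})},
\end{align*}
and immediately discard term $(\mathrm{III})$, which is non-positive since $\alpha_t\in\Lambda$ and $g(\theta^*)=\max_{\alpha\in\Lambda}F(\theta^*,\alpha)$. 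It then suffices to bound the time-averages of $(\mathrm{I})$ and $(\mathrm{II})$.

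Term $(\mathrm{I})$ is the reward-optimization error and is handled exactly as for TRPO-GAIL. Conditioned on $\theta_t$, the iterate $\alpha_t=\alpha_K^t$ is the output of $K$ steps of projected stochastic gradient ascent with constant step $\beta=\mu/(4L_{22}^2)$ on the $\mu$-strongly-concave, $L_{22}$-smooth map $\alpha\mapsto F(\theta_t,\alpha)$ (\Cref{assp:objectivefunction}, \Cref{lemma:lipschitzcondtion}), driven by a length-$B$ Markovian mini-batch gradient. The standard strongly-convex SGD contraction, combined with a mixing-time bound (from \Cref{aspt:ergodic}) on the bias and variance of the Markovian mini-batch gradient, yields $\mathbb{E}\big[\|\alpha_t-\ap(\theta_t)\|^2\,\big|\,\theta_t\big]\le\mathcal{O}\!\big(e^{-c(1-\gamma)^2K}\big)+\mathcal{O}\!\big((1-\gamma)^{-4}B^{-1}\big)$; smoothness of $F(\theta_t,\cdot)$ together with the first-order optimality of $\ap(\theta_t)$ over $\Lambda$ converts this into the same bound on $\mathbb{E}[g(\theta_t)-F(\theta_t,\alpha_t)]$.

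The main work is term $(\mathrm{II})$. The key observation is that the NPG step at time $t$ is (approximate) natural-gradient ascent on $\theta\mapsto V(\pi_\theta,r_{\alpha_t})$, that $F(\theta_t,\alpha_t)-F(\theta^*,\alpha_t)=V(\pi_{\theta^*},r_{\alpha_t})-V(\pi_{\theta_t},r_{\alpha_t})$, and that the reward appearing in this regret is exactly the reward driving the update. I would therefore prove an NPG regret lemma against the \emph{fixed} comparator $\pi_{\theta^*}$: second-order expanding $\log\pi_{\theta_{t+1}}$ around $\theta_t$ via \Cref{assp:generalpolicyparameterization}(1) and invoking the performance-difference lemma gives
\begin{align*}
\mathbb{E}_{s\sim d^{\pi_{\theta^*}}}\!\big[\kl{\pi_{\theta^*}(\cdot|s)}{\pi_{\theta_t}(\cdot|s)}-\kl{\pi_{\theta^*}(\cdot|s)}{\pi_{\theta_{t+1}}(\cdot|s)}\big]\ge(1-\gamma)\eta\big(V(\pi_{\theta^*},r_{\alpha_t})-V(\pi_{\theta_t},r_{\alpha_t})\big)-\eta\,\mathcal{E}_t-\tfrac{L_\phi\eta^2}{2}\|w_t\|^2,
\end{align*}
where $d^{\pi_{\theta^*}}$ is the state-visitation marginal of $\nu_{\pi_{\theta^*}}$ and $\mathcal{E}_t$ is the $(d^{\pi_{\theta^*}}\!\otimes\pi_{\theta^*})$-averaged gap between $\gdt\log\pi_{\theta_t}(a|s)^{\ttop}w_t$ and $A^{\pi_{\theta_t}}_{\alpha_t}(s,a)$. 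Since $\pi_{\theta^*}$ and the measure $d^{\pi_{\theta^*}}$ are $t$-independent, summing over $t=0,\dots,T-1$ telescopes the KL terms to $\mathbb{E}_{s\sim d^{\pi_{\theta^*}}}[\kl{\pi_{\theta^*}(\cdot|s)}{\pi_{\theta_0}(\cdot|s)}]\le\log|\Ac|$; dividing by $(1-\gamma)\eta T$ and using that $\|w_t\|\le W=\mathcal{O}(1)$ through the $\lambda I$-regularization gives $\tfrac1T\sum_t\mathbb{E}[F(\theta_t,\alpha_t)-F(\theta^*,\alpha_t)]\le\tfrac{\log|\Ac|}{(1-\gamma)\eta T}+\tfrac{L_\phi\eta W^2}{2(1-\gamma)}+\tfrac1{(1-\gamma)T}\sum_t\mathbb{E}[\mathcal{E}_t]$, and $\eta=(1-\gamma)/\sqrt T$ makes the first two terms $\mathcal{O}\!\big((1-\gamma)^{-2}T^{-1/2}\big)$. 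I would then split $\mathcal{E}_t$ into (a) the exact-population compatible-function-approximation error, which is $\le\zeta'$ in $L^2(\nu_{\pi_{\theta_t}})$ by definition of $\zeta'$ and becomes $\mathcal{O}\!\big((1-\gamma)^{-3/2}\zeta'\big)$ after a distribution-mismatch transfer from $\nu_{\pi_{\theta_t}}$ to $d^{\pi_{\theta^*}}\!\otimes\pi_{\theta^*}$; (b) the bias from replacing the exact least-squares solution by its $\lambda I$-regularized version, contributing $\mathcal{O}\!\big(\lambda(1-\gamma)^{-1}\big)$; and (c) the statistical error $\mathbb{E}\|w_t-w_t^{*}\|$ of the mini-batch linear SA producing $w_t$. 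For (c) I would invoke a separate lemma: since the regularized least-squares objective is $\lambda_P$-strongly convex and $(C_\phi^2+\lambda)$-smooth, linear SA with Markovian samples, step $\beta_W=\lambda_P/(4(C_\phi^2+\lambda)^2)$, $T_c$ iterations and mini-batch $M$ obeys $\mathbb{E}\|w_t-w_t^{*}\|^2\le\mathcal{O}(e^{-cT_c})+\mathcal{O}(M^{-1})$, which (together with the EstQ error in the advantage targets, folded into the $B$-term) yields the $\mathcal{O}(e^{-T_c})$ and $\mathcal{O}\!\big((1-\gamma)^{-2}M^{-1/2}\big)$ contributions. Collecting $(\mathrm{I})$, $(\mathrm{II})$ and all pieces of $\mathcal{E}_t$ and taking total expectation gives the claimed bound.

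\textbf{Main obstacle.} The delicate step is the NPG regret lemma under the time-varying reward $r_{\alpha_t}$: the KL potential telescopes only because the comparator is the fixed $\pi_{\theta^*}$ and the reward in the regret coincides with the reward in the update, so no drift term from a per-step optimal policy $\pi^{\dagger}_{\alpha_t}$ enters. Getting the $(1-\gamma)$ exponents in the $\zeta'$ and $M$ terms right then hinges on a careful distribution-mismatch (concentrability) argument that transfers the $L^2$ approximation and estimation errors measured under the visitation distribution $\nu_{\pi_{\theta_t}}$ to the comparator occupancy $d^{\pi_{\theta^*}}$, and on simultaneously controlling the three nested layers of Markovian stochasticity—the reward ascent (step $\beta$, batch $B$), EstQ for the advantages, and the linear SA for $w_t$ (step $\beta_W$, $T_c$ iterations, batch $M$)—so that their errors add rather than compound.
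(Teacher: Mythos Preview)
Your proposal is correct and follows essentially the same route as the paper: the same three-term decomposition (with $(\mathrm{III})\le 0$), the same KL-potential $D(\theta)=\mathbb{E}_{s\sim d_{\pi_{\theta^*}}}[\kl{\pi_{\theta^*}(\cdot|s)}{\pi_\theta(\cdot|s)}]$ telescoped against the fixed comparator $\pi_{\theta^*}$, the same second-order expansion via \Cref{assp:generalpolicyparameterization} combined with the performance-difference lemma, and the same three-way split of the NPG error into the compatible-approximation term ($\zeta'$, transferred via the concentrability constant $C_d$), the $\lambda$-regularization bias, and the linear-SA error controlled by a separate lemma. One small bookkeeping correction: the EstQ error in the NPG inner loop is not ``folded into the $B$-term'' (which comes solely from the reward ascent via \Cref{thm:alphaupdate}); rather, it is absorbed into the linear-SA bound on $\mathbb{E}\|w_t-W_t^{\lambda*}\|^2$, which is what produces the $\mathcal{O}(e^{-T_c})$ and $\mathcal{O}((1-\gamma)^{-2}M^{-1/2})$ contributions.
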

\Cref{thm:npg} indicates that if we let $T =\bigO{\frac{1}{\epsilon^2}}$, $K = \bigO{\log(\frac{1}{\epsilon})}$, $B = \bigO{\frac{1}{\epsilon}}$, $T_c= \bigO{\log(\frac{1}{\epsilon})}$, $\lambda = \bigO{\zeta'}$ and $M = \bigO{\frac{1}{\epsilon^2}}$, then NPG-GAIL converges to an $(\epsilon+\bigO{\zeta'})$-accurate {\em globally} optimal value with an overall sample complexity of $T(KB + T_cM) = \tilde{\mathcal{O}} \parat{\frac{1}{\epsilon^4}}$, which is the same as PPG-GAIL and FWPG-GAIL. Comparison of \Cref{thm:trpo1} and \Cref{thm:npg} indicates that TRPO-GAIL has a better sample complexity than NPG-GAIL, mainly because TRPO can update the policy parameter based on an analytical form, which saves the samples that NPG uses for estimating the natural gradient by solving the quadratic optimization problem.

\section{Conclusion}
In this paper, we study four GAIL algorithms, each of which is implemented in an alternating fashion between a popular policy gradient algorithm for the policy update and a gradient ascent for the reward update. Our focus is on investigating whether incorporation of these policy gradient algorithms to the GAIL framework will still have global convergence guarantee. We show that all these GAIL algorithms converge globally as long as the objective function is properly regularized (to be strongly concave) with respect to the reward parameter. We also anticipate that the analysis tools that we develop here will benefit the future theoretical studies of similar problems including GANs, min-max optimization, and bi-level optimization algorithms.
\section*{Acknowledgments} 
The work was supported in part by the U.S. National Science Foundation under the grants CCF-1761506, CCF-1801846, and CCF-1909291.

\newpage
\bibliography{references}
\bibliographystyle{apalike}
\newpage
\appendix
\noindent {\Large \textbf{Supplementary Materials}}
\section{Q-sampling and NPG-GAIL Algorithms}\label{app:alg}
In this section, we provide the formal description for the algorithm EstQ in \Cref{alg:EstQ}, which returns an unbiased estimation of the state-action value function (Q-value), and the algorithm of policy update of NPG-GAIL in \Cref{alg:npggail}.
\begin{algorithm}
	\caption{EstQ \cite{zhang2019global}} \label{alg:EstQ} 
	\begin{algorithmic}[1]
		\STATE 	{\bf Input:}   $s,a,\theta$. Initialize $\hat{Q}=0,s^q_1=s,a^q_1=a$
		\STATE Draw $T\sim\text{Geom}(1-\gamma^{1/2})$
		\FOR{ $t=1, 2, \ldots, T-1 $}
		\STATE Collect reward $R(s^q_t,a^q_t)$ and update the Q-function $\hat{Q}\leftarrow\hat{Q}+\gamma^{t/2} R(s^q_t,a^q_t)$
		\STATE Sample $s^q_{t+1}\sim\mathbb{P}(\cdot|s^q_t,a^q_t),a^q_{t+1}\sim\pi_\theta(\cdot|s^q_{t+1})$
		\ENDFOR
		\STATE Collect reward $R(s^q_T,a^q_T)$ and update the Q-function $\hat{Q}\leftarrow\hat{Q}+\gamma^{T/2} R(s^q_T,a^q_T)$
		\STATE 	{ {\bf Output:} $\hat Q^{\pi_\theta}\leftarrow\hat{Q} $}
	\end{algorithmic}
\end{algorithm}

\begin{algorithm}[!htb]
	\caption{Policy update in NPG-GAIL}
	\label{alg:npggail}
	\begin{algorithmic}
		\STATE {\bfseries Input:} Policy parameter $\theta_t$, reward parameter $\alpha_t$, stepsize $\beta_W$, policy stepsize $\eta$, batch-size $M$ and trajectory length $T_c$ 
		\FOR{$i=0,\cdots,MT_c$}
		\STATE $s_i\sim \widetilde{\mathsf{P}}(\cdot |s_{i-1},a_{i-1})$
		\STATE Sample $a_i$ and $a^\prime_i$ independently from $\pi_{\theta_t}(\cdot|s_i)$
		\ENDFOR
		\STATE Initialize $W_0 = 0$
		\FOR{$k=0,\cdots,T_c-1$}
		\FOR{$i=kM,\cdots,(k+1)M-1$}
		\STATE Obtain Q-function estimation $\widehat{Q}(s_i,a_i)$ with reward function $r_{\alpha_t}$  by \Cref{alg:EstQ}.
		\STATE $\hat{g}_{i}=(-\nabla_{\theta_t} \log(\pi_{\theta_t}(a_i|s_i))^\top W_k + \widehat{Q}(s_i,a_i))\nabla_{\theta_t} \log(\pi_{\theta_t}(a_i|s_i)$ \\ \qquad\qquad\qquad\quad $-\widehat{Q}(s_i,a_i) \nabla_{\theta_t}\log(\pi_{\theta_t}(a^\prime_i|s_i)) - \lambda W_k$
		\ENDFOR
		\STATE $\hat{G}_k = \frac{1}{M}\sum_{i=kM}^{(k+1)M-1}\hat{g}_{i}$
		\STATE $W_{k+1}=W_k+\beta_W \hat{G}_k$
		\ENDFOR 
		\STATE $w_t = W_{T_c}$
		\STATE {\bfseries Return:} $\theta_{t+1}=\theta_t - \eta w_t$  
	\end{algorithmic}
\end{algorithm}

\section{Proof of \Cref{lemma:lipschitzcondtion}}
In this section, we first provide two useful lemmas, which establish the smoothness property of the visitation distribution and Q-function.

\begin{lemma}(\cite[Lemma 3]{xu2020improving})\label{lemma:lipchitznu}
	Consider the initial distribution $\xi(\cdot)$ and the transition kernel $\mathsf{P}(\cdot|s,a)$. Let $\xi(\cdot)$ be $\zeta(\cdot)$ or $\mathsf{P}(\cdot|\hat{s}, \hat{a})$ for any given $\hat{s}\in\Sc, \hat{a}\in\Ac$. Denote $\nu_{\pi_{\theta},\xi}$ as the state-action visitation distribution of MDP with policy $\pi_\theta$ and the initialization distribution $\xi$. Suppose \Cref{aspt:ergodic} holds. Then we have, under direct parameterization for any $\theta_1, \theta_2 \in\Theta_p$, 
	\begin{equation*}
	\norm{\nu_{\pi_{\theta},\xi} - \nu_{\pi_{\theta^\prime},\xi}}_{TV} \le C_\nu \norm{\theta_1 -\theta_2}_2,
	\end{equation*}
	where $C_\nu = \frac{\sqrt{|\Ac|}}{2}\parat{1 + \ceiling{\log_{\rho} C_M^{-1}} + (1-\rho)^{-1}}$. 
\end{lemma}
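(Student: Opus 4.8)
The plan is to exploit the fact, recalled after the definition of the value function (and attributed to \cite{konda2002actor}), that $\nu_{\pi_\theta,\xi}$ is the unique stationary distribution of the state--action Markov chain whose one-step transition from $(s,a)$ first draws a next state $s'$ from the restart kernel $\tilde{\mathsf{P}}_\xi(\cdot|s,a) = (1-\gamma)\xi(\cdot) + \gamma\mathsf{P}(\cdot|s,a)$ and then an action $a'$ from $\pi_\theta(\cdot|s')$. Writing $P_\theta((s',a')|(s,a)) = \tilde{\mathsf{P}}_\xi(s'|s,a)\,\pi_\theta(a'|s')$ for this transition operator (acting on the right on row measures) and abbreviating $\nu_i := \nu_{\pi_{\theta_i},\xi}$, $P_i := P_{\theta_i}$, the lemma reduces to a perturbation bound on the stationary distribution of $P_\theta$ as $\theta$ varies. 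Crucially, only the action-resampling factor $\pi_\theta(a'|s') = \theta_{s',a'}$ depends on $\theta$, so the perturbation $P_1 - P_2$ is driven entirely by the policy difference.

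First I would establish the telescoping identity
\[
\nu_1 - \nu_2 \;=\; \sum_{k=0}^{\infty} \delta\,P_2^{\,k}, \qquad \delta := \nu_1(P_1 - P_2).
\]
This follows from $\nu_1 P_1 = \nu_1$ together with the operator telescoping $P_1^n - P_2^n = \sum_{k=0}^{n-1}P_1^k(P_1-P_2)P_2^{n-1-k}$: applying $\nu_1$ on the left and using $\nu_1 P_1^k = \nu_1$ gives $\nu_1 - \nu_1 P_2^n = \sum_{j=0}^{n-1}\delta P_2^{\,j}$, and letting $n\to\infty$ with $\nu_1 P_2^n \to \nu_2$ (ergodicity of the $P_2$-chain from \Cref{aspt:ergodic}) yields the claim. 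Note that $\delta$ is a signed measure of total mass zero, since $\nu_1 P_1$ and $\nu_1 P_2$ are both probability measures.

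Next I would bound the two factors separately. For the summand, I would show $\norm{\delta P_2^{\,k}}_{TV} \le \min\{1,\,C_M\rho^k\}\norm{\delta}_{TV}$: the trivial bound comes from non-expansiveness of TV under a stochastic map, while the geometric bound comes from applying \Cref{aspt:ergodic} to the zero-mean measure $\delta$ (the state mixing transfers verbatim to the state--action chain, because appending the policy channel $\pi(a|s)$ preserves TV). Summing, $\sum_{k\ge 0}\min\{1,C_M\rho^k\} \le 1 + \ceiling{\log_\rho C_M^{-1}} + (1-\rho)^{-1}$, which is exactly the bracketed factor in $C_\nu$. For $\norm{\delta}_{TV}$, a direct computation gives $\delta(s',a') = q(s')\,(\theta_{1,s',a'} - \theta_{2,s',a'})$, where $q(s') := \sum_{s,a}\nu_1(s,a)\tilde{\mathsf{P}}_\xi(s'|s,a)$ is the one-step-ahead state marginal (a probability measure); hence
\[
\norm{\delta}_{TV} = \tfrac12\sum_{s'}q(s')\,\norm{\theta_{1,s',\cdot}-\theta_{2,s',\cdot}}_1 \le \tfrac{\sqrt{|\Ac|}}{2}\sum_{s'}q(s')\,\norm{\theta_{1,s',\cdot}-\theta_{2,s',\cdot}}_2 \le \tfrac{\sqrt{|\Ac|}}{2}\norm{\theta_1-\theta_2}_2,
\]
using Cauchy--Schwarz on each state slice and then dominating the $q$-average of slice norms by the full $\ell_2$ norm over all coordinates. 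Multiplying the two bounds produces exactly $C_\nu = \tfrac{\sqrt{|\Ac|}}{2}\big(1 + \ceiling{\log_\rho C_M^{-1}} + (1-\rho)^{-1}\big)$.

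The main obstacle is the mixing-contraction step, i.e.\ turning the uniform state ergodicity of \Cref{aspt:ergodic} into the per-term contraction $\norm{\delta P_2^{\,k}}_{TV}\lesssim C_M\rho^k\norm{\delta}_{TV}$ for the zero-mean signed measure $\delta$ on the state--action space, and doing so cleanly enough that the geometric series collapses into the stated constant without a stray factor. The remaining pieces --- the telescoping identity, the exchange of limit and sum (justified by the geometric tail), and the Cauchy--Schwarz conversion to $\norm{\theta_1-\theta_2}_2$ --- are routine once this contraction is in place.
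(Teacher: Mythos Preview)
The paper does not supply its own proof of this lemma: it is quoted verbatim as \cite[Lemma 3]{xu2020improving} and used as a black box. So there is no in-paper argument to compare against.

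On the merits, your plan is sound. Representing $\nu_{\pi_\theta,\xi}$ as the stationary law of the restart chain $P_\theta$, writing $\nu_1-\nu_2=\sum_{k\ge0}\delta P_2^{\,k}$ with $\delta=\nu_1(P_1-P_2)$, and then combining a per-step bound on $\|\delta\|_{TV}$ with a geometric mixing tail is exactly the right mechanism; your computation of $\|\delta\|_{TV}\le\tfrac{\sqrt{|\Ac|}}{2}\|\theta_1-\theta_2\|_2$ via the factorization $\delta(s',a')=q(s')(\theta_{1,s',a'}-\theta_{2,s',a'})$ is clean and correct, and your split $\sum_k\min\{1,C_M\rho^k\}\le 1+\lceil\log_\rho C_M^{-1}\rceil+(1-\rho)^{-1}$ reproduces the bracketed factor in $C_\nu$.

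The only genuine friction is the point you already flag: passing from \Cref{aspt:ergodic}, which controls $\dtv(\delta_s P^k,\chi)$, to a contraction on a zero-mean signed measure $\delta$. The standard route, via the Jordan decomposition $\delta=\|\delta\|_{TV}(p-q)$ and the triangle inequality through $\chi$, yields $\|\delta P_2^{\,k}\|_{TV}\le 2C_M\rho^k\|\delta\|_{TV}$ rather than $C_M\rho^k\|\delta\|_{TV}$, so the resulting constant would read $1+\lceil\log_\rho(2C_M)^{-1}\rceil+(1-\rho)^{-1}$ instead of the stated $C_\nu$. This is a harmless discrepancy at the level of absolute constants (and is absorbed in every downstream use of $C_\nu$ in the paper), but it does mean that to match the exact constant you would need either the sharper Dobrushin-type estimate used in \cite{xu2020improving} or to absorb the extra $\log_\rho 2^{-1}$ into the $1$ and the ceiling. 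A second small point: when $\xi=\mathsf{P}(\cdot|\hat s,\hat a)$, the restart kernel $\tilde{\mathsf{P}}_\xi$ is not literally the kernel named in \Cref{aspt:ergodic}; you should note that the uniform-in-$s$ geometric ergodicity still transfers to this chain (e.g.\ because a $(1-\gamma)$-restart to any fixed distribution dominates the Doeblin minorization implied by the assumption), so the same $C_M,\rho$ work. With those two remarks, your argument goes through.
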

\begin{lemma}(\cite[Lemma 4]{xu2020improving})\label{lemma:lipschitzq}
	Suppose \Cref{aspt:ergodic,assp:rewardassumption} hold. Let $Q^\pi_{\alpha}$ denote the Q-function of policy $\pi$ under the reward function $r_{\alpha}$. For any state-action pair $(s,a) \in \Sc\times\Ac$, $\alpha\in\Lambda$ and $\theta_1, \theta_2\in \Theta_p$ (under direct parameterization), we have 
	\begin{align*}
	|Q^{\pi_{\theta_1}}_{\alpha}(s,a) - Q^{\pi_{\theta_2}}_{\alpha}(s,a)| \le L_Q \lt{\theta_1 -\theta_2},
	\end{align*}
	where $L_Q = \frac{2C_rC_\alpha C_\nu}{1-\gamma}$ and $C_\nu$ is defined in \Cref{lemma:lipchitznu}. 
\end{lemma}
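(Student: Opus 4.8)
The plan is to exploit two facts: the $Q$-function is linear in the reward, and the discounted state--action visitation distribution is already known to be Lipschitz in $\theta$ via \Cref{lemma:lipchitznu}. The first step is to rewrite $Q^{\pi_\theta}_\alpha(s,a)$ as a visitation-distribution expectation of the reward. Splitting off the $t=0$ term in the definition of the $Q$-function and reindexing the tail, I would establish the identity $Q^{\pi_\theta}_\alpha(s,a) = r_\alpha(s,a) + \frac{\gamma}{1-\gamma}\,\eptt_{(s',a')\sim \nu_{\pi_\theta,\xi}}[r_\alpha(s',a')]$, where $\xi = \mathsf{P}(\cdot\mid s,a)$ is the one-step transition distribution out of $(s,a)$ and $\nu_{\pi_\theta,\xi}$ is the state--action visitation distribution of $\pi_\theta$ initialized at $\xi$ under the true kernel $\mathsf{P}$. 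The crucial observation is that $\xi = \mathsf{P}(\cdot\mid s,a)$ is exactly one of the initializations permitted in the hypothesis of \Cref{lemma:lipchitznu}, so that lemma applies verbatim to $\nu_{\pi_\theta,\xi}$.

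Next I would take the difference of this identity at $\theta_1$ and $\theta_2$. Since the pair $(s,a)$ is held fixed, the leading term $r_\alpha(s,a)$ is policy-independent and cancels, leaving $Q^{\pi_{\theta_1}}_\alpha(s,a) - Q^{\pi_{\theta_2}}_\alpha(s,a) = \frac{\gamma}{1-\gamma}\big(\eptt_{\nu_{\pi_{\theta_1},\xi}}[r_\alpha] - \eptt_{\nu_{\pi_{\theta_2},\xi}}[r_\alpha]\big)$. The difference of expectations of a fixed bounded function under two probability measures is controlled by the total-variation distance through the dual characterization of TV, giving $|\eptt_{\nu_1}[r_\alpha] - \eptt_{\nu_2}[r_\alpha]| \le 2\norm{r_\alpha}_\infty\,\norm{\nu_1-\nu_2}_{TV}$, where I use that $\nu_1,\nu_2$ are probability measures.

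To conclude, I would insert the two remaining bounds. \Cref{lemma:lipchitznu} gives $\norm{\nu_{\pi_{\theta_1},\xi}-\nu_{\pi_{\theta_2},\xi}}_{TV} \le C_\nu\norm{\theta_1-\theta_2}_2$, and the uniform bound $\norm{r_\alpha}_\infty \le C_r C_\alpha$ follows from \Cref{assp:rewardassumption}(1) together with the boundedness of $\Lambda$: along the segment joining a reference parameter $\alpha_{\mathrm{ref}}\in\Lambda$ to $\alpha$, a mean-value argument bounds $|r_\alpha(s',a') - r_{\alpha_{\mathrm{ref}}}(s',a')|$ by $C_r\norm{\alpha-\alpha_{\mathrm{ref}}}_2 \le C_r C_\alpha$, since the $\alpha$-gradient is bounded by $C_r$ in the $\norm{\cdot}_{\infty,2}$ norm and the segment has length at most the diameter $C_\alpha$. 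Combining these three estimates with $\gamma/(1-\gamma)\le 1/(1-\gamma)$ yields $|Q^{\pi_{\theta_1}}_\alpha(s,a)-Q^{\pi_{\theta_2}}_\alpha(s,a)| \le \frac{2C_rC_\alpha C_\nu}{1-\gamma}\norm{\theta_1-\theta_2}_2 = L_Q\norm{\theta_1-\theta_2}_2$, which is the claim.

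The step I expect to require the most care is the reward-magnitude bound $\norm{r_\alpha}_\infty \le C_rC_\alpha$. \Cref{assp:rewardassumption} controls only the $\alpha$-gradient of $r_\alpha$, not the variation of $r_\alpha$ across state--action pairs, so centering $r_\alpha$ by a scalar constant (which is what the TV duality permits) reduces matters to the oscillation in $(s,a)$, and that oscillation is \emph{not} governed by $C_r$. Thus the bound genuinely relies on the reward parameterization being normalized so that $r_{\alpha_{\mathrm{ref}}}\equiv 0$ at a reference parameter in $\Lambda$, which is precisely what converts the $\alpha$-gradient bound into a uniform bound on $\norm{r_\alpha}_\infty$; alternatively the standing assumption $\norm{r_\alpha}_\infty\le R_{\max}$ supplies a uniform bound of the same flavor. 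The remaining difficulty is purely bookkeeping: setting up the representation with $\xi=\mathsf{P}(\cdot\mid s,a)$ so that \Cref{lemma:lipchitznu} applies directly, after which the TV estimate and the visitation Lipschitz bound combine routinely.
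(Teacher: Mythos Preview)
Your argument is correct. The paper does not prove this lemma at all; it is simply quoted from \cite{xu2020improving}, so there is no in-paper proof to compare against. The route you take---writing $Q^{\pi_\theta}_\alpha(s,a)=r_\alpha(s,a)+\tfrac{\gamma}{1-\gamma}\eptt_{\nu_{\pi_\theta,\xi}}[r_\alpha]$ with $\xi=\mathsf{P}(\cdot\mid s,a)$, cancelling the leading term, and then combining TV duality with \Cref{lemma:lipchitznu}---is exactly the intended mechanism, and indeed the same visitation-distribution representation of $Q$ is used verbatim in the paper's proof of \Cref{lemma:lipschitzcondtion} (the bound on $T_2$). Your caveat about $\norm{r_\alpha}_\infty\le C_rC_\alpha$ is also well placed: the paper invokes ``$R_{\max}\le C_rC_\alpha$'' as a fact in that same proof without derivation, which amounts to the normalization convention you describe.
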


	Denote $d_{\pi}(s) = (1-\gamma)\sum_{t=0}^\infty \gamma^t \prob\left\{s_t = s|\pi \right\}$ as the state visitation distribution induced by policy $\pi$. We next prove \Cref{lemma:lipschitzcondtion} to characterize the Lipschitz constants $L_{11}$, $L_{12}$, $L_{21}$ and $L_{22}$, respectively. 
	\begin{proof}[Proof of \Cref{lemma:lipschitzcondtion}]
		We consider the first inequality in \Cref{lemma:lipschitzcondtion}:
		\begin{align}
		&\lt{\gdt F(\theta_1, \alpha_1) - \gdt F(\theta_2, \alpha_2)}\nonumber\\
		&\quad= \lt{\gdt F(\theta_1, \alpha_1) - \gdt F(\theta_2, \alpha_1) + \gdt F(\theta_2, \alpha_1) - \gdt F(\theta_2, \alpha_2)}\nonumber\\
		&\quad\le \underbrace{\lt{\gdt F(\theta_1, \alpha_1) - \gdt F(\theta_2, \alpha_1)}}_{T_1} + \underbrace{\lt{\gdt F(\theta_2, \alpha_1) - \gdt F(\theta_2, \alpha_2)}}_{T_2}.\label{eq:firstinequality}
		\end{align}
		Next, we upper-bound the terms $T_1$ and $T_2$ in \cref{eq:firstinequality}, respectively.
		
		{\bf Upper-bounding $T_1$}: For any given state-action pair $(s,a)\in \Sc\times\Ac$, we have 
		\begin{align}
		&\left|\left(\gdt F(\theta_1, \alpha_1) - \gdt F(\theta_2, \alpha_1)\right)_{s,a}\right| \nonumber\\
		&\quad\overset{(i)}= \left|\frac{1}{1-\gamma}\left(d_{\pi_{\theta_1}}(s)Q^{\pi_{\theta_1}}_{\alpha_1}(s,a) - d_{\pi_{\theta_2}}(s)Q^{\pi_{\theta_2}}_{\alpha_1}(s,a)\right)\right|\nonumber\\
		& \quad\le\left|\frac{1}{1-\gamma}\parat{(d_{\pi_{\theta_1}}(s) - d_{\pi_{\theta_2}}(s))Q^{\pi_{\theta_1}}_{\alpha_1}(s,a)}\right| + \abs{\frac{1}{1-\gamma}\parat{ d_{\pi_{\theta_2}}(s)(Q^{\pi_{\theta_1}}_{\alpha_1}(s,a) -Q^{\pi_{\theta_2}}_{\alpha_1}(s,a))}}\nonumber\\
		&\quad\overset{(ii)}\le \frac{R_{max}}{(1-\gamma)^2} |d_{\pi_{\theta_1}}(s) - d_{\pi_{\theta_2}}(s)| + \frac{L_Q}{1-\gamma} d_{\pi_{\theta_2}}(s) \lt{\theta_1- \theta_2}, \label{eq:entrywisebound1}
		\end{align}
		where $(i)$ follows from the fact that $\frac{\partial F(\theta, \alpha_1)}{\partial \theta_{s,a}} = -\frac{\partial V(\pi_\theta, \alpha_1)}{\partial \theta_{s,a}} = -\frac{1}{1-\gamma}d_{\pi_{\theta}}(s)Q^{\pi_{\theta}}_{\alpha_1}(s,a)$, and $(ii)$ follows from \Cref{lemma:lipschitzq}.
		Then, we proceed as follows:
		\begin{align*}
		&\lt{\gdt F(\theta_1, \alpha_1) - \gdt F(\theta_2, \alpha_1)}\\
		&\quad = \sqrt{\sum_{s,a} \left|\left(\gdt F(\theta_1, \alpha_1) - \gdt F(\theta_2, \alpha_1)\right)_{s,a}\right|^2}\nonumber\\
		&\quad\overset{(i)}\le \sqrt{\sum_{s,a} \parat{\frac{R_{max}}{(1-\gamma)^2} \abs{d_{\pi_{\theta_1}}(s) - d_{\pi_{\theta_2}}(s)} + \frac{L_Q}{1-\gamma} d_{\pi_{\theta_2}}(s) \lt{\theta_1- \theta_2}}^2} \\
		&\quad\le\sqrt{2|\Ac|}\sqrt{\sum_{s} \parat{\frac{R_{max}}{(1-\gamma)^2} \abs{d_{\pi_{\theta_1}}(s) - d_{\pi_{\theta_2}}(s)}}^2} + \sqrt{2|\Ac|}\sqrt{\sum_{s} \parat{\frac{L_Q}{1-\gamma} d_{\pi_{\theta_2}}(s) \lt{\theta_1- \theta_2}}^2}\\ 
		&\quad\overset{(ii)}\le \sqrt{2|\Ac|} \parat{\sum_{s} \frac{R_{max}}{(1-\gamma)^2} \abs{d_{\pi_{\theta_1}}(s) - d_{\pi_{\theta_2}}(s)} + \sum_{s} \frac{L_Q}{1-\gamma} d_{\pi_{\theta_2}}(s) \lt{\theta_1- \theta_2}} \\
		&\quad \overset{(iii)}\le \frac{2\sqrt{2}|\Ac|C_rC_\alpha}{(1-\gamma)^2}\parat{1 + \ceiling{\log_{\rho} C_M^{-1}} + (1-\rho)^{-1}} \lt{\theta_1- \theta_2} ,
		\end{align*}
		where $(i)$ follows from \cref{eq:entrywisebound1}, $(ii)$ follows from the fact that $\norm{x}_{2} \le \norm{x}_{1}$, and $(iii)$ follows from  \Cref{lemma:lipchitznu} and from the facts $R_{max}\le C_rC_\alpha$ and $$\sum_{s\in\Sc} \abs{d_{\pi_{\theta_1}}(s) - d_{\pi_{\theta_2}}(s)} = 2\norm{d_{\pi_{\theta_1}} - d_{\pi_{\theta_2}}}_{TV} \le 2\norm{\nu_{\pi_{\theta_1}} - \nu_{\pi_{\theta_2}}}_{TV}.$$
		
		{\bf Upper-bounding $T_2$}: For any given state-action pair $(s,a)\in \Sc\times\Ac$, we have 
		\begin{align*}
		\left|\left(\gdt F(\theta_2, \alpha_1) - \gdt F(\theta_2, \alpha_2)\right)_{s,a}\right| &= \left|\frac{1}{1-\gamma}\left(d_{\pi_{\theta_2}}(s)Q^{\pi_{\theta_2}}_{\alpha_1}(s,a) - d_{\pi_{\theta_2}}(s)Q^{\pi_{\theta_2}}_{\alpha_2}(s,a)\right)\right|\\
		&\overset{(i)}= \frac{1}{1-\gamma}d_{\pi_{\theta_2}}(s)\abs{\frac{1}{1-\gamma} \sum_{\hat{s},\hat{a}} \nu_{\pi_{\theta_2},s,a}(\hat{s},\hat{a})(r_{\alpha_1}(\hat{s},\hat{a}) - r_{\alpha_2}(\hat{s},\hat{a}))} \\
		&\overset{(ii)}\le \frac{1}{(1-\gamma)^2} d_{\pi_{\theta_2}}(s) C_r \lt{\alpha_1 -\alpha_2},
		\end{align*}
		where in $(i)$ we denote $\nu_{\pi_{\theta_2},s,a}(\hat{s},\hat{a})$ as the visitation distribution of the Markov chain with initial distribution $\mathsf{P}(\cdot| s_0 =s, a_0 =a)$ and policy $\pi_{\theta_2}$, and $(ii)$ follows from the fact that $|r_{\alpha_1}(\hat{s},\hat{a}) - r_{\alpha_2}(\hat{s},\hat{a})| = |\inner{\gda r_{\alpha^\prime}(\hat{s},\hat{a})}{\alpha_1 -\alpha_2}|\le\lt{\gda r_{\alpha^\prime}(\hat{s},\hat{a})}\lt{\alpha_1 -\alpha_2}\le C_r\lt{\alpha_1 -\alpha_2}$, for some $\alpha^\prime \in [\alpha_1, \alpha_2]$. The inequality above implies that
		\begin{align*}
		\lt{\gdt F(\theta_2, \alpha_1) - \gdt F(\theta_2, \alpha_2)} &=  \sqrt{\sum_{s,a} \left|\left(\gdt F(\theta_2, \alpha_1) - \gdt F(\theta_2, \alpha_2)\right)_{s,a}\right|^2}\\
		&\le \sqrt{\sum_{s,a} \parat{\frac{1}{(1-\gamma)^2} d_{\pi_{\theta_2}}(s) C_r \lt{\alpha_1 -\alpha_2}}^2}\\
		&= \frac{\sqrt{|\Ac|}C_r}{(1-\gamma)^2}   \lt{\alpha_1 -\alpha_2}\sqrt{ \sum_s \parat{d_{\pi_{\theta_2}}(s)}^2}\\
		&\overset{(i)}\le \frac{\sqrt{|\Ac|}C_r}{(1-\gamma)^2}  \lt{\alpha_1 -\alpha_2},
		\end{align*}
		where $(i)$ follows from the fact that $\sqrt{\sum_s \parat{d_{\pi_{\theta_2}}(s)}^2} \le \norm{d_{\pi_{\theta_2}}}_1 =1$.
		
		Therefore we obtain the upper bound of \cref{eq:firstinequality} as follows:
		\begin{align*}
		&\lt{\gdt F(\theta_1, \alpha_1) - \gdt F(\theta_2, \alpha_2)}\\
		&\quad\le \frac{2\sqrt{2}|\Ac|C_rC_\alpha}{(1-\gamma)^2}\parat{1 + \ceiling{\log_{\rho} C_M^{-1}} + (1-\rho)^{-1}} \lt{\theta_1- \theta_2}  +  \frac{\sqrt{|\Ac|}C_r}{(1-\gamma)^2}  \lt{\alpha_1 -\alpha_2},
		\end{align*}
		which determines the constants $L_{11}$ and $L_{12}$.
		
		We then proceed to prove the second inequality in \Cref{lemma:lipschitzcondtion}. 
		\begin{align}
			&\lt{\gda F(\theta_1, \alpha_1) - \gda F(\theta_2, \alpha_2)} \nonumber\\
			&\quad \le \lt{\gda F(\theta_1, \alpha_1) - \gda F(\theta_2, \alpha_1) + \gda F(\theta_2, \alpha_1)  - \gda F(\theta_2, \alpha_2)} \nonumber\\
			&\quad \le \underbrace{\lt{\gda F(\theta_1, \alpha_1) - \gda F(\theta_2, \alpha_1)}}_{T_3} + \underbrace{\lt{\gda F(\theta_2, \alpha_1)  - \gda F(\theta_2, \alpha_2)}}_{T_4}. \label{eq:secondinequality}
		\end{align}
		Next, we upper-bound $T_3$ and $T_4$ in \cref{eq:secondinequality}, respectively.
		
		{\bf Upper-bounding $T_3$}: For any given $1\le i\le q$, we have
		\begin{align*}
		&|(\gda F(\theta_1, \alpha_1) - \gda F(\theta_2, \alpha_1))_{i}| \\
		&= |(\gda V(\pi_E, r_{\alpha_1}) - \gda V(\pi_{\theta_1}, r_{\alpha_1}) - \gda \psi(\alpha_1)  - (\gda V(\pi_E, r_{\alpha_1}) - \gda V(\pi_{\theta_2}, r_{\alpha_1}) - \gda \psi(\alpha_1)))_{i}| \\
		&= |(\gda V(\pi_{\theta_2}, r_{\alpha_1})-\gda V(\pi_{\theta_1}, r_{\alpha_1}))_{i}|\\
		&= \frac{1}{1-\gamma} \abs{\sum_{s,a} (\nu_{\pi_{\theta_1}}(s,a) - \nu_{\pi_{\theta_2}}(s,a))(\gda r_{\alpha_1})_i} \le \frac{\norm{\nu_{\pi_{\theta_1}} - \nu_{\pi_{\theta_2}}}_1 \norm{\frac{\partial r_\alpha}{\partial \alpha_{i}}}_\infty}{1-\gamma}\\
		&\overset{(i)}\le \frac{2C_{\nu}\lt{\theta_1- \theta_2} \norm{\frac{\partial r_\alpha}{\partial \alpha_{i}}}_\infty}{1-\gamma},
		\end{align*}
		where $(i)$ follows from \Cref{lemma:lipchitznu} and the fact that $\norm{p-q}_1 = 2\norm{p-q}_{TV}$. 
		The inequality above further implies that
		\begin{align*}
		\lt{\gda F(\theta_1, \alpha) - \gda F(\theta_2, \alpha)}&\le \frac{2C_{\nu}\lt{\theta_1- \theta_2}}{1-\gamma}\sqrt{\sum_{i=1}^q \norm{\frac{\partial r_\alpha}{\partial \alpha_{i}}}_\infty^2}\\
		&\le\frac{C_r\sqrt{|\Ac|}}{1-\gamma}\parat{1 + \ceiling{\log_{\rho} C_M^{-1}} + (1-\rho)^{-1}} \lt{\theta_1 - \theta_2}.
		\end{align*} 
		
	    {\bf Upper-bounding $T_4$}: We provide a proof for the general parameterization of policy, which includes the direct parameterization of policy as a special case and covers the  last claim of \Cref{lemma:lipschitzcondtion}. We proceed as follows:
		\begin{align*}
		&\lt{\gda F(\theta_2, \alpha_1) - \gda F(\theta_2, \alpha_2)} \nonumber\\
		&\le \lt{\gda V(\pi_E, r_{\alpha_1}) - \gda V(\pi_{\theta_2}, r_{\alpha_1}) - \gda \psi(\alpha_1)  - (\gda V(\pi_E, r_{\alpha_2}) - \gda V(\pi_{\theta_2}, r_{\alpha_2}) - \gda \psi(\alpha_2))}\nonumber \\
		&\le \resizebox{0.7\hsize}{!}{$\frac{1}{1-\gamma}\left( \lt{\int(\gda r_{\alpha_1} - \gda r_{\alpha_2})d\nu_{\pi_E}} +\lt{\int(\gda r_{\alpha_1} - \gda r_{\alpha_2})d\nu_{\pi_{\theta}}}\right)$}\nonumber\\
		&\qquad+ \lt{\gda \psi(\alpha_1) -\gda \psi(\alpha_2)}\\
		&=\resizebox{\hsize}{!}{$\frac{1}{1-\gamma}\left(\sqrt{\sum_{i=1}^q \left(\int(\gda r_{\alpha_1}(s,a) - \gda r_{\alpha_2}(s,a))_id\nu_{\pi_E}\right)^2} + \sqrt{\sum_{i=1}^q \left(\int(\gda r_{\alpha_1}(s,a) - \gda r_{\alpha_2}(s,a))_id\nu_{\pi_{\theta_2}}\right)^2}\right)$}\\
		&\quad+ \lt{\gda \psi(\alpha_1) -\gda \psi(\alpha_2)}\\
		&\overset{(i)}\le\left(\frac{2\sqrt{q}L_r}{1-\gamma} + L_\psi\right) \lt{\alpha_1- \alpha_2},
		\end{align*}
		where $(i)$ follows from \Cref{assp:regularizer} and further  because for any $(s,a)$ and $i$, we have $$|(\gda r_{\alpha_1}(s,a) - \gda r_{\alpha_2}(s,a))_i| \le \lt{\gda r_{\alpha_1}(s,a) - \gda r_{\alpha_2}(s,a)} \le L_r \lt{\alpha_1 - \alpha_2}.$$
		
		Therefore, we obtain the following upper bound in \cref{eq:secondinequality}
		\begin{align*}
		&\lt{\gda F(\theta_1, \alpha_1) - \gda F(\theta_2, \alpha_2)}\\
		&\quad\le \frac{C_r\sqrt{|\Ac|}}{1-\gamma}\parat{1 + \ceiling{\log_{\rho} C_M^{-1}} + (1-\rho)^{-1}} \lt{\theta_1 - \theta_2} + \left(\frac{2\sqrt{q}L_r}{1-\gamma} + L_\psi\right) \lt{\alpha_1- \alpha_2},
		\end{align*}
		which determines $L_{21}$ and $L_{22}$.
	\end{proof}

\section{Proof of \Cref{prop:PL-like}}
We define $\tp(\alpha) \defeq \argmin_{\theta \in\Theta_p} F(\theta, \alpha)$. If there exist multiple optimal points, then $\tp(\alpha)$ can be any optimal point.

We first provide a lemma, which characterizes the gradient dominance property for the function $F(\theta, \alpha)$ with a fixed reward parameter $\alpha$. 
\begin{lemma}\label{lemma:rlgradientdomination}(\cite[Lemma 4.1]{agarwal2019optimality})
	For any given $\alpha\in \Lambda$, $F(\theta, \alpha)$ defined in \cref{eq:minmax} with direct parameterization satisfies,
	\begin{align*}
	F(\theta, \alpha) - F(\theta_{op}(\alpha), \alpha) \le C_d \max_{\tilde{\theta}\in\Theta_p} \inner{\theta-\tilde{\theta}}{\gdt F(\theta, \alpha)},
	\end{align*}
	where $C_d$ = $\frac{1}{(1-\gamma)\min_s\left\{\zeta(s)\right\}}$.
\end{lemma}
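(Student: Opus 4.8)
The plan is to recognize this as the standard policy-gradient domination bound for the direct parameterization and to reproduce the proof of \cite[Lemma 4.1]{agarwal2019optimality} specialized to our $F(\theta,\alpha)$. Since $V(\pi_E, r_\alpha)$ and $\psi(\alpha)$ do not depend on $\theta$, we have $F(\theta,\alpha) - F(\tp(\alpha),\alpha) = V(\pi_{\tp(\alpha)}, r_\alpha) - V(\pi_\theta, r_\alpha)$, so the left-hand side is exactly the policy suboptimality gap for the RL problem of maximizing the value under reward $r_\alpha$, with $\pi_{\tp(\alpha)}$ an optimal policy. First I would invoke the performance difference lemma to write
\[
V(\pi_{\tp(\alpha)}, r_\alpha) - V(\pi_\theta, r_\alpha) = \frac{1}{1-\gamma}\sum_s d_{\pi_{\tp(\alpha)}}(s)\sum_a \pi_{\tp(\alpha)}(a|s)\, A^{\pi_\theta}_\alpha(s,a),
\]
where $A^{\pi_\theta}_\alpha(s,a) = Q^{\pi_\theta}_\alpha(s,a) - V^{\pi_\theta}_\alpha(s)$ is the advantage function.

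Next I would compute the right-hand side of the claimed inequality explicitly. Using the direct-parameterization identity $\frac{\partial F(\theta,\alpha)}{\partial \theta_{s,a}} = -\frac{1}{1-\gamma} d_{\pi_\theta}(s) Q^{\pi_\theta}_\alpha(s,a)$ (already used in the proof of \Cref{lemma:lipschitzcondtion}) and noting that $\Theta_p$ decouples into a per-state simplex constraint, the inner maximization $\max_{\tilde\theta\in\Theta_p}\inner{\theta-\tilde\theta}{\gdt F(\theta,\alpha)}$ is solved statewise by placing $\tilde\theta(\cdot|s)$ on the greedy action $\arg\max_a Q^{\pi_\theta}_\alpha(s,a)$. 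Substituting and using $\sum_a \pi_\theta(a|s) Q^{\pi_\theta}_\alpha(s,a) = V^{\pi_\theta}_\alpha(s)$ yields
\[
\max_{\tilde\theta\in\Theta_p}\inner{\theta-\tilde\theta}{\gdt F(\theta,\alpha)} = \frac{1}{1-\gamma}\sum_s d_{\pi_\theta}(s)\max_a A^{\pi_\theta}_\alpha(s,a).
\]

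The last step is to connect the two displays by a change of measure. I would bound $\sum_a \pi_{\tp(\alpha)}(a|s) A^{\pi_\theta}_\alpha(s,a)\le \max_a A^{\pi_\theta}_\alpha(s,a)$ in the first display and then multiply and divide each summand by $d_{\pi_\theta}(s)$. The crucial observation is that $\max_a A^{\pi_\theta}_\alpha(s,a)\ge 0$ for every $s$, since the advantage averages to zero under $\pi_\theta$; hence all terms are nonnegative and the factor $d_{\pi_{\tp(\alpha)}}(s)/d_{\pi_\theta}(s)$ may be pulled out as its supremum without flipping any inequality:
\begin{align*}
F(\theta,\alpha) - F(\tp(\alpha),\alpha) &\le \norm{\frac{d_{\pi_{\tp(\alpha)}}}{d_{\pi_\theta}}}_\infty\cdot\frac{1}{1-\gamma}\sum_s d_{\pi_\theta}(s)\max_a A^{\pi_\theta}_\alpha(s,a) \\
&= \norm{\frac{d_{\pi_{\tp(\alpha)}}}{d_{\pi_\theta}}}_\infty \max_{\tilde\theta\in\Theta_p}\inner{\theta-\tilde\theta}{\gdt F(\theta,\alpha)}.
\end{align*}
Finally I would bound the distribution mismatch coefficient using $d_{\pi_\theta}(s)\ge (1-\gamma)\zeta(s)$ (which holds because the visitation distribution places at least weight $1-\gamma$ on the initial step) together with $d_{\pi_{\tp(\alpha)}}(s)\le 1$, giving $\norm{d_{\pi_{\tp(\alpha)}}/d_{\pi_\theta}}_\infty\le \frac{1}{(1-\gamma)\min_s\zeta(s)} = C_d$, exactly the claimed constant.

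I expect the main obstacle to be the sign bookkeeping in the change-of-measure step: the argument only works because $\max_a A^{\pi_\theta}_\alpha(s,a)\ge 0$ everywhere, which is what licenses replacing $d_{\pi_{\tp(\alpha)}}$ by $d_{\pi_\theta}$ at the cost of the supremum ratio; without this nonnegativity the bound would fail. A secondary point to handle carefully is verifying the statewise greedy solution of the inner maximization over the product of simplices $\Theta_p$ and the lower bound $d_{\pi_\theta}(s)\ge (1-\gamma)\zeta(s)$ uniformly in $\theta$.
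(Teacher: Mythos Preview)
Your proposal is correct and is exactly the argument of \cite[Lemma 4.1]{agarwal2019optimality}, which the paper simply cites without reproving. The paper does not give its own proof of this lemma, so your write-up faithfully supplies the standard proof (performance difference lemma, statewise greedy maximization over $\Theta_p$, nonnegativity of $\max_a A^{\pi_\theta}_\alpha(s,a)$, and the distribution-mismatch bound $d_{\pi_\theta}(s)\ge(1-\gamma)\zeta(s)$) specialized to $F(\theta,\alpha)$.
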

We then provide the proof of \Cref{prop:PL-like}.
\begin{proof}[Proof of \Cref{prop:PL-like}]
	We proceed as follows:
	\begin{align*}
	g(\theta) - g(\theta^*) & = F(\theta, \ap(\theta)) - F(\theta^*, \ap({\theta^*}))\\
	&=F(\theta, \ap(\theta)) - F(\tp(\ap(\theta)), \ap(\theta)) +  F(\tp(\ap(\theta)), \ap(\theta)) -F(\theta^*, \ap(\theta^*))\\
	&\overset{(i)}\le F(\theta, \ap(\theta)) - F(\tp(\ap(\theta)), \ap(\theta)) \\
	&\overset{(ii)}\le C_d \max_{\bar{\theta}\in \Theta_p} \inner{\theta -\bar{\theta}}{\gd_\theta F(\theta, \ap(\theta))} \\
	&\overset{(iii)}= C_d \max_{\bar{\theta} \in \Theta_p} \inner{\theta -\bar{\theta}}{\gd g(\theta)},
	\end{align*}
	where  $(i)$ follows from the fact that 
	\begin{align*}
	&F(\tp(\ap(\theta)), \ap(\theta)) - F(\theta^*,\ap(\theta^*))\\ 
	&=  \underbrace{F(\tp(\ap(\theta)), \ap(\theta)) - F(\theta^*, \ap(\theta))}_{\leq 0} +  \underbrace{F(\theta^*, \ap(\theta))   -F(\theta^*, \ap(\theta^*))}_{\leq 0}\leq 0,
	\end{align*}
	$(ii)$ follows from \Cref{lemma:rlgradientdomination}, and $(iii)$ follows because $\gd g(\theta) = \gd_\theta F(\theta, \alpha)|_{\alpha=\ap(\theta)}$.
\end{proof}

\section{Supporting Lemmas for GAIL Framework}

In this section, we establish two supporting lemmas that are useful for the proof of our main theorems. 


\begin{lemma}\label{lemma:noniidsample}
	Suppose \Cref{aspt:ergodic} holds. Consider the gradient approximation in the nested-loop GAIL framework (\Cref{alg:nestedloopIL}). For any $k$ and $t$, $0\le k \le K-1$ and $0 \le t \le T-1$, we have
	\begin{align*}
	\eptt\left[\norm{\sgda F(\theta_{t}, \alpha_{k}^t)  - \gda F(\theta_{t}, \alpha_{k}^t)}_2^2\right] \le \frac{16C_r^2}{1-\gamma}\left(1+\frac{C_M}{1-\rho}\right)\frac{1}{B}.
	\end{align*}
\end{lemma}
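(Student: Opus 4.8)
The plan is to bound the mean-squared error of the stochastic gradient $\sgda F(\theta_t, \alpha_k^t)$ by splitting it into a bias term (coming from the fact that the length-$B$ trajectory is drawn from a Markov chain that has not yet mixed to its stationary distribution $\nu_{\pi_\theta}$) and a variance term (coming from averaging $B$ correlated samples). Recall that $\gda F(\theta, \alpha) = \frac{1}{1-\gamma}\big(\eptt_{\nu_{\pi_E}}[\gda r_\alpha] - \eptt_{\nu_{\pi_\theta}}[\gda r_\alpha]\big) - \gda\psi(\alpha)$, while $\sgda F(\theta, \alpha) = \frac{1}{(1-\gamma)B}\sum_{i=0}^{B-1}\big(\gda r_\alpha(s_i^E, a_i^E) - \gda r_\alpha(s_i^\theta, a_i^\theta)\big) - \gda\psi(\alpha)$. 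Since the $\gda\psi(\alpha)$ terms cancel exactly, the error is $\frac{1}{(1-\gamma)B}\sum_i \big[ (\gda r_\alpha(s_i^E,a_i^E) - \eptt_{\nu_{\pi_E}}[\gda r_\alpha]) - (\gda r_\alpha(s_i^\theta,a_i^\theta) - \eptt_{\nu_{\pi_\theta}}[\gda r_\alpha])\big]$. Using $\norm{x+y}^2 \le 2\norm{x}^2 + 2\norm{y}^2$, it suffices to bound the expert part and the learner part separately; each has the same form, so I will focus on one, say the learner's chain with stationary distribution $\nu_{\pi_{\theta_t}}$.

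For a single Markov chain term, write $X_i \defeq \gda r_\alpha(s_i^\theta, a_i^\theta) - \eptt_{\nu_{\pi_{\theta_t}}}[\gda r_\alpha]$, so $\norm{X_i}_2 \le 2C_r$ by \Cref{assp:rewardassumption}(1) (note $\norm{\gda r_\alpha}_2 \le \norm{\gda r_\alpha}_{\infty,2} \le C_r$ and centering doubles the bound, or more carefully $\norm{X_i}_2\le 2C_r$). Then $\eptt\big[\norm{\frac{1}{B}\sum_i X_i}_2^2\big] = \frac{1}{B^2}\sum_{i,j}\eptt[\inner{X_i}{X_j}]$. For $i=j$ this contributes $\frac{1}{B^2}\cdot B \cdot (2C_r)^2 = \frac{4C_r^2}{B}$. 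For $i\neq j$, say $i<j$, I would condition on the state at time $i$ and use the ergodicity bound in \Cref{aspt:ergodic}: the conditional distribution of $(s_j,a_j)$ given $(s_i,a_i)$ is within total variation $C_M\rho^{j-i}$ of $\nu_{\pi_{\theta_t}}$, and $X_j$ has conditional mean within $2C_r\cdot 2\cdot C_M\rho^{j-i}$ of zero in $\ell_2$-norm (bounded function times TV distance, with a factor $2$ relating TV to $\ell_1$). Hence $|\eptt[\inner{X_i}{X_j}]| \le \norm{X_i}_2 \cdot 2C_r \cdot 2C_M\rho^{j-i} \le 8C_r^2 C_M \rho^{j-i}$. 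Summing the off-diagonal terms: $\frac{1}{B^2}\sum_{i<j} 8C_r^2 C_M\rho^{j-i} \cdot 2 \le \frac{16C_r^2 C_M}{B^2}\sum_i \sum_{\ell\ge 1}\rho^\ell \le \frac{16 C_r^2 C_M}{B(1-\rho)}$. Combining, one chain's error is at most $\frac{1}{B}\big(4C_r^2 + \frac{16 C_r^2 C_M}{1-\rho}\big) \le \frac{C}{B}\big(1 + \frac{C_M}{1-\rho}\big)$ for an appropriate constant $C$; accounting for the $\frac{1}{1-\gamma}$ prefactor (which appears as $\frac{1}{(1-\gamma)^2}$ after squaring, but the statement has only $\frac{1}{1-\gamma}$, so I expect the bound should actually be read with the prefactor absorbed or the stated constant is loose — I will track constants carefully to match $\frac{16 C_r^2}{1-\gamma}(1+\frac{C_M}{1-\rho})\frac{1}{B}$, likely using $\frac{1}{(1-\gamma)^2}\le \frac{1}{1-\gamma}$ is false, so more plausibly the intended prefactor exponent is absorbed into how $C_r$ scales, and I will present the clean version with the constant as stated).

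The main obstacle will be getting the correlation (off-diagonal) sum right: one must carefully invoke the uniform ergodicity in \Cref{aspt:ergodic} — which is stated for the state marginal $\mP(s_t\in\cdot\,|\,s_0=s)$ — and extend it to the state-action pair and to the conditional expectation of the centered summand, keeping the constants ($C_M$, the geometric series $\frac{1}{1-\rho}$, and the factor-of-$2$ between TV and $\ell_1$) consistent so that the final bound matches $\frac{16 C_r^2}{1-\gamma}(1+\frac{C_M}{1-\rho})\frac{1}{B}$ exactly. A secondary subtlety is that the expert and learner sub-trajectories are drawn from the same single path (per the footnote in \Cref{alg:nestedloopIL}), so one should be slightly careful that the cross terms between the expert-error and learner-error do not spoil the $2\norm{\cdot}^2 + 2\norm{\cdot}^2$ split — but since we only need an upper bound via the elementary inequality $\norm{x-y}^2\le 2\norm{x}^2+2\norm{y}^2$, no independence between the two is actually required, and each of the two resulting terms is handled by the single-chain argument above.
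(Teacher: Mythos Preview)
Your proposal is correct and follows essentially the same route as the paper: split the error into the expert-chain and learner-chain contributions via $\norm{x-y}^2\le 2\norm{x}^2+2\norm{y}^2$, expand each into diagonal and off-diagonal terms, bound the diagonal by $4C_r^2$ per term, and bound each cross term by conditioning on the earlier index and invoking the uniform-ergodicity decay in \Cref{aspt:ergodic} together with the $\norm{\gda r_\alpha}_{\infty,2}\le C_r$ bound. The paper's proof carries out exactly these steps; the only cosmetic difference is that it bounds the TV distance by $C_M\rho^{\,j-i-1}$ (by first integrating over the one-step transition out of $(s_i,a_i)$) rather than $\rho^{\,j-i}$, which tightens the constant slightly.

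Your suspicion about the $(1-\gamma)$ exponent is also well-founded: the paper's own derivation in fact concludes with $\frac{16C_r^2}{(1-\gamma)^2}\big(1+\frac{C_M}{1-\rho}\big)\frac{1}{B}$, so the $\frac{1}{1-\gamma}$ in the displayed statement appears to be a typo for $\frac{1}{(1-\gamma)^2}$, and there is no hidden trick you are missing.
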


\begin{proof}[Proof of \Cref{lemma:noniidsample}] 

We denote $d_{\pi}(s) \defeq (1-\gamma)\sum_{t=0}^\infty \gamma^t \prob\left\{s_t = s\right\}$ as the state visitation distribution of the Markov chain with initial distribution $\zeta(\cdot)$, transition kernel $\mathsf{P}(\cdot|s,a)$ and policy $\pi$. Both trajectories $(s_0^E, a_0^E, s_1^E, a_1^E, \cdots, s_i^E, a_i^E)$ and $(s_0^\theta, a_0^\theta, s_1^\theta ,a_1^\theta, \cdots, s_i^E, a_i^E)$ are sampled under the transition kernel $\tilde{\mathsf{P}}(\cdot|s,a) = \gamma\mathsf{P}(\cdot|s,a) + (1-\gamma)\zeta(\cdot)$. Recall that it has been shown in \cite{konda2002actor} that the stationary distribution of the Markov chain with transition kernel and policy $\pi$ is $d_\pi$.

By definition,  we have,
\begin{align}
	&\eptt\left[\norm{\sgda F(\theta_{t}, \alpha_{k}^t)  - \gda F(\theta_{t}, \alpha_{k}^t)}_2^2\right]\nonumber\\
	&\quad =\resizebox{.95\hsize}{!}{$\eptt\left[\norm{\frac{1}{(1-\gamma)B} \left(\sum_{i=0}^{B-1} \nabla_{\alpha_{k}^t} r_{\alpha_k^t} (s_i^E, a_i^E) - \nabla_{\alpha_{k}^t} r_{\alpha_k^t} (s_i^{\theta}, a_i^{\theta})\right)   - \frac{1}{1-\gamma}\left(\eptt_{(s,a) \sim\nu_{\pi_{E}}}\left[\nabla_{\alpha_{k}^t} r_{\alpha_k^t}(s,a)\right] - \eptt_{(s,a) \sim\nu_{\pi_{\theta_{t}}}}\left[\nabla_{\alpha_{k}^t} r_{\alpha_k^t}(s,a)\right]\right)}_2^2 \right]$}\nonumber\\
	&\quad \le\resizebox{.65\hsize}{!}{$ \frac{2}{(1-\gamma)^2B^2}\underbrace{\eptt\left[\norm{\sum_{i=0}^{B-1}\left(\nabla_{\alpha_{k}^t} r_{\alpha_k^t} (s_i^E, a_i^E)  -  \eptt_{(s,a) \sim\nu_{\pi_{E}}}\left[\nabla_{\alpha_{k}^t} r_{\alpha_k^t}(s,a)\right]\right)}_2^2\right]}_{T_1}$}\nonumber\\
	&\qquad + \resizebox{.65\hsize}{!}{$ \frac{2}{(1-\gamma)^2B^2}\underbrace{\eptt\left[\norm{\sum_{i=0}^{B-1}\left(\nabla_{\alpha_{k}^t} r_{\alpha_k^t} (s_i^{\theta}, a_i^{\theta}) - \eptt_{(s,a) \sim\nu_{\pi_{\theta_{t}}}}\left[\nabla_{\alpha_{k}^t} r_{\alpha_k^t}(s,a)\right] \right)}_2^2\right]}_{T_2} $}. \label{eq:noniid}	
\end{align}
We first provide an upper bound on the term $T_1$ in \cref{eq:noniid}, and proceed as follows:
\begin{align}
T_1&=\resizebox{.50\hsize}{!}{$\sum_{i=0}^{B-1} \eptt\norm{ \nabla_{\alpha_{k}^t} r_{\alpha_k^t} (s_i^E, a_i^E)  -  \eptt_{(s,a) \sim\nu_{\pi_{E}}}\left[\nabla_{\alpha_{k}^t} r_{\alpha_k^t}(s,a)\right]}_2^2$} \nonumber\\
&\quad + \resizebox{.9\hsize}{!}{$\sum_{i\neq j}\eptt \inner{\nabla_{\alpha_{k}^t} r_{\alpha_k^t} (s_i^E, a_i^E)  -  \eptt_{(s,a) \sim\nu_{\pi_{E}}}\left[\nabla_{\alpha_{k}^t} r_{\alpha_k^t}(s,a)\right]}{\nabla_{\alpha_{k}^t} r_{\alpha_k^t} (s_j^E, a_j^E)  -  \eptt_{(s,a) \sim\nu_{\pi_{E}}}\left[\nabla_{\alpha_{k}^t} r_{\alpha_k^t}(s,a)\right]}$}\nonumber\\
& \le \resizebox{.93\hsize}{!}{$4BC_r^2 + \sum_{i\neq j} \eptt \inner{\nabla_{\alpha_{k}^t} r_{\alpha_k^t} (s_i^E, a_i^E)  -  \eptt_{(s,a) \sim\nu_{\pi_{E}}}\left[\nabla_{\alpha_{k}^t} r_{\alpha_k^t}(s,a)\right]}{\nabla_{\alpha_{k}^t} r_{\alpha_k^t} (s_j^E, a_j^E)  -  \eptt_{(s,a) \sim\nu_{\pi_{E}}}\left[\nabla_{\alpha_{k}^t} r_{\alpha_k^t}(s,a)\right]}$}\label{eq:middleboundinga}
\end{align}
Define the filtration $\mathcal{F}_i = \sigma(s_0^E, a_0^E, s_1^E, a_1^E, \cdots, s_i^E, a_i^E)$. We continue to bound the second term in \cref{eq:middleboundinga} as follows:
\begin{align}
	&\resizebox{.85\hsize}{!}{$\eptt\left[\inner{\nabla_{\alpha_{k}^t} r_{\alpha_k^t} (s_i^E, a_i^E)  -  \eptt_{(s,a) \sim\nu_{\pi_{E}}}\left[\nabla_{\alpha_{k}^t} r_{\alpha_k^t}(s,a)\right]}{\nabla_{\alpha_{k}^t} r_{\alpha_k^t} (s_j^E, a_j^E)  -  \eptt_{(s,a) \sim\nu_{\pi_{E}}}\left[\nabla_{\alpha_{k}^t} r_{\alpha_k^t}(s,a)\right]}\right]$}  \nonumber\\
	&\quad= \resizebox{.9\hsize}{!}{$\eptt\left[\eptt\left[\inner{\nabla_{\alpha_{k}^t} r_{\alpha_k^t} (s_i^E, a_i^E)  -  \eptt_{(s,a) \sim\nu_{\pi_{E}}}\left[\nabla_{\alpha_{k}^t} r_{\alpha_k^t}(s,a)\right]}{\nabla_{\alpha_{k}^t} r_{\alpha_k^t} (s_j^E, a_j^E)  -  \eptt_{(s,a) \sim\nu_{\pi_{E}}}\left[\nabla_{\alpha_{k}^t} r_{\alpha_k^t}(s,a)\right]}\middle| \mathcal{F}_i\right]\right]$}\nonumber\\
	&\quad= \resizebox{.9\hsize}{!}{$\eptt\left[\inner{\nabla_{\alpha_{k}^t} r_{\alpha_k^t} (s_i^E, a_i^E)  -  \eptt_{(s,a) \sim\nu_{\pi_{E}}}\left[\nabla_{\alpha_{k}^t} r_{\alpha_k^t}(s,a)\right]}{\eptt\left[\nabla_{\alpha_{k}^t} r_{\alpha_k^t} (s_j^E, a_j^E)\middle| \mathcal{F}_i\right]  -  \eptt_{(s,a) \sim\nu_{\pi_{E}}}\left[\nabla_{\alpha_{k}^t} r_{\alpha_k^t}(s,a)\right]}\right]$}\nonumber\\
	&\quad\le \resizebox{.9\hsize}{!}{$\eptt\left[\norm{\nabla_{\alpha_{k}^t} r_{\alpha_k^t} (s_i^E, a_i^E)  -  \eptt_{(s,a) \sim\nu_{\pi_{E}}}\left[\nabla_{\alpha_{k}^t} r_{\alpha_k^t}(s,a)\right]}_2\norm{\eptt\left[\nabla_{\alpha_{k}^t} r_{\alpha_k^t} (s_j^E, a_j^E)\middle|\mathcal{F}_i\right]  -  \eptt_{(s,a) \sim\nu_{\pi_{E}}}\left[\nabla_{\alpha_{k}^t} r_{\alpha_k^t}(s,a)\right]}_2\right]$}\nonumber\\
	&\quad\le \resizebox{.6\hsize}{!}{$2C_r \eptt\left[\norm{\eptt\left[\nabla_{\alpha_{k}^t} r_{\alpha_k^t} (s_j^E, a_j^E)\middle| \mathcal{F}_i\right]  -  \eptt_{(s,a) \sim\nu_{\pi_{E}}}\left[\nabla_{\alpha_{k}^t} r_{\alpha_k^t}(s,a)\right]}_2\right]$}\nonumber\\
	&\quad = \resizebox{.8\hsize}{!}{$2C_r\eptt\norm{\int_{s\sim \mathbb{P}(s_j\in\cdot|s_i^E, a_i^E), a\sim\pi_{E}(\cdot|s)} \nabla_{\alpha_{k}^t} r_{\alpha_{k}^t}(s,a)dsda -\int_{s\sim\chi_\theta, a\sim\pi_{E}(\cdot|s)}\nabla_{\alpha_{k}^t} r_{\alpha_{k}^t}(s,a)dsda}_2$} \nonumber\\
    &\quad = \resizebox{.85\hsize}{!}{$2C_r\eptt\sqrt{\sum_{l=1}^{q}\left(\int_{s\sim \mathbb{P}(s_j\in\cdot|s_i^E, a_i^E), a\sim\pi_{E}(\cdot|s)} \frac{\partial r_{\alpha}}{\partial \alpha_l}|_{\alpha=\alpha_k^t}(s,a) dsda -\int_{s\sim\chi_\theta, a\sim\pi_{E}(\cdot|s)}\frac{\partial r_{\alpha}}{\partial \alpha_l}|_{\alpha=\alpha_k^t}(s,a)dsda\right)^2}$}\nonumber\\
    &\quad \overset{(i)}\le 2C_r\eptt\sqrt{\sum_{l=1}^{q}\left(\norm{\frac{\partial r_\alpha}{\partial \alpha_i}}_\infty \dtv \left(\mathbb{P}(s_j\in\cdot|s_i =s_i^E, a_i = a_i^E), \chi_{\pi_{E}}\pi_E\right)\right)^2},\label{eq:middleboundinga2}
\end{align}
where $(i)$ follows from the fact that $|\int fd\mu - \int fd\nu|\le \norm{f}_\infty\dtv(\mu, \nu)$. We next derive a bound on the total variation distance in the above equation as follows.
\begin{align}
	&\dtv \left(\mathbb{P}(s_j\in\cdot, a_j \in\cdot|s_i =s_i^E, a_i = a_i^E), \chi_{\pi_E} \pi_E\right)\nonumber \\
	&\quad = \dtv \left(\mathbb{P}(s_j\in\cdot|s_i =s_i^E, a_i = a_i^E), \chi_{\pi_{E}}\right)\nonumber\\
	&\quad = \dtv \left(\int_{s}\mathbb{P}(s_j\in\cdot|s_{i+1} =s)d\tilde{\mathsf{P}}(s|s_i =s_i^E, a_i = a_i^E), \chi_{\pi_{E}}\right)\nonumber\\
	&\quad \le \int_{s} \dtv \left(\mathbb{P}(s_j\in\cdot|s_{i+1} =s), \chi_{\pi_{E}}\right)d\tilde{\mathsf{P}}(s|s_i =s_i^E, a_i = a_i^E)\nonumber\\
	&\quad \overset{(i)}\le \int_{s} C_M\rho^{j-i-1}d\tilde{\mathsf{P}}(s|s_i =s_i^E, a_i = a_i^E) = C_M\rho^{j-i-1}, \label{eq:middleboundinga3}
\end{align} 
where $(i)$ follows from \Cref{aspt:ergodic}.
Substituting \cref{eq:middleboundinga3} into \cref{eq:middleboundinga2} and then further into \cref{eq:middleboundinga} yields the following upper-bound on $T_1$ 
\begin{align}
T_1&\le  4BC_r^2 + 2\sum_{i=0}^{B-2}\sum_{j= i+1}^{B-1} 2C_MC_r^2\rho^{j-i-1}\le 4BC_r^2 (1+\frac{C_M}{1-\rho}). \label{eq:boundinga}
\end{align}
By following steps similar to those from \cref{eq:middleboundinga,eq:middleboundinga2,eq:middleboundinga3,eq:boundinga}, we can show that
$$T_2\le4BC_r^2 (1+\frac{C_M}{1-\rho}).$$ Therefore, we have
\begin{align*}
	\eptt\left[\norm{\sgda F(\theta_{t}, \alpha_{k}^t)  - \gda F(\theta_{t}, \alpha_{k}^t)}_2^2\right] \le \frac{16C_r^2}{(1-\gamma)^2}\left(1+\frac{C_M}{1-\rho}\right)\frac{1}{B}.
\end{align*}
\end{proof}

%
%

\begin{lemma}\label{thm:alphaupdate}
Suppose \Cref{aspt:ergodic,assp:rewardassumption} hold. Consider \Cref{alg:nestedloopIL} with $\alpha$-update stepsize $\beta = \frac{\mu}{4L_{22}^2}$. For any $0\le t\le T-1$, we have 
 $$\eptt\left[\norm{\alpha_{K}^t -\alpha_{op}(\theta_{t})}_2^2\right]  \le C_\alpha^2 e^{-\frac{\mu^2}{8L_{22}^2} K} + \frac{48C_r^2}{\mu^2(1-\gamma)^2}(1+\frac{ C_M}{1-\rho})\frac{1}{B}.$$ 
 Let $K \ge \frac{8L_{22}^2}{\mu^2}\log \frac{2C_\alpha^2}{\Delta_\alpha}$ and $B\ge \frac{96C_r^2}{\mu^2(1-\gamma)^2}\left(1 +\frac{C_M}{1-\rho}\right)\frac{1}{\Delta_\alpha}$, we have $\eptt\left[\norm{\alpha_{K}^t -\alpha_{op}(\theta_{t})}_2^2\right]\le \Delta_\alpha$. The expected total computational complexity is given by $$KB = \bigO{\frac{1}{(1-\gamma)^2\Delta_\alpha}\log\left(\frac{1}{\Delta_\alpha}\right)}.$$
\end{lemma}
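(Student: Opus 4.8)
The plan is to analyze the inner loop as projected stochastic gradient ascent on the $\mu$-strongly concave function $F(\theta_t, \cdot)$ with an $L_{22}$-Lipschitz gradient, and to track the expected squared distance $\eptt[\norm{\alpha_k^t - \ap(\theta_t)}_2^2]$ as a contraction-plus-noise recursion. Throughout, $\theta = \theta_t$ is fixed during the inner loop, so $\ap(\theta_t)$ is a fixed maximizer. First I would write the one-step update $\alpha_{k+1}^t = P_\Lambda(\alpha_k^t + \beta \sgda F(\theta_t, \alpha_k^t))$ and, using nonexpansiveness of the Euclidean projection $P_\Lambda$ onto the convex set $\Lambda$ together with the fact that $\ap(\theta_t) = P_\Lambda(\ap(\theta_t))$, bound $\norm{\alpha_{k+1}^t - \ap(\theta_t)}_2^2 \le \norm{\alpha_k^t + \beta\sgda F(\theta_t,\alpha_k^t) - \ap(\theta_t)}_2^2$. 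Then I would split the stochastic gradient as $\sgda F = \gda F + e_k$ where $e_k \defeq \sgda F(\theta_t,\alpha_k^t) - \gda F(\theta_t,\alpha_k^t)$, expand the square, and take conditional expectation. The cross term with $e_k$ does not vanish because the Markovian sample mean is biased; however, using $\eptt[\norm{e_k}_2^2] \le \frac{16 C_r^2}{(1-\gamma)^2}(1 + \frac{C_M}{1-\rho})\frac{1}{B} =: \sigma^2/B$ from Lemma~\ref{lemma:noniidsample} together with Young's inequality $2\inner{a}{b} \le \epsilon\norm{a}_2^2 + \epsilon^{-1}\norm{b}_2^2$, I can absorb the cross term and one copy of $\norm{e_k}_2^2$ into the deterministic contraction with a loss of at most a constant factor in the contraction rate and an $\bigO{\sigma^2/B}$ additive term.

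The deterministic part is the standard strongly-convex-smooth descent inequality: for projected gradient ascent on a $\mu$-strongly concave, $L_{22}$-gradient-Lipschitz function with stepsize $\beta = \frac{\mu}{4L_{22}^2} \le \frac{1}{L_{22}}$, one has
\begin{align*}
\norm{\alpha_k^t + \beta\gda F(\theta_t,\alpha_k^t) - \ap(\theta_t)}_2^2 \le (1 - \beta\mu)\norm{\alpha_k^t - \ap(\theta_t)}_2^2,
\end{align*}
which follows from co-coercivity of the gradient of a strongly concave smooth function applied with the optimality condition $\inner{\gda F(\theta_t, \ap(\theta_t))}{\alpha - \ap(\theta_t)} \le 0$ for all $\alpha \in \Lambda$ (a projected-gradient variational inequality). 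With $\beta\mu = \frac{\mu^2}{4L_{22}^2}$, after folding in the noise terms I expect a recursion of the form
\begin{align*}
\eptt\left[\norm{\alpha_{k+1}^t - \ap(\theta_t)}_2^2\right] \le \left(1 - \frac{\mu^2}{8L_{22}^2}\right)\eptt\left[\norm{\alpha_{k}^t - \ap(\theta_t)}_2^2\right] + \frac{C\,\sigma^2}{B}
\end{align*}
for an absolute constant $C$. Unrolling this geometric recursion from $k = K$ down to $k = 0$, using $\norm{\alpha_0^t - \ap(\theta_t)}_2^2 \le C_\alpha^2$ (boundedness of $\Lambda$), and bounding the geometric series $\sum_{j\ge 0}(1 - \frac{\mu^2}{8L_{22}^2})^j \le \frac{8L_{22}^2}{\mu^2}$ gives
\begin{align*}
\eptt\left[\norm{\alpha_{K}^t - \ap(\theta_t)}_2^2\right] \le C_\alpha^2\left(1 - \frac{\mu^2}{8L_{22}^2}\right)^K + \frac{8L_{22}^2}{\mu^2}\cdot\frac{C\sigma^2}{B} \le C_\alpha^2 e^{-\frac{\mu^2}{8L_{22}^2}K} + \frac{48 C_r^2}{\mu^2(1-\gamma)^2}\left(1 + \frac{C_M}{1-\rho}\right)\frac{1}{B},
\end{align*}
where the last step uses $1 - x \le e^{-x}$ and tracks the constant carefully so that $\frac{8L_{22}^2}{\mu^2}\cdot C\sigma^2 = \frac{48 C_r^2}{\mu^2(1-\gamma)^2}(1+\frac{C_M}{1-\rho})$; this fixes $C$. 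The stated sufficient conditions on $K$ and $B$ then follow by requiring each of the two terms to be at most $\Delta_\alpha/2$: the first gives $K \ge \frac{8L_{22}^2}{\mu^2}\log\frac{2C_\alpha^2}{\Delta_\alpha}$, and the second gives $B \ge \frac{96 C_r^2}{\mu^2(1-\gamma)^2}(1+\frac{C_M}{1-\rho})\frac{1}{\Delta_\alpha}$. Finally, the total complexity $KB$ is the product of these two choices, which is $\bigO{\frac{1}{(1-\gamma)^2 \Delta_\alpha}\log(\frac{1}{\Delta_\alpha})}$ as claimed.

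The main obstacle is handling the cross term $\eptt[\inner{\alpha_k^t - \ap(\theta_t)}{e_k}]$: unlike the i.i.d.\ case, $e_k$ is not conditionally mean-zero because the Markovian minibatch estimator of $\gda F$ is biased, so I cannot simply drop it. The resolution is that I never need it to vanish — bounding it via Young's inequality against the strongly-concave contraction margin (which is available precisely because $\beta\mu > 0$) suffices, at the cost of halving the effective contraction rate from $\frac{\mu^2}{4L_{22}^2}$ to $\frac{\mu^2}{8L_{22}^2}$. A secondary point requiring care is that the second-moment bound in Lemma~\ref{lemma:noniidsample} as stated has $(1-\gamma)^2$ in the denominator (its proof concludes with $(1-\gamma)^2$, though the statement line reads $(1-\gamma)$); I would use the $(1-\gamma)^2$ version, which is what makes the final constant $\frac{48C_r^2}{\mu^2(1-\gamma)^2}$ come out correctly, and I would note this consistency when invoking it.
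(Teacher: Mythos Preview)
Your proposal is correct and follows essentially the same approach as the paper: projection nonexpansiveness, splitting $\sgda F = \gda F + e_k$, using strong concavity plus the $L_{22}$-Lipschitz bound to obtain the deterministic contraction, handling the non-vanishing Markovian cross term $\inner{e_k}{\alpha_k^t-\ap(\theta_t)}$ via Young's inequality (which the paper does with parameter $\mu/2$, exactly as you anticipate), unrolling the resulting contraction-plus-noise recursion, and invoking \Cref{lemma:noniidsample}. The paper simply spells out the contraction step explicitly (arriving at $(1-2\beta\mu+2\beta^2L_{22}^2)$ and then $(1+2\beta^2L_{22}^2-\mu\beta)$ after Young's, with noise coefficient $2\beta^2+\beta/\mu\le \tfrac{3}{8L_{22}^2}$) rather than citing it as a standard lemma, but the computation and constants match yours; your observation about the $(1-\gamma)^2$ denominator in \Cref{lemma:noniidsample} is also accurate and is what the paper uses when substituting.
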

\begin{proof}[Proof of \Cref{thm:alphaupdate}]
	We proceed as follows:
	\begin{align}
	&\norm{\alpha_{k+1}^t -\alpha_{op}(\theta_{t})}_2^2 \nonumber\\
	&\quad\overset{(i)}\le \norm{\alpha_k^t + \beta\sgda F(\theta_t, \alpha_{k}^t) - \alpha_{op}(\theta_{t})}_2^2  \nonumber\\
	&\quad=\norm{\alpha_{k}^t -\alpha_{op}(\theta_{t})}_2^2 + \beta^2\norm{\sgda F(\theta_t, \alpha_{k}^t)}_2^2 + 2\beta \inner{\sgda F(\theta_t, \alpha_{k}^t)}{\alpha_{k}^t -\alpha_{op}(\theta_{t})} \nonumber\\
	&\quad\overset{(ii)}\le \norm{\alpha_{k}^t -\alpha_{op}(\theta_{t})}_2^2 + 2\beta^2\norm{\gda F(\theta_t, \alpha_{k}^t)}_2^2 + 2\beta^2\norm{\sgda F(\theta_t, \alpha_{k}^t) - \gda F(\theta_t, \alpha_{k}^t)}_2^2\nonumber \\
	&\qquad + 2\beta \inner{\gda F(\theta_t, \alpha_{k}^t)}{\alpha_{k}^t -\alpha_{op}(\theta_{t})} + 2\beta \inner{\sgda F(\theta_t, \alpha_{k}^t) -\gda F(\theta_t, \alpha_{k}^t)}{\alpha_{k}^t -\alpha_{op}(\theta_{t})}\nonumber\\
	&\quad \overset{(iii)}\le (1- 2\beta\mu + 2\beta^2L_{22}^2) \norm{\alpha_{k}^t -\alpha_{op}(\theta_{t})}_2^2 + 2\beta^2\norm{\sgda F(\theta_t, \alpha_{k}^t) -\gda F(\theta_t, \alpha_{k}^t)}_2^2 \nonumber\\
	&\qquad  + 2\beta \inner{\sgda F(\theta_t, \alpha_{k}^t) -\gda F(\theta_t, \alpha_{k}^t)}{\alpha_{k}^t -\alpha_{op}(\theta_{t})}\nonumber \\
	&\quad \overset{(iv)}\le (1+ 2\beta^2 L_{22}^2 - \mu\beta) \norm{\alpha_{k}^t -\alpha_{op}(\theta_{t})}_2^2 + (2\beta^2 + \beta/\mu) \norm{\sgda F(\theta_t, \alpha_{k}^t) -\gda F(\theta_t, \alpha_{k}^t)}_2^2\nonumber\\
	&\quad \overset{(v)}\le \left(1- \frac{\mu^2}{8L_{22}^2}\right)\norm{\alpha_{k}^t -\alpha_{op}(\theta_{t})}_2^2 + \frac{3}{8L_{22}^2} \norm{\sgda F(\theta_t, \alpha_{k}^t) -\gda F(\theta_t, \alpha_{k}^t)}_2^2,\label{eq:distancecontraction}
	\end{align}
	where $(i)$ follows from the non-expansive property of the projection operator, $(ii)$ follows because $\norm{A+B}_2^2 \le 2 \norm{A}_2^2  + 2\norm{B}_2^2$, $(iii)$ follows from \Cref{lemma:lipschitzcondtion} and the fact $\inner{\gda F(\theta_t, \alpha_{k}^t)}{\alpha_{k}^t -\alpha_{op}(\theta_{t})} \le -\mu \norm{\alpha_{k}^t -\alpha_{op}(\theta_{t})}_2^2$,  $(iv)$ follows because 
	\begin{align*}
	&\langle\sgda F(\theta_t, \alpha_{k}^t) -\gda F(\theta_t, \alpha_{k}^t), \alpha_{k}^t -\alpha_{op}(\theta_{t}) \rangle \\
	&\quad\le \frac{\mu}{2}\norm{\alpha_{k}^t -\alpha_{op}(\theta_{t})}_2^2 + \frac{1}{2\mu}\|\sgda F(\theta_t, \alpha_{k}^t) -\gda F(\theta_t, \alpha_{k}^t)\|_2^2,
	\end{align*} 
	and $(v)$ follows by letting $\beta =\frac{\mu}{4L_{22}^2}$ and because $\mu\le L_{22}$.
	
	Applying \cref{eq:distancecontraction} recursively and using the fact $1-x \le e^{-x}$, we obtain
	\begin{align*}
		\norm{\alpha_{K}^t -\alpha_{op}(\theta_{t})}_2^2 &\le e^{-\frac{\mu^2}{8L_{22}^2} K} \norm{\alpha_{0}^t -\alpha_{op}(\theta_{t})}_2^2 \\
		&\quad+ \frac{3}{8L_{22}^2} \sum_{k=0}^{K-1} \left(1- \frac{\mu^2}{8L_{22}^2}\right)^{K-1-k}\norm{\sgda F(\theta_t, \alpha_{k}^t) -\gda F(\theta_t, \alpha_{k}^t)}_2^2.
	\end{align*}
	
	Then, taking expectation on both sides of above inequality and applying \Cref{lemma:noniidsample} yield
	\begin{align*}
		\eptt\left[\norm{\alpha_{K}^t -\alpha_{op}(\theta_{t})}_2^2\right] &\le C_\alpha^2 e^{-\frac{\mu^2}{8L_{22}^2} K} + \frac{3}{8L_{22}^2}\sum_{k=0}^{K-1}  \left(1- \frac{\mu^2}{8L_{22}^2}\right)^{K-1-k} \frac{16C_r^2}{(1-\gamma)^2}(1+\frac{ C_M}{1-\rho})\frac{1}{B}\\
		&\le C_\alpha^2 e^{-\frac{\mu^2}{8L_{22}^2} K} + \frac{48C_r^2}{\mu^2(1-\gamma)^2}(1+\frac{C_M}{1-\rho})\frac{1}{B},
	\end{align*}
	which completes the proof.
\end{proof}

\section{Proof of \Cref{thm:ppg,thm:fwalg}: Global Convergence of PPG-GAIL and FWPG-GAIL}
In this section, we provide the proof of \Cref{thm:ppg,thm:fwalg}. We first provide three supporting lemmas. Specifically, \Cref{lemma:lipshitzalpha,lemma:gradientlipschitz} establish the smoothness condition of the global optimal $\alpha_{op}(\theta)$ and the gradient $\nabla g(\theta)$. Similar property has also been established in \cite{nouiehed2019solving,lin2020near}. 
\Cref{lemma:pgerrorbound} provides the upper bound on the bias and variance errors introduced by the stochastic gradient estimator of $\gdt F(\theta_t, \alpha_t)$. 
\subsection{Supporting Lemmas}
\begin{lemma}\label{lemma:lipshitzalpha}
	Suppose \Cref{assp:regularizer,assp:rewardassumption,assp:objectivefunction,aspt:ergodic} holds and  the policy  takes the direct parameterization specified in \Cref{sec:gail}. We have $\lt{\ap(\theta_1) - \ap(\theta_2)} \le \frac{L_{21}}{\mu}\lt{\theta_1 - \theta_2}$, where $\ap(\theta)$ is the unique global optimal that satisfies $\ap(\theta) = \argmax_{\alpha\in\Lambda} F(\theta, \alpha)$.
\end{lemma}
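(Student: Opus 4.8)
\emph{Proof strategy.} The plan is to combine the first-order optimality conditions for the two constrained maximization problems defining $\ap(\theta_1)$ and $\ap(\theta_2)$ with the $\mu$-strong concavity of $F(\theta,\cdot)$ from \Cref{assp:objectivefunction}. Strong concavity already guarantees that $\ap(\theta)$ is single-valued, so the statement is well posed, and $\ap(\theta)$ is characterized by the variational inequality $\inner{\gda F(\theta,\ap(\theta))}{\alpha - \ap(\theta)}\le 0$ for every $\alpha\in\Lambda$.

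First I would instantiate this inequality at $\theta=\theta_1$ with test point $\alpha = \ap(\theta_2)$, and at $\theta=\theta_2$ with test point $\alpha = \ap(\theta_1)$, and sum the two to obtain
\[
\inner{\gda F(\theta_1,\ap(\theta_1)) - \gda F(\theta_2,\ap(\theta_2))}{\ap(\theta_2) - \ap(\theta_1)} \le 0.
\]
Next I would split the left-hand side by inserting the cross term $\gda F(\theta_1,\ap(\theta_2))$: the piece $\inner{\gda F(\theta_1,\ap(\theta_1)) - \gda F(\theta_1,\ap(\theta_2))}{\ap(\theta_2) - \ap(\theta_1)}$ is bounded below by $\mu\lt{\ap(\theta_1) - \ap(\theta_2)}^2$ by strong concavity of $F(\theta_1,\cdot)$, while the remaining piece involves only $\gda F(\theta_1,\ap(\theta_2)) - \gda F(\theta_2,\ap(\theta_2))$, i.e.\ the same $\alpha$-argument with different $\theta$. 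Rearranging gives
\[
\mu\lt{\ap(\theta_1) - \ap(\theta_2)}^2 \le \inner{\gda F(\theta_1,\ap(\theta_2)) - \gda F(\theta_2,\ap(\theta_2))}{\ap(\theta_1) - \ap(\theta_2)}.
\]
Then by Cauchy--Schwarz and the second Lipschitz bound of \Cref{lemma:lipschitzcondtion} taken with $\alpha_1 = \alpha_2 = \ap(\theta_2)$ (so that the $L_{22}$-term vanishes and only $L_{21}\lt{\theta_1 - \theta_2}$ survives), the right-hand side is at most $L_{21}\lt{\theta_1 - \theta_2}\,\lt{\ap(\theta_1) - \ap(\theta_2)}$; dividing through by $\lt{\ap(\theta_1) - \ap(\theta_2)}$ (the inequality being trivial when this quantity is zero) yields the claim.

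I do not expect a genuine obstacle here --- this is the standard argument for Lipschitz continuity of the maximizer map in nonconvex--strongly-concave min-max problems, adapted to the constraint set $\Lambda$ via variational inequalities. The only points requiring a little care are bookkeeping ones: inserting the cross term so that the surviving Lipschitz quantity is the one with a common $\alpha$, hence controlled by $L_{21}$ rather than the mixed Lipschitz constant $L_{22}$, and invoking strong concavity in the monotonicity form $\inner{\gda F(\theta,\alpha) - \gda F(\theta,\alpha')}{\alpha - \alpha'}\le -\mu\lt{\alpha - \alpha'}^2$.
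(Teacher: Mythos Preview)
Your argument is correct and is the standard ``monotone operator'' route: sum the two variational inequalities, split with the cross term $\gda F(\theta_1,\ap(\theta_2))$, use $\mu$-strong monotonicity of $\alpha\mapsto\gda F(\theta_1,\alpha)$ for the first piece, and finish with Cauchy--Schwarz plus the $L_{21}$-Lipschitz bound from \Cref{lemma:lipschitzcondtion} for the second piece. All steps are valid.

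The paper, however, argues differently. It works at the level of function values rather than gradients: it combines the quadratic-growth inequality $F(\theta_1,\ap(\theta_1))-F(\theta_1,\alpha)\ge\frac{\mu}{2}\lt{\alpha-\ap(\theta_1)}^2$ with the PL-type inequality $F(\theta_1,\ap(\theta_1))-F(\theta_1,\alpha)\le\frac{1}{2\mu}\lt{\gda F(\theta_1,\alpha)}^2$, sets $\alpha=\ap(\theta_2)$, and then bounds $\lt{\gda F(\theta_1,\ap(\theta_2))}$ by $L_{21}\lt{\theta_1-\theta_2}$. That last step implicitly uses $\gda F(\theta_2,\ap(\theta_2))=0$, i.e.\ that the maximizer is interior to $\Lambda$ (an assumption the paper also invokes in the proof of \Cref{lemma:gradientlipschitz}). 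Your variational-inequality argument avoids this interiority assumption entirely and works even when $\ap(\theta)$ lies on the boundary of $\Lambda$, so in that respect it is slightly more general; the paper's route is shorter once one grants interiority. Both yield the same constant $L_{21}/\mu$.
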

\begin{proof}[Proof of \Cref{lemma:lipshitzalpha}]
	Since $F(\theta_1, \alpha)$ is strongly concave on $\alpha$, the following two inequalities hold for all $\alpha \in \Lambda$, 
	\begin{align}
	&F(\theta_1, \ap(\theta_1)) - F(\theta_1, \alpha) \ge \frac{\mu}{2}\lt{\alpha - \ap(\theta_1)}^2,\label{eq:stronglyconcave1}\\ 
	&F(\theta_1, \ap(\theta_1)) - F(\theta_1, \alpha) \le \frac{\lt{\gda F(\theta_1, \alpha)}^2}{2\mu}.\label{eq:stronglyconcave2}
	\end{align}
	In \cref{eq:stronglyconcave1,eq:stronglyconcave2}, letting $\alpha = \ap(\theta_2)$ and using the gradient Lipschitz condition established in \Cref{lemma:lipschitzcondtion}, we have 
	\begin{align*}
	\frac{\mu}{2}\lt{\ap(\theta_2) - \ap(\theta_1)}^2 \le  \frac{\lt{\gda F(\theta_1, \ap(\theta_2))}^2}{2\mu} \le \frac{L_{21}^2\lt{\theta_2- \theta_2}^2}{2\mu},
	\end{align*}
	which implies $\lt{\ap(\theta_1) - \ap(\theta_2)} \le \frac{L_{21}}{\mu}\lt{\theta_1 - \theta_2}$. 
\end{proof}
\begin{lemma}\label{lemma:gradientlipschitz}
	Suppose \Cref{assp:regularizer,assp:rewardassumption,assp:objectivefunction,aspt:ergodic} hold and the policy takes the direct parameterization specified in \Cref{sec:gail}. Then we have $$\gdt g(\theta) = \gdt F(\theta, \alpha)|_{\alpha = \ap(\theta)},$$ and for any $\theta_1,\theta_2\in\Theta_p$,
	$$\lt{\gdt g(\theta_1) - \gdt g(\theta_2)} \le (L_{11}+ (L_{12}L_{21})/\mu)\lt{\theta_1- \theta_2},$$ where $L_{11}$, $L_{12}$ and $L_{21}$ are defined in \Cref{lemma:lipschitzcondtion}. 
\end{lemma}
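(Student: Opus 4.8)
The plan is to establish first the Danskin-type identity $\gdt g(\theta)=\gdt F(\theta,\alpha)|_{\alpha=\ap(\theta)}$ and then to deduce the gradient-Lipschitz bound from it by a short triangle-inequality argument. For the identity: by \Cref{assp:objectivefunction} the map $\alpha\mapsto F(\theta,\alpha)$ is $\mu$-strongly concave over the compact convex set $\Lambda$, so the maximizer $\ap(\theta)$ attaining $g(\theta)$ is unique, and by \Cref{lemma:lipshitzalpha} the map $\theta\mapsto\ap(\theta)$ is $(L_{21}/\mu)$-Lipschitz, hence continuous. Together with the joint smoothness of $F$ recorded in \Cref{lemma:lipschitzcondtion} (Lipschitz continuity of $\gdt F$ and $\gda F$), this places us exactly in the setting of Danskin's theorem, which gives that $g$ is differentiable with $\gdt g(\theta)=\gdt F(\theta,\ap(\theta))$. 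If a self-contained derivation is preferred, one estimates $\big|g(\theta_1)-g(\theta_2)-\inner{\gdt F(\theta_2,\ap(\theta_2))}{\theta_1-\theta_2}\big|$ from above by inserting $\ap(\theta_2)$ as a feasible competitor in the maximization defining $g(\theta_1)$ and $\ap(\theta_1)$ in that defining $g(\theta_2)$, and then invoking the gradient-Lipschitz property of $F$ in $\theta$ from \Cref{lemma:lipschitzcondtion} together with the Lipschitzness of $\ap(\cdot)$; the error term is $o(\lt{\theta_1-\theta_2})$, which identifies $\gdt g$.

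Given the identity, the Lipschitz bound follows by adding and subtracting $\gdt F(\theta_2,\ap(\theta_1))$:
\begin{align*}
\lt{\gdt g(\theta_1)-\gdt g(\theta_2)}
&= \lt{\gdt F(\theta_1,\ap(\theta_1))-\gdt F(\theta_2,\ap(\theta_2))}\\
&\le \lt{\gdt F(\theta_1,\ap(\theta_1))-\gdt F(\theta_2,\ap(\theta_1))}
 + \lt{\gdt F(\theta_2,\ap(\theta_1))-\gdt F(\theta_2,\ap(\theta_2))}\\
&\le L_{11}\lt{\theta_1-\theta_2} + L_{12}\lt{\ap(\theta_1)-\ap(\theta_2)}\\
&\le \Big(L_{11}+\frac{L_{12}L_{21}}{\mu}\Big)\lt{\theta_1-\theta_2},
\end{align*}
where the second inequality uses the first Lipschitz bound of \Cref{lemma:lipschitzcondtion} (applied once with the reward parameter fixed at $\ap(\theta_1)$ and once with the policy parameter fixed at $\theta_2$), and the last uses \Cref{lemma:lipshitzalpha}. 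This yields the stated constant $L_{11}+L_{12}L_{21}/\mu$.

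The only step that is not routine algebra is the envelope identity; everything after it is the standard two-term decomposition. The point to be slightly careful about is that $\Theta_p$ is a simplex with nonempty boundary, so throughout ``$\gdt$'' is understood (as elsewhere in the paper) as the gradient of the smooth extension of $F(\cdot,\alpha)$ to the affine hull of $\Theta_p$; under direct parameterization $F(\cdot,\alpha)$ is a smooth rational function of $\theta$ on this set, so Danskin's theorem applies without modification and the Lipschitz estimates of \Cref{lemma:lipschitzcondtion} are exactly in the form needed.
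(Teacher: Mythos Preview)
Your proposal is correct and follows essentially the same route as the paper. The only cosmetic difference is in the first part: you invoke Danskin's theorem as a named result (using unique maximizer from strong concavity and continuity of $\ap(\cdot)$ from \Cref{lemma:lipshitzalpha}), whereas the paper proves the envelope identity directly by computing the directional derivative $\partial g(\theta)/\partial\ell$, splitting $F(\theta+\epsilon\ell,\ap(\theta+\epsilon\ell))-F(\theta,\ap(\theta))$ into a $\theta$-variation and an $\alpha$-variation, and using $\gda F(\theta,\ap(\theta))=0$ together with the Lipschitzness of $\ap(\cdot)$ to kill the latter in the limit. Your second part---the triangle-inequality decomposition via \Cref{lemma:lipschitzcondtion} and \Cref{lemma:lipshitzalpha}---is identical to the paper's (you insert $\gdt F(\theta_2,\ap(\theta_1))$ while the paper inserts $\gdt F(\theta_1,\ap(\theta_2))$, which is immaterial).
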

\begin{proof}[Proof of \Cref{lemma:gradientlipschitz}]	
	Taking the directional derivative of $g(\theta)$ with respect to the direction $\ell$, we have
	\begin{align}
	\frac{\partial g(\theta)}{\partial \ell} &= \lim_{\epsilon \to 0} \frac{g(\theta + \epsilon\ell) -g(\theta)}{\epsilon} = \lim_{\epsilon \to 0} \frac{F(\theta + \epsilon\ell, \ap(\theta + \epsilon\ell)) -F(\theta, \ap(\theta))}{\epsilon} \nonumber\\
	&= \lim_{\epsilon \to 0} \frac{F(\theta + \epsilon\ell, \ap(\theta + \epsilon\ell)) - F(\theta + \epsilon\ell, \ap(\theta ))+ F(\theta + \epsilon\ell, \ap(\theta )) -F(\theta, \ap(\theta))}{\epsilon}\nonumber\\
	& \overset{(i)}= \lim_{\epsilon \to 0}\ell^\top\gda F(\theta, \alpha'_\epsilon)  + \ell^\top\gdt F(\theta, \ap(\theta))\nonumber\\
	&\overset{(ii)}= \ell^\top\gdt F(\theta, \ap(\theta)),\label{eq:middlegradientlipchitz}
	\end{align}	
	where $\alpha'_\epsilon$ in $(i)$ is a point between $\ap(\theta + \epsilon\ell)$ and $\ap(\theta)$, and $(ii)$ follows from \Cref{lemma:lipshitzalpha} and hence we have $\lim_{\epsilon \to 0}\gda F(\theta, \alpha'_\epsilon)  = \gda F(\theta, \ap(\theta)) = 0$.  Since \cref{eq:middlegradientlipchitz} holds for all directions $\ell$, we have $\gdt g(\theta) = \gdt F(\theta, \ap(\theta))$.
	
	We then proceed to prove the gradient Lipschitz condition of $g(\theta_t)$. For any given $\theta_1, \theta_2 \in \Theta_p$, we have 
	\begin{align*}
	&\lt{\gdt g(\theta_1) - \gdt g(\theta_2)}\\
	&\quad= \lt{\gdt F(\theta_1, \ap(\theta_1)) - \gdt F(\theta_2, \ap(\theta_2))}\\
	&\quad =\lt{\gdt F(\theta_1, \ap(\theta_1)) - \gdt F(\theta_1, \ap(\theta_2)) + \gdt F(\theta_1, \ap(\theta_2)) - \gdt F(\theta_2, \ap(\theta_2))} \\
	&\quad \le \lt{\gdt F(\theta_1, \ap(\theta_1)) - \gdt F(\theta_1, \ap(\theta_2))} + \lt{\gdt F(\theta_1, \ap(\theta_2)) - \gdt F(\theta_2, \ap(\theta_2))} \\
	&\quad\le L_{12} \lt{\ap(\theta_1) - \ap(\theta_2)} + L_{11}\lt{\theta_1 - \theta_2} \\
	&\quad\overset{(i)}\le (L_{11} + \frac{L_{12}L_{21}}{\mu})\lt{\theta_1 - \theta_2},
	\end{align*}
	where $(i)$ follows from \Cref{lemma:lipshitzalpha}.
\end{proof}
\begin{lemma}\label{lemma:pgerrorbound}
	Suppose \Cref{aspt:ergodic} holds. For the policy gradient estimation specified in \cref{eq:policygradient}, in each iteration $t$, $0\le t\le T-1$, we have
	\begin{equation*}
	\eptt\left[\norm{\sgdt F(\theta_t, \alpha_t)-\gdt F(\theta_t, \alpha_t)}_2^2\right]\le \frac{4|\Ac|R_{max}^2 }{(1-\gamma^{1/2})^2(1-\gamma)^2}  \left(1+\frac{2C_M\rho}{1-\rho}\right)\frac{1}{b}.
	\end{equation*}
	Let the sample trajectory size $b\ge \frac{4|\Ac|R_{max}^2 }{(1-\gamma^{1/2})^2(1-\gamma)^2}  \left(1+\frac{2C_M\rho}{1-\rho}\right)\frac{1}{\Delta_{\theta}}$, we have $\eptt\left[\norm{\sgdt F(\theta_t, \alpha_t)-\gdt F(\theta_t, \alpha_t)}_2^2\right]\le \Delta_{\theta}.$
\end{lemma}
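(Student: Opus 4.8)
The plan is to mirror the proof of \Cref{lemma:noniidsample}, carrying the argument out coordinatewise over the pairs $(s,a)$. First I would pin down the two objects being compared. By the policy gradient identity for direct parameterization already used in the proof of \Cref{lemma:lipschitzcondtion}, $\gdt F(\theta_t,\alpha_t)(s,a)=-\frac{1}{1-\gamma}\,d_{\pi_{\theta_t}}(s)\,Q^{\pi_{\theta_t}}_{\alpha_t}(s,a)$, whereas the estimator in \cref{eq:policygradient} can be written as $\sgdt F(\theta_t,\alpha_t)(s,a)=-\frac{1}{b(1-\gamma)}\sum_{i=0}^{b-1}Y_i(s,a)$, where $Y_i(s,a)$ is the contribution of the $i$-th sample --- an EstQ output from a rollout started at $(s_i,a)$, multiplied by $\indicator{s_i=s}$. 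Two facts about EstQ are all that is needed: conditional unbiasedness, $\eptt[Y_i(s,a)\mid\mathcal{F}_i]=Q^{\pi_{\theta_t}}_{\alpha_t}(s,a)\indicator{s_i=s}$ with $\mathcal{F}_i=\sigma(s_0,a_0,\dots,s_i,a_i)$, and a deterministic bound $|Y_i(s,a)|\le \frac{R_{max}}{1-\gamma^{1/2}}$, since EstQ returns a $\gamma^{1/2}$-discounted reward sum stopped at a geometric time.

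Then I would write $\eptt\big[\norm{\sgdt F(\theta_t,\alpha_t)-\gdt F(\theta_t,\alpha_t)}_2^2\big]=\sum_{s,a}\eptt\big[(\sgdt F(\theta_t,\alpha_t)(s,a)-\gdt F(\theta_t,\alpha_t)(s,a))^2\big]$ and, for each coordinate, expand the squared sum over $i$ into its diagonal ($i=j$) and off-diagonal ($i\neq j$) parts as in \cref{eq:middleboundinga}. The diagonal part is handled by the uniform bound on $Y_i$ and contributes $O(b)$ per state. For a cross term with $i<j$, I would condition on $\mathcal{F}_i$ and use that the EstQ rollout at step $i$ does not affect the main trajectory beyond time $i$; this collapses $\eptt[Y_i(s,a)Y_j(s,a)]$ to an inner product of a bounded quantity against $\eptt[\indicator{s_j=s}\mid\mathcal{F}_i]-d_{\pi_{\theta_t}}(s)$, which is bounded via the total-variation mixing of \Cref{aspt:ergodic} by $C_M\rho^{j-i}$, exactly as in \cref{eq:middleboundinga3}. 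Summing the geometric series in $j-i$ yields the factor $1+\tfrac{2C_M\rho}{1-\rho}$. Since each sampled state $s_i$ enters exactly the $|\Ac|$ coordinates $(s_i,a)$ and $\sum_s\eptt[\#\{i:s_i=s\}]=b$, the sum over coordinates contributes a factor $|\Ac|$ and no $|\Sc|$-dependence; combined with the prefactor $\frac{1}{b^2(1-\gamma)^2}$ and the $Y_i$-bound, this gives the stated $\frac{4|\Ac|R_{max}^2}{(1-\gamma^{1/2})^2(1-\gamma)^2}\big(1+\tfrac{2C_M\rho}{1-\rho}\big)\frac1b$. The statement about $b$ then follows by rearrangement.

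I expect the main obstacle to be the cross-term bookkeeping: keeping separate the EstQ estimation noise --- conditionally mean-zero given the trajectory, hence contributing only through its conditional variance and needing no mixing --- from the Markovian visitation noise of $\frac1b\sum_i\indicator{s_i=s}$ against $d_{\pi_{\theta_t}}(s)$, which does require the tower-property-plus-mixing estimate, and checking that the independence structure (a single persistent trajectory together with an independent EstQ rollout at each visited state) is strong enough for each conditioning step to be legitimate. A secondary point is the transient bias $\frac1b\sum_i(\mathbb{P}(s_i=s)-d_{\pi_{\theta_t}}(s))$ from the chain not starting at stationarity; using $\sum_s x_s^2\le(\sum_s|x_s|)^2$ and $\sum_s|\mathbb{P}(s_i=s)-d_{\pi_{\theta_t}}(s)|=2\dtv(\cdot,\cdot)\le 2C_M\rho^i$ shows it enters only at order $1/b^2$ and is absorbed into the $O(1/b)$ rate.
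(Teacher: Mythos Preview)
Your proposal is correct and follows essentially the same route as the paper: decompose the squared error into diagonal and off-diagonal parts, bound the diagonal using the uniform bound $|\hat{Q}|\le R_{max}/(1-\gamma^{1/2})$ from EstQ, and handle the cross terms by conditioning on $\mathcal{F}_i$, using the unbiasedness of EstQ to replace $\hat{Q}$ by $Q$, and then invoking \Cref{aspt:ergodic} on $\mathbb{P}(s_j=\cdot\mid s_i)-d_{\pi_{\theta_t}}$ to get the geometric factor $C_M\rho^{j-i}$. The only stylistic difference is that the paper packages the argument at the vector level---defining $g_i\in\mathbb{R}^{|\Sc|\cdot|\Ac|}$ with $(g_i)_{s,a}=-\frac{\hat Q(s,a)}{1-\gamma}\indicator{s_i=s}$ and applying Cauchy--Schwarz directly to $\langle g_i-\gdt F,\,g_j-\gdt F\rangle$---rather than working coordinatewise as you do; the $\sqrt{|\Ac|}$ in $\|g_i\|_2$ is exactly your observation that only the $|\Ac|$ coordinates with $s=s_i$ are nonzero. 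One minor point: in the paper's estimator a single $\hat Q(s,a)$ is shared across the indicators $\indicator{s_i=s}$ rather than a fresh EstQ call per $i$, but since all that is used is boundedness, unbiasedness, and independence of $\hat Q$ from the trajectory $(s_0,\dots,s_{b-1})$, both readings lead to the same bound.
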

\begin{proof}[Proof of \Cref{lemma:pgerrorbound}]
	
	We define the vector $g_i\in \mathbb{R}^{|\Sc|\cdot|\Ac|}$ with each entry given by $(g_i)_{s,a}  = -\frac{\hat{Q}(s,a)}{1-\gamma} \indicator{s_i= s}$. Then, we proceed as follows:
	\begin{align}
	&\eptt\left[\norm{\sgdt F(\theta_t, \alpha_t)-\gdt F(\theta_t, \alpha_t)}_2^2\right] = \eptt\left[\norm{\frac{1}{b}\sum_{i=0}^{b-1} (g_i -\gdt F(\theta_t, \alpha_t))}_2^2\right] \nonumber\\
	&\quad=\frac{1}{b^2}\eptt\left[\sum_{i=0}^{b-1} \eptt\norm{g_i-\gdt F(\theta_t, \alpha_t)}_2^2 + \sum_{i\neq j} \eptt \inner{g_i-\gdt F(\theta_t, \alpha_t)}{g_j-\gdt F(\theta_t, \alpha_t)}\right]\nonumber\\
	&\quad\overset{(i)}\le \frac{4|\Ac|R_{max}^2 }{b(1-\gamma^{1/2})^2(1-\gamma)^2} + \frac{2}{b^2} \sum_{i=1}^{b-2} \sum_{j=i+1}^{b-1} \underbrace{\eptt\left[\inner{g_i-\gdt F(\theta_t, \alpha_t)}{g_j-\gdt F(\theta_t, \alpha_t)}\right]}_{T_1},\label{eq:midpgerrorbound}
	\end{align}
	where $(i)$ follows from the facts that $\norm{g_i}_2 = \left|\frac{\sqrt{|\Ac|}\hat{Q}(s_i, a_i)}{1-\gamma}\right| \le\frac{\sqrt{|\Ac|}R_{max}}{(1-\gamma^{1/2})(1-\gamma)}$ and $\norm{\gdt F(\theta_t, \alpha_t)}_2\le \frac{\sqrt{|\Ac|}R_{max}}{(1-\gamma)^2}\le \frac{\sqrt{|\Ac|}R_{max}}{(1-\gamma^{1/2})(1-\gamma)}$.

Define the filtration $\mathcal{F}_i = \sigma\left(s_0, s_1, \cdots, s_i\right)$. For the term $T_1$ in \cref{eq:midpgerrorbound} with $i<j$, we have 
	\begin{align}
	&\eptt\left[\inner{g_i-\gdt F(\theta_t, \alpha_t)}{g_j-\gdt F(\theta_t, \alpha_t)}\right]\\
	&\quad= \eptt\left[\eptt\left[\inner{g_i-\gdt F(\theta_t, \alpha_t)}{g_j-\gdt F(\theta_t, \alpha_t)}\middle| \mathcal{F}_i\right]\right]\nonumber\\
	&\quad= \eptt\left[\inner{g_i-\gdt F(\theta_t, \alpha_t)}{\eptt\left[g_j-\gdt F(\theta_t, \alpha_t)\middle| \mathcal{F}_i\right]}\right]\nonumber\\
	&\quad\le \eptt\left[\norm{g_i-\gdt F(\theta_t, \alpha_t)}_2\norm{\eptt\left[g_j-\gdt F(\theta_t, \alpha_t)\middle| \mathcal{F}_i\right]}_2\right]\nonumber\\
	&\quad\le \frac{2R_{max}\sqrt{|\Ac|}}{(1-\gamma)(1-\gamma^{1/2})} \eptt\norm{\eptt\left[g_j\middle| \mathcal{F}_i\right]-\gdt F(\theta_t, \alpha_t)}_2\nonumber\\
	&\quad \le \frac{2R_{max}\sqrt{|\Ac|}}{(1-\gamma)(1-\gamma^{1/2})} \eptt \lt{\sqrt{\sum_{s,a} \left(\prob\left\{s_j =s|s_i\right\}\frac{Q(s,a)}{1-\gamma} - d_{\pi_{\theta_t}} (s) \frac{Q(s,a)}{1-\gamma}\right)^2}}\nonumber\\
	&\quad\le \frac{2R^2_{max}\sqrt{|\Ac|}}{(1-\gamma)^3(1-\gamma^{1/2})}\sqrt{\sum_{s,a} \left(\prob\left\{s_j =s|s_i\right\} - d_{\pi_{\theta_t}} (s)\right)^2}\nonumber\\
	&\quad\overset{(i)}=\frac{2R^2_{max}{|\Ac|}}{(1-\gamma)^3(1-\gamma^{1/2})}\norm{\prob\left\{s_j =\cdot|s_i\right\} - \chi_{\pi_{\theta_t}}}_2\nonumber\\
	&\quad\overset{(ii)}\le\frac{4C_M R^2_{max}{|\Ac|}}{(1-\gamma)^3(1-\gamma^{1/2})} \rho^{j-i}, \label{eq:midpgerrorbound3}
	\end{align}
	where $(i)$ follows because $\chi_{\pi_{\theta_t}} = d_{\pi_{\theta_t}}$, and $(ii)$ follows from \Cref{aspt:ergodic} and because  $d_{\pi_{\theta_t}}  = \chi_{\theta_t}$ and $$\norm{\prob\left\{s_j =\cdot|s_i\right\} - d_{\pi_{\theta_t}}}_2\le \norm{\prob\left\{s_j =\cdot|s_i\right\} - d_{\pi_{\theta_t}}}_1 = 2\dtv\left(\prob\left\{s_j =\cdot|s_i\right\}, d_{\pi_{\theta_t}}\right).$$ 
	
	Substituting \cref{eq:midpgerrorbound3} into \cref{eq:midpgerrorbound}, we obtain
	\begin{align*}
	\eptt\left[\norm{\sgdt F(\theta_t, \alpha_t)-\gdt F(\theta_t, \alpha_t)}_2^2\right] &\le\frac{4|\Ac|R_{max}^2 }{b(1-\gamma^{1/2})^2(1-\gamma)^2} + \frac{2}{b^2} \sum_{i=1}^{b-2} \sum_{j=i+1}^{b-1} \frac{4C_M|\Ac|R_{max}^2}{(1-\gamma^{1/2})^2(1-\gamma)^2} \rho^{j-i} \\
	&\le \frac{4|\Ac|R_{max}^2 }{b(1-\gamma^{1/2})^2(1-\gamma)^2}  \left(1+\frac{2C_M\rho}{1-\rho}\right)\frac{1}{b}. 
	\end{align*}
	
	The second claim can be easily checked. 
\end{proof}	
\subsection{Proof of \Cref{thm:ppg}}
	Based on the projection property, we have 
	\begin{equation}\label{eq:projectionproperty}
	\inner{\theta_t- \eta \sgdt F(\theta_{t}, \alpha_t) - \theta_{t+1}}{\theta - \theta_{t+1}} \le 0,\quad \forall \theta \in \Theta.
	\end{equation}
	Next we use \cref{eq:projectionproperty} to upper bound on $\eptt\left[\lt{\theta_{t+1} -\theta_t}^2\right]$. Letting $\theta = \theta_t$ and rearranging \cref{eq:projectionproperty} yield
	\begin{equation}\label{eq:upperboundofinnerproduct}
	\inner{\sgdt F(\theta_{t}, \alpha_t)}{\theta_{t+1} - \theta_{t}} \le -\eta^{-1} \|\theta_{t+1} -\theta_{t} \|_2^2.  
	\end{equation}
	
	According to the gradient Lipschitz condition established in \Cref{lemma:gradientlipschitz}, we have
	\begin{align*}
	g(\theta_{t+1}) &\le g(\theta_t) + \inner{\gdt g(\theta_t)}{\theta_{t+1} - \theta_t} + \left(\frac{L_{11}}{2} + \frac{L_{12}L_{21}}{2\mu}\right) \|\theta_{t+1} -\theta_{t} \|_2^2 \\ 
	&= g(\theta_t) +  \inner{\sgdt F(\theta_t, \alpha_t)}{\theta_{t+1} - \theta_{t}}- \inner{\gdt F(\theta_t, \alpha_t)  - \gdt g(\theta_t)}{\theta_{t+1} - \theta_{t}}\\
	&\quad - \inner{\sgdt F(\theta_t, \alpha_t) - \gdt F(\theta_t, \alpha_t)}{\theta_{t+1} - \theta_t} + \left(\frac{L_{11}}{2} + \frac{L_{12}L_{21}}{2\mu}\right) \|\theta_{t+1} -\theta_{t} \|_2^2\\
	&\overset{(i)}\le g(\theta_t) - \left(\frac{L_{11}}{2} + \frac{L_{12}L_{21}}{2\mu}\right) \|\theta_{t+1} -\theta_{t} \|_2^2 - \inner{\gdt F(\theta_t, \alpha_t)  - \gdt g(\theta_t)}{\theta_{t+1} - \theta_{t}}\\
	&\quad- \inner{\sgdt F(\theta_t, \alpha_t) - \gdt F(\theta_t, \alpha_t)}{\theta_{t+1} - \theta_t},
	\end{align*}
	where $(i)$ follows from \cref{eq:upperboundofinnerproduct} and the fact that $\eta = \left(L_{11} + \frac{L_{12}L_{21}}{\mu}\right)^{-1}$.
	
	Rearranging the above inequality, we obtain
	\begin{align*}
	\|\theta_{t+1} -\theta_{t} \|_2^2	&\le \left(\frac{L_{11}}{2} + \frac{L_{12}L_{21}}{2\mu}\right)^{-1} \left(g(\theta_{t}) - g(\theta_{t+1}) \right) \\
	&\quad - \left(\frac{L_{11}}{2} + \frac{L_{12}L_{21}}{2\mu}\right)^{-1}\inner{\gdt F(\theta_t, \alpha_t)  - \gdt g(\theta_t)}{\theta_{t+1} - \theta_{t}} \nonumber\\
	&\quad  -\left(\frac{L_{11}}{2} + \frac{L_{12}L_{21}}{2\mu}\right)^{-1}\inner{\sgdt F(\theta_t, \alpha_t) - \gdt F(\theta_t, \alpha_t)}{\theta_{t+1} - \theta_t}\nonumber\\
	&\overset{(i)}\le \left(\frac{L_{11}}{2} + \frac{L_{12}L_{21}}{2\mu}\right)^{-1} \left(g(\theta_{t}) - g(\theta_{t+1}) \right) \\
	&\quad +  \left(\frac{L_{11}}{2} + \frac{L_{12}L_{21}}{2\mu}\right)^{-2}\|\gdt F(\theta_t, \alpha_t)  - \gdt g(\theta_t)\|_2^2 + \frac{1}{4}\|\theta_{t+1} -\theta_{t} \|_2^2  \nonumber\\
	&\quad + \left(\frac{L_{11}}{2} + \frac{L_{12}L_{21}}{2\mu}\right)^{-2}\|\sgdt F(\theta_t, \alpha_t) - \gdt F(\theta_t, \alpha_t)\|_2^2 +\frac{1}{4}\|\theta_{t+1} -\theta_{t} \|_2^2,
	\end{align*}
	where  $(i)$ follows from Young's inequality.
	
	Taking expectation on both sides of the above inequality yields
	\begin{align}
	\eptt\left[\|\theta_{t+1} -\theta_{t} \|_2^2\right] &\overset{(i)}\le \frac{4\mu}{\mu L_{11} + L_{12}L_{21}}\eptt\left[g(\theta_{t}) - g(\theta_{t+1}) \right] + \frac{8\mu^2L_{22}^2}{(\mu L_{11} + L_{12}L_{21})^2}\eptt\left[\norm{\alpha_t - \alpha_{op}(\theta_{t})}_2^2\right] \nonumber\\
	&\quad  + \frac{8\mu^2}{(\mu L_{11} + L_{12}L_{21})^2}\eptt\left[\norm{\sgdt F(\theta_t, \alpha_t) - \gdt F(\theta_t, \alpha_t)}_2^2\right],\label{eq:telemo}
	\end{align}
	where $(i)$ follows from the gradient Lipschitz condition established in \Cref{lemma:lipschitzcondtion}
	
	Next, rearranging \cref{eq:projectionproperty}, we obtain
	\begin{align*}
		&\inner{\theta_t - \theta_{t+1}}{\theta - \theta_{t+1}}\\
		&\quad\le \eta \inner{\sgdt F(\theta_{t}, \alpha_t)}{\theta - \theta_{t+1}}\\
		&\quad= \eta \inner{\sgdt F(\theta_{t}, \alpha_t) - \gdt F(\theta_{t}, \alpha_t)}{\theta - \theta_{t+1}}+ \eta \inner{\gdt F(\theta_{t}, \alpha_t) - \gdt g(\theta_{t})}{\theta - \theta_{t+1}} \\
		&\qquad + \eta \inner{\gdt g(\theta_{t}, \alpha_t)}{\theta - \theta_{t}} + \eta \inner{\gdt g(\theta_{t}, \alpha_t)}{\theta_{t} - \theta_{t+1}}.
	\end{align*}
	Letting $\eta = \left(L_{11} + \frac{L_{12}L_{21}}{\mu}\right)^{-1}$ and rearranging the above inequality yield
	\begin{align}
	\inner{\gdt g(\theta_t)}{\theta - \theta_t} &\ge \left(L_{11} + \frac{L_{12}L_{21}}{\mu}\right) \inner{\theta_t - \theta_{t+1}}{\theta - \theta_{t+1}} - \inner{\gdt F(\theta_t, \alpha_t)  - \gdt g(\theta_t)}{\theta - \theta_{t+1}} \nonumber\\
	&\quad- \inner{\sgdt F(\theta_t, \alpha_t) - \gdt F(\theta_t, \alpha_t)}{\theta - \theta_{t+1}} - \inner{\gdt g(\theta_t)}{\theta_t - \theta_{t+1}}\nonumber\\
	&\overset{(i)}\ge -\left(L_{11} + \frac{L_{12}L_{21}}{\mu}\right) \|\theta_t - \theta_{t+1} \|_2\cdot 2R - \frac{\sqrt{|\Ac|}R_{max}}{(1-\gamma)^2} \|\theta_{t+1} - \theta_t\|_2 \nonumber\\
	&\quad - 2R(\|\sgdt F(\theta_t, \alpha_t) - \gdt F(\theta_t, \alpha_t)\|_2 + \|\gdt F(\theta_t, \alpha_t)  - \gdt g(\theta_t)\|_2) ,\label{eq:midppg}
	\end{align}
	where $(i)$ follows from the Cauchy-Schwartz inequality and the boundness properties of $\Theta_p$ ($R\defeq\max_{\theta\in\Theta_p}\{\lt{\theta}\}$) and because $\norm{\gdt g(\theta_t)}_2 = \norm{\gdt F(\theta_t, \ap(\theta_t))}_2 \le \frac{\sqrt{|\Ac|}R_{max}}{(1-\gamma)^2} $.
	
	Applying the gradient dominance property of $g(\theta)$ established in \Cref{prop:PL-like}, we obtain
	\begin{align*}
		g(\theta_t) -g(\theta^*) &\le C_d \max_{\theta\in\Theta} \inner{\gdt g(\theta_t)}{\theta_t -\theta} \\
		&\overset{(i)}\le C_d\left(\frac{2(\mu L_{11}+ L_{12}L_{21})R}{\mu} + \frac{\sqrt{|\Ac|}R_{max}}{(1-\gamma)^2}\right) \|\theta_t - \theta_{t+1}\|_2 \\
		&\qquad+ 2RC_d\|\sgdt F(\theta_t, \alpha_t) - \gdt F(\theta_t, \alpha_t)\|_2+ 2RC_d\|\gdt F(\theta_t, \alpha_t)- \gdt g(\theta_t)\|_2,
	\end{align*}
	where $(i)$ follows by multiplying $-1$ on both sides of \cref{eq:midppg} and taking the maximum over all $\theta \in \Theta_p$.
	
	Taking expectation on both sides of above inequality and telescoping, we have 
	\begin{align*}
	&\frac{1}{T}\sum_{t=0}^{T-1}\eptt\left[g(\theta_t)\right]-g(\theta^*)\\
	&\quad\le C_d\left(\frac{2(\mu L_{11}+ L_{12}L_{21})R}{\mu} +\frac{\sqrt{|\Ac|}R_{max}}{(1-\gamma)^2}\right) 	\frac{1}{T}\sum_{t=0}^{T-1}\eptt\left[\|\theta_t - \theta_{t+1}\|_2\right]\nonumber\\
	&\qquad + 2RC_d\frac{1}{T}\sum_{t=0}^{T-1}\eptt\left[\|\sgdt F(\theta_t, \alpha_t) - \gdt F(\theta_t, \alpha_t)\|_2 \right] + 2RC_d 	\frac{1}{T}\sum_{t=0}^{T-1}\eptt\left[\|\gdt F(\theta_t, \alpha_t)  - \gdt g(\theta_t)\|_2\right] \nonumber\\
	&\quad\overset{(i)}\le C_d\left(\frac{2(\mu L_{11}+ L_{12}L_{21})R}{\mu} +  \frac{\sqrt{|\Ac|}R_{max}}{(1-\gamma)^2}\right)\sqrt{\eptt\left[\frac{1}{T} \sum_{t=0}^{T-1} \lt{\theta_{t} - \theta_{t+1}}^2\right]} \nonumber\\
	&\qquad + 2RC_d\frac{1}{T}\sum_{t=0}^{T-1}\eptt\left[\|\sgdt F(\theta_t, \alpha_t) - \gdt F(\theta_t, \alpha_t)\|_2 \right] + 2RC_d 	\frac{1}{T}\sum_{t=0}^{T-1}\eptt\left[\|\gdt F(\theta_t, \alpha_t)  - \gdt g(\theta_t)\|_2\right] \nonumber\\
	&\quad\overset{(ii)}\le \left(\frac{2(\mu L_{11}+ L_{12}L_{21})R}{\mu} + \frac{\sqrt{|\Ac|}R_{max}}{(1-\gamma)^2}\right)C_d \sqrt{\frac{4\mu}{\mu L_{11} + L_{12}L_{21}}\frac{\eptt\left[g(\theta_{0}) - g(\theta_{T}) \right]}{T}}\\ 
	& \qquad +\left(\frac{2(\mu L_{11}+ L_{12}L_{21})R}{\mu} + \frac{\sqrt{|\Ac|}R_{max}}{(1-\gamma)^2}\right)C_d \sqrt{ \frac{8\mu^2L_{22}^2}{(\mu L_{11} + L_{12}L_{21})^2}\eptt\left[\norm{\alpha_t - \alpha_{op}(\theta_{t})}_2^2\right]}\\
	&\qquad +\left(\frac{2(\mu L_{11}+ L_{12}L_{21})R}{\mu} + \frac{\sqrt{|\Ac|}R_{max}}{(1-\gamma)^2}\right)C_d \sqrt{\frac{8\mu^2}{(\mu L_{11} + L_{12}L_{21})^2}\eptt\left[\norm{\sgdt F(\theta_t, \alpha_t) - \gdt F(\theta_t, \alpha_t)}_2^2\right]}\\
	&\qquad + 2RC_d\frac{1}{T}\sum_{t=0}^{T-1}\eptt\left[\|\sgdt F(\theta_t, \alpha_t) - \gdt F(\theta_t, \alpha_t)\|_2 \right] + 2RC_d 	\frac{1}{T}\sum_{t=0}^{T-1}\eptt\left[\|\gdt F(\theta_t, \alpha_t)  - \gdt g(\theta_t)\|_2\right] \nonumber\\
	&\quad\overset{(iii)}\le \left(\frac{2(\mu L_{11}+ L_{12}L_{21})R}{\mu} + \frac{\sqrt{|\Ac|}R_{max}}{(1-\gamma)^2}\right)C_d\sqrt{\frac{4\mu}{\mu L_{11} + L_{12}L_{21}}\frac{R_{max}}{(1-\gamma)T}}\\
	&\qquad +\left(\frac{\sqrt{|\Ac|}R_{max}}{(1-\gamma)^2}\frac{2\mu}{\mu L_{11} + L_{12}L_{21}} + 5R\right)2L_{22}C_d\sqrt{C_\alpha^2 e^{-\frac{\mu^2}{8L_{22}^2} K} + \frac{48C_r^2}{\mu^2(1-\gamma)^2}(1+\frac{C_M}{1-\rho})\frac{1}{B}} \\
	&\qquad +\left(\frac{\sqrt{|\Ac|}R_{max}}{(1-\gamma)^2}\frac{2\mu}{\mu L_{11} + L_{12}L_{21}} + 5R\right)2C_d\sqrt{\frac{4|\Ac|R_{max}^2 }{b(1-\gamma^{1/2})^2(1-\gamma)^2}  \left(1+\frac{2C_M\rho}{1-\rho}\right)\frac{1}{b}}\\
	&\quad\overset{(iv)}\le  \bigO{\frac{1}{(1-\gamma)^3 \sqrt{T}}} + \bigO{e^{-(1-\gamma)^2K}}+ \bigO{\frac{1}{(1-\gamma)^3 \sqrt{B}}} + \bigO{\frac{1}{(1-\gamma)^3\sqrt{b}}},
	\end{align*}
	where $(i)$ follows because $\eptt\left[X\right]\le \sqrt{\eptt\left[X^2\right]}$ holds for any random variable $X$, $(ii)$ follows by telescoping \cref{eq:telemo} and further because $\sqrt{a+b}\le \sqrt{a} +\sqrt{b}$ holds, for all $a, b >0$, $(iii)$ follows from \Cref{lemma:pgerrorbound,thm:alphaupdate} and because $\eptt\left[X\right]\le \sqrt{\eptt\left[X^2\right]}$ holds for any random variable $X$, and $(iv)$ follows because $L_{11} =\bigO{\frac{1}{(1-\gamma)^2}}$, $L_{12}= \bigO{\frac{1}{(1-\gamma)^2}}$, $L_{21}= \bigO{\frac{1}{1-\gamma}}$, $L_{22}= \bigO{\frac{1}{1-\gamma}}$, $C_d = \bigO{\frac{1}{1-\gamma}}$ and $\bigO{\frac{1}{1-\gamma^{1/2}}} \le \bigO{{\frac{1}{1-\gamma}}}$.
  	
\subsection{Proof of \Cref{thm:fwalg}}
By the gradient Lipschitz condition (established in \Cref{lemma:gradientlipschitz}) of $g(\theta)$, we have
\begin{align}
	g(\theta_{t+1}) &\le g(\theta_t) + \inner{\gdt g(\theta_t)}{\theta_{t+1} - \theta_{t}} + \left(\frac{L_{11}}{2} + \frac{L_{12}L_{21}}{2\mu}\right)\norm{\theta_{t+1} -\theta_t}_2^2 \nonumber\nonumber\\
	&= g(\theta_t) + \eta\inner{\gdt g(\theta_t)}{\hat{v_t} - \theta_t} + \left(\frac{L_{11}}{2} + \frac{L_{12}L_{21}}{2\mu}\right)\eta^2\norm{\hat{v}_t -\theta_t}_2^2\nonumber\\
	&\overset{(i)}\le g(\theta_t) + \eta\inner{\sgdt F(\theta_t, \alpha_t)}{\hat{v_t} -\theta_t} + \eta\inner{\gdt g(\theta_t) -\sgdt F(\theta_t, \alpha_t)}{\hat{v}_t - \theta_t} \nonumber\\
	&\quad + \left(2 L_{11} + \frac{2L_{12}L_{21}}{\mu}\right)\eta^2R^2\nonumber\\
	&\overset{(ii)}\le g(\theta_t) + \eta\inner{\sgdt F(\theta_t, \alpha_t)}{v_t -\theta_t} + \eta \inner{\gdt g(\theta_t) -\sgdt F(\theta_t, \alpha_t)}{\hat{v}_t - \theta_t} \nonumber\\
	&\quad + \left(2 L_{11} + \frac{2L_{12}L_{21}}{\mu}\right)\eta^2R^2\nonumber\\
	& = g(\theta_t) + \eta\inner{\gdt g(\theta_t)}{v_t - \theta_t} + \eta \inner{\gdt g(\theta_t) - \sgdt F(\theta_t, \alpha_t)}{\hat{v}_t - v_t}\nonumber \\
	&\quad + \left(2 L_{11} + \frac{2L_{12}L_{21}}{\mu}\right)\eta^2R^2,\label{eq:midfwpg}
\end{align}
where $(i)$ follows because $\norm{\hat{v}_t -\theta_t}_2\le 2R$, and $(ii)$ follows by definition of $\hat{v}_t$ in \cref{eq:fwupdate} (recall that $\hat{v}_t \defeq \argmax_{\theta\in\Theta_p} \langle\theta, -\sgdt F(\theta_{t}, \alpha_{t})\rangle$), and further we define $v_t \defeq  \argmax_{\theta\in\Theta} \inner{\theta}{-\gdt g(\theta_t)}$. We continue the proof as follows:
\begin{align}
	&\max_{\theta\in\Theta} \inner{\gdt g(\theta_t)}{\theta_t - \theta}\nonumber\\
	&\quad \overset{(i)}= \inner{\gdt g(\theta_t)}{\theta_t - v_t} \nonumber\\
	&\quad\overset{(ii)}\le \eta^{-1} \left(g(\theta_t) - g(\theta_{t+1})\right) +\left(2 L_{11} + \frac{2L_{12}L_{21}}{\mu}\right)\eta R^2\nonumber\\ 
	&\qquad + \inner{\gdt g(\theta_t) - \gdt F(\theta_t, \alpha_t)}{\hat{v}_t - v_t} + \inner{\gdt F(\theta_t, \alpha_t) - \sgdt F(\theta_t, \alpha_t)}{\hat{v}_t - v_t} \nonumber\\
	&\quad\le \eta^{-1} \left(g(\theta_t) - g(\theta_{t+1})\right) + \left(2 L_{11} + \frac{2L_{12}L_{21}}{\mu}\right)\eta R^2 \nonumber\\
	&\qquad +2R\norm{\gdt g(\theta_t) - \gdt F(\theta_t, \alpha_t)}_2 + 2R\norm{\gdt F(\theta_t, \alpha_t) - \sgdt F(\theta_t, \alpha_t)}_2,\label{eq:midfwpg2}
\end{align}
where $(i)$ follows by definition $v_t \defeq  \argmax_{\theta\in\Theta} \inner{\theta}{-\gdt g(\theta_t)}$, and $(ii)$ follows by rearranging \cref{eq:midfwpg}.

Finally, we complete the proof as follows:
\begin{align*}
	&\frac{1}{T}\sum_{t=0}^{T-1}\eptt \left[g(\theta_t)\right] - g(\theta^*)\nonumber\\
	&\quad\overset{(i)}\le C_d\cdot\frac{1}{T}\sum_{t=0}^{T-1}\eptt\left[\max_{\theta\in\Theta} \inner{\gdt g(\theta_t)}{\theta_t - \theta}\right] \nonumber\\	
	&\quad\overset{(ii)}\le \frac{C_d \eptt\left[g(\theta_0) - g(\theta_T)\right]}{\eta T} + C_d \left(2 L_{11} + \frac{2L_{12}L_{21}}{\mu}\right)\eta R^2+ \frac{2RC_d}{T}\sum_{t= 0}^{T-1}\eptt\norm{\gdt g(\theta_t) - \gdt F(\theta_t, \alpha_t)}_2\\
	&\qquad+ \frac{2RC_d}{T}\sum_{t= 0}^{T-1}\eptt\norm{\gdt F(\theta_t, \alpha_t) - \sgdt F(\theta_t, \alpha_t)}_2 \\
	&\quad\overset{(iii)}\le C_d\cdot\frac{ R_{max} + 2(1-\gamma)^{3}\left( L_{11} + L_{12}L_{21}\mu^{-1}\right)R^2}{(1-\gamma)^{2}\sqrt{T}} + 2RC_d\sqrt{\frac{4|\Ac|R_{max}^2 }{b(1-\gamma^{1/2})^2(1-\gamma)^2}  \left(1+\frac{2C_M\rho}{1-\rho}\right)\frac{1}{b}} \\
	&\qquad + 2RC_dL_{22}\sqrt{C_\alpha^2 e^{-\frac{\mu^2}{8L_{22}^2} K} + \frac{48C_r^2}{(1-\gamma)^2\mu^2}(1+\frac{C_M}{1-\rho})\frac{1}{B}}\\
	&\quad \overset{(iv)}\le \bigO{\frac{1}{(1-\gamma)^{3} \sqrt{T}}} + \bigO{e^{-(1-\gamma)^2K}}+ \bigO{\frac{1}{(1-\gamma)^3 \sqrt{B}}} + \bigO{\frac{1}{(1-\gamma)^3\sqrt{b}}}, 
\end{align*}
where $(i)$ follows from \Cref{prop:PL-like}, $(ii)$ follows from telescoping \cref{eq:midfwpg2}, $(iii)$ follows from \Cref{thm:alphaupdate,lemma:pgerrorbound} and because  $\eta = \frac{1-\gamma}{\sqrt{T}}$ and $\eptt\left[X\right]\le \sqrt{\eptt\left[X^2\right]}$ holds for any random variable $X$, and $(iv)$ follows because $L_{11} =\bigO{\frac{1}{(1-\gamma)^2}}$, $L_{12}= \bigO{\frac{1}{(1-\gamma)^2}}$, $L_{21}= \bigO{\frac{1}{1-\gamma}}$, $L_{22}= \bigO{\frac{1}{1-\gamma}}$, $C_d = \bigO{\frac{1}{1-\gamma}}$ and $\bigO{\frac{1}{1-\gamma^{1/2}}} \le \bigO{\frac{1}{1-\gamma}}$.


\section{Proof of \Cref{thm:trpo1,thm:trpo2}: Global Convergence of TRPO-GAIL}
In this section, we add the subscript $\lambda$ to the notations of the Q-function $Q^\pi_\alpha(s,a)$, the value function $V(\pi, r_\alpha)$, the objective function $F(\theta, \alpha)$ and $g(\theta)$ in order to emphasize that these functions are derived under $\lambda$-regularized MDP.

\subsection{Supporting Lemmas}
In this subsection, we introduce several useful lemmas.
\begin{lemma}(\textrm{\cite[Lemma 9.1]{beck2017first}})\label{lemma:threepoint}
	Consider a proper closed convex function $\omega$: $E \to (-\infty, \infty]$. Let $dom(\partial\omega)$ denote the subset of $E$ where $\omega$ is differentiable and $dom(\omega)$ denote the subset of $E$ where the value of $\omega$ is finite. Assume $a, b\in dom(\partial\omega)$ and $c\in dom(\omega)$. Then the following inequality holds:
	\begin{equation*}
		\inner{\nabla\omega(b) - \nabla\omega(a)}{c-a} = B_\omega (c,a) + B_\omega(a,b) - B_\omega(c,b),
	\end{equation*}
	where $B_\omega(\cdot, \cdot)$ denotes the Bregman distance associated with $\omega(\cdot)$. 
\end{lemma}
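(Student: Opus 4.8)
The plan is to establish the identity by direct expansion, since it is purely algebraic. Recall the definition of the Bregman distance associated with $\omega$: for $x \in \dom(\omega)$ and $y \in \dom(\partial\omega)$, $B_\omega(x,y) = \omega(x) - \omega(y) - \inner{\nabla\omega(y)}{x-y}$. Under the stated hypotheses ($a,b \in \dom(\partial\omega)$ and $c \in \dom(\omega)$), each of $B_\omega(c,a)$, $B_\omega(a,b)$, and $B_\omega(c,b)$ is well defined, so I would write all three out explicitly and then form the right-hand side combination $B_\omega(c,a) + B_\omega(a,b) - B_\omega(c,b)$.

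The first observation is that every scalar term $\omega(\cdot)$ cancels in this combination: the two occurrences of $\omega(c)$ (from $B_\omega(c,a)$ and from $-B_\omega(c,b)$) cancel, the two occurrences of $\omega(a)$ (from $-B_\omega(c,a)$ and from $B_\omega(a,b)$) cancel, and the two occurrences of $\omega(b)$ (from $-B_\omega(a,b)$ and from $B_\omega(c,b)$) cancel. What remains is only the sum of inner products
\begin{equation*}
-\inner{\nabla\omega(a)}{c-a} - \inner{\nabla\omega(b)}{a-b} + \inner{\nabla\omega(b)}{c-b}.
\end{equation*}

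The second step is to collect the two terms carrying $\nabla\omega(b)$ using linearity of the inner product: since $(c-b)-(a-b) = c-a$, we have $\inner{\nabla\omega(b)}{c-b} - \inner{\nabla\omega(b)}{a-b} = \inner{\nabla\omega(b)}{c-a}$. Combining this with the remaining term $-\inner{\nabla\omega(a)}{c-a}$ and using bilinearity once more gives $\inner{\nabla\omega(b) - \nabla\omega(a)}{c-a}$, which is exactly the left-hand side, completing the argument.

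There is essentially no obstacle here: convexity of $\omega$ is not needed for the equality itself (it is used only to make $B_\omega$ a bona fide nonnegative ``distance'' elsewhere in the analysis), and the only point meriting care is the domain bookkeeping — namely verifying that each Bregman distance appearing in the expansion is defined, which is precisely what the hypotheses $a,b \in \dom(\partial\omega)$ and $c \in \dom(\omega)$ guarantee.
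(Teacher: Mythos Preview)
Your proof is correct; this is precisely the standard three-point identity verified by direct expansion of the Bregman distances. The paper does not provide a proof of this lemma at all---it merely cites \cite[Lemma 9.1]{beck2017first}---so there is nothing further to compare.
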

\begin{lemma}(\cite[Lemma 25]{shani2020adaptive})\label{lemma:upperboundofdualnorm}
	Consider the Q-function estimation in \Cref{alg:EstQ}. For any $t\in \{0, 1, \cdots, T-1\}$, we have 
	\begin{align*}
		\norm{-\hat{Q}^{\pi_{\theta_t}}_{\lambda, \alpha_t}(s, \cdot) + \lambda \nabla\omega(\pi_{\theta_t}(\cdot|s))}_\infty \le C_{\omega}(t;\lambda),
	\end{align*}
	where $\hat{Q}^{\pi_{\theta_t}}_{\lambda, \alpha_t}$ is the Q-function estimated under the reward function $r_{\alpha_t}$ and policy $\pi_{\theta_t}$, and $C_{\omega}(t;\lambda)\le \bigO{\frac{C_rC_\alpha(1+ \indicator{\lambda\neq 0}\log t)}{1-\gamma^{1/2}}}$. 
\end{lemma}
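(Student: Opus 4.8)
The plan is to bound the two summands separately via the triangle inequality,
\[
\norm{-\hat{Q}^{\pi_{\theta_t}}_{\lambda, \alpha_t}(s, \cdot) + \lambda \nabla\omega(\pi_{\theta_t}(\cdot|s))}_\infty \le \norm{\hat{Q}^{\pi_{\theta_t}}_{\lambda, \alpha_t}(s, \cdot)}_\infty + \lambda \norm{\nabla\omega(\pi_{\theta_t}(\cdot|s))}_\infty ,
\]
and to control each term by a pathwise (deterministic) argument, which is essential here since the bound is a sup-norm bound and cannot be obtained in expectation only. This is essentially the estimate of \cite[Lemma 25]{shani2020adaptive} specialized to the GAIL reward scale; I would re-derive it in two steps.

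For the first term I would exploit the structure of \Cref{alg:EstQ}: with the geometric cutoff $T\sim\mathrm{Geom}(1-\gamma^{1/2})$, the output is the finite sum $\hat{Q}^{\pi_{\theta_t}}_{\lambda, \alpha_t} = \sum_{\tau=1}^{T}\gamma^{\tau/2} r_{\lambda,\alpha_t}(s^q_\tau, a^q_\tau)$, so pathwise $\norm{\hat{Q}^{\pi_{\theta_t}}_{\lambda, \alpha_t}(s,\cdot)}_\infty \le \frac{1}{1-\gamma^{1/2}}\sup_{s,a}\abs{r_{\lambda,\alpha_t}(s,a)}$. Since the negative-entropy offset obeys $0\le \omega(\pi(\cdot|s))\le \log|\Ac|$, the regularized reward satisfies $\sup_{s,a}\abs{r_{\lambda,\alpha_t}(s,a)} \le R_{\max} + \lambda\log|\Ac|$, and using $R_{\max}\le C_rC_\alpha$ (the additive $\lambda\log|\Ac|$ being a fixed constant) this is $\bigO{C_rC_\alpha}$; hence the first term is $\bigO{\frac{C_rC_\alpha}{1-\gamma^{1/2}}}$ with no dependence on $t$. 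When $\lambda=0$ the second term vanishes and this already yields $C_\omega(t;0)\le \frac{R_{\max}}{1-\gamma^{1/2}}$.

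For the second term, since $\nabla\omega(\pi(\cdot|s))_a = 1 + \log\pi(a|s)$ and $0<\pi_{\theta_t}(a|s)\le 1$, we have $\lambda\norm{\nabla\omega(\pi_{\theta_t}(\cdot|s))}_\infty \le \lambda + \lambda\max_a\bigl(-\log\theta_t(s,a)\bigr)$, so it remains to bound $\lambda\max_a(-\log\theta_t(s,a))$. The idea is to unroll the closed-form TRPO update \cref{eq:trpo}: with $\eta_t\lambda = \frac{1}{t+2}$ it reads $\log\theta_{t+1}(s,a) = \frac{t+1}{t+2}\log\theta_t(s,a) + \frac{1}{\lambda(t+2)}\hat{Q}^{\pi_{\theta_t}}_{\lambda, \alpha_t}(s,a) + c_t$, where $c_t$ is a normalization constant independent of $a$. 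Feeding the Step-2 bound $\abs{\hat{Q}^{\pi_{\theta_t}}_{\lambda, \alpha_t}} \le \bigO{\frac{C_rC_\alpha}{1-\gamma^{1/2}}}$ into this recursion, together with the elementary facts $\theta_t(s,a)\le 1$ and $\sum_a\theta_t(s,a)=1$ used to control $c_t$, one tracks the accumulation of the $\frac{1}{t+2}$-weighted increments across iterations; this produces a harmonic sum $\sum_{\tau=1}^{t}\frac{1}{\tau} = \bigO{\log t}$, which is the origin of the $\log t$ factor, and gives $\lambda\max_a(-\log\theta_t(s,a)) \le \bigO{\frac{C_rC_\alpha\log t}{1-\gamma^{1/2}}}$. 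Combining the two bounds yields $C_\omega(t;\lambda)\le \bigO{\frac{C_rC_\alpha(1+\indicator{\lambda\neq0}\log t)}{1-\gamma^{1/2}}}$.

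I expect the main obstacle to be the recursion in the third paragraph: one must set up the induction for $\norm{\log\pi_{\theta_t}(\cdot|s)}_\infty$ so that the time-varying stepsize $\eta_t = \frac{1}{\lambda(t+2)}$ and the self-referential nature of the per-step increments (each increment's magnitude depends on the current $\norm{\log\pi_{\theta_t}}_\infty$ through both $\hat{Q}^{\pi_{\theta_t}}_{\lambda,\alpha_t}$ and $c_t$) telescope into an $\bigO{\log t}$ bound rather than a polynomially growing one. The remaining ingredients — the geometric-sum bound on the EstQ output and the triangle inequality — are routine.
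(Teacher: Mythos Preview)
The paper does not supply its own proof of this lemma; it is simply quoted from \cite[Lemma~25]{shani2020adaptive} and used as a black box in the TRPO-GAIL analysis. Your proposal therefore goes beyond what the paper does by actually sketching the argument, and your outline follows the same route as the original reference: split via the triangle inequality, bound the EstQ output pathwise by the geometric sum $\frac{R_{\max}+\lambda\log|\Ac|}{1-\gamma^{1/2}}$, and control $\lambda\norm{\nabla\omega(\pi_{\theta_t}(\cdot|s))}_\infty$ by unrolling the closed-form update \cref{eq:trpo} with $\eta_t\lambda=\frac{1}{t+2}$.

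One remark on your anticipated ``main obstacle'': the recursion is less self-referential than you fear. Because the regularized reward is $r_{\alpha_t}(s,a)+\lambda\omega(\pi_{\theta_t}(\cdot|s))$ and $\omega$ (negative entropy plus $\log|\Ac|$) is uniformly bounded in $[0,\log|\Ac|]$, the EstQ output $\hat{Q}^{\pi_{\theta_t}}_{\lambda,\alpha_t}$ is bounded by a constant independent of $\norm{\log\pi_{\theta_t}}_\infty$. Consequently, subtracting the update at two actions to cancel the normalizer $c_t$ gives the clean recursion
\[
(t+2)\,D_{t+1}\;\le\;(t+1)\,D_t \;+\; \tfrac{2}{\lambda}\norm{\hat{Q}^{\pi_{\theta_t}}_{\lambda,\alpha_t}(s,\cdot)}_\infty,
\qquad D_t\defeq \max_{a,a'}\abs{\log\theta_t(s,a)-\log\theta_t(s,a')},
\]
which telescopes directly without any bootstrapping on $D_t$ inside the increment. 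Combined with $\max_a(-\log\theta_t(s,a))\le D_t+\log|\Ac|$, this already delivers the required $\bigO{\frac{C_rC_\alpha}{1-\gamma^{1/2}}}$ bound on $\lambda\norm{\nabla\omega(\pi_{\theta_t})}_\infty$; the $\log t$ in the stated $C_\omega(t;\lambda)$ is simply the (looser) bound recorded in \cite{shani2020adaptive} and is not needed from a tighter version of this recursion. Either way your plan is sound.
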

\begin{lemma}\label{lemma:fundamentaltransition} 
	For any policy $\pi,\pi' \in \Delta_{\Ac}$ and $\alpha \in \Lambda$, the following equality holds, 
	\begin{align*}
		& (V_\lambda(\pi, r_{\alpha})-V_\lambda(\pi', r_{\alpha}))(1-\gamma)\\
		&= \sum_{s\in\Sc} d_{\pi'}(s)\left(\inner{-Q^{\pi}_{\lambda,\alpha}(s,\cdot) + \lambda\nabla\omega(\pi(\cdot|s))}{\pi'(\cdot|s) - \pi(\cdot|s)} + \lambda B_\omega(\pi'(\cdot|s), \pi(\cdot|s))\right),
	\end{align*}
	where $V_\lambda(\pi, r_\alpha)$ is the average value function under  $\lambda$-regularized MDP with the reward function $r_{\alpha}$ and  $d_{\pi^\prime}$ is the state visitation distribution of $\pi^\prime$.
\end{lemma}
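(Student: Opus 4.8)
The plan is to derive this identity as the $\lambda$-regularized analogue of the classical performance difference lemma, by unrolling a trajectory generated by $\pi'$ and telescoping against the value function of $\pi$. Recall that in the $\lambda$-regularized MDP the value and $Q$-functions of any policy $\pi$ satisfy the Bellman relations $Q^{\pi}_{\lambda,\alpha}(s,a) = r_\alpha(s,a) + \gamma\,\mathbb{E}_{s'\sim\mathsf{P}(\cdot|s,a)}\big[V^{\pi}_{\lambda,\alpha}(s')\big]$ and $V^{\pi}_{\lambda,\alpha}(s) = \inner{Q^{\pi}_{\lambda,\alpha}(s,\cdot)}{\pi(\cdot|s)} - \lambda\,\omega(\pi(\cdot|s))$ (the latter being the regularized greedy decomposition underlying the TRPO update), with $V_\lambda(\pi,r_\alpha) = \mathbb{E}_{s_0\sim\zeta}\big[V^{\pi}_{\lambda,\alpha}(s_0)\big]$ and, along the trajectory $\tau=(s_0,a_0,s_1,a_1,\dots)$ of $\pi'$, $V^{\pi'}_{\lambda,\alpha}(s_0) = \mathbb{E}_{\tau\sim\pi'}\big[\sum_{t\ge0}\gamma^t\big(r_\alpha(s_t,a_t) - \lambda\,\omega(\pi'(\cdot|s_t))\big)\mid s_0\big]$.

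First I would write $V_\lambda(\pi,r_\alpha) - V_\lambda(\pi',r_\alpha) = \mathbb{E}_{s_0\sim\zeta}\big[V^{\pi}_{\lambda,\alpha}(s_0) - V^{\pi'}_{\lambda,\alpha}(s_0)\big]$, expand $V^{\pi'}_{\lambda,\alpha}(s_0)$ by the trajectory form above, and insert the telescoping identity $\sum_{t\ge0}\gamma^t\big(V^{\pi}_{\lambda,\alpha}(s_t) - \gamma V^{\pi}_{\lambda,\alpha}(s_{t+1})\big) = V^{\pi}_{\lambda,\alpha}(s_0)$, which yields
\begin{align*}
V^{\pi}_{\lambda,\alpha}(s_0) - V^{\pi'}_{\lambda,\alpha}(s_0) = \mathbb{E}_{\tau\sim\pi'}\Big[\sum_{t\ge0}\gamma^t\Big(V^{\pi}_{\lambda,\alpha}(s_t) - r_\alpha(s_t,a_t) - \gamma V^{\pi}_{\lambda,\alpha}(s_{t+1}) + \lambda\,\omega(\pi'(\cdot|s_t))\Big)\Big].
\end{align*}
Taking the conditional expectation over $s_{t+1}\sim\mathsf{P}(\cdot|s_t,a_t)$ collapses $r_\alpha(s_t,a_t)+\gamma V^{\pi}_{\lambda,\alpha}(s_{t+1})$ to $Q^{\pi}_{\lambda,\alpha}(s_t,a_t)$; substituting the second Bellman relation for $V^{\pi}_{\lambda,\alpha}(s_t)$ and taking the conditional expectation over $a_t\sim\pi'(\cdot|s_t)$ (which sends $Q^{\pi}_{\lambda,\alpha}(s_t,a_t)$ to $\inner{Q^{\pi}_{\lambda,\alpha}(s_t,\cdot)}{\pi'(\cdot|s_t)}$) reduces the summand to $\inner{-Q^{\pi}_{\lambda,\alpha}(s_t,\cdot)}{\pi'(\cdot|s_t)-\pi(\cdot|s_t)} + \lambda\big(\omega(\pi'(\cdot|s_t)) - \omega(\pi(\cdot|s_t))\big)$. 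Averaging over $s_0\sim\zeta$ and using $\mathbb{E}_{\tau\sim\pi'}\big[\sum_{t\ge0}\gamma^t f(s_t)\big] = \tfrac{1}{1-\gamma}\sum_{s\in\Sc}d_{\pi'}(s)f(s)$ (the definition of $d_{\pi'}$) gives
\begin{align*}
(1-\gamma)\big(V_\lambda(\pi,r_\alpha) - V_\lambda(\pi',r_\alpha)\big) = \sum_{s\in\Sc}d_{\pi'}(s)\Big(\inner{-Q^{\pi}_{\lambda,\alpha}(s,\cdot)}{\pi'(\cdot|s)-\pi(\cdot|s)} + \lambda\big(\omega(\pi'(\cdot|s)) - \omega(\pi(\cdot|s))\big)\Big).
\end{align*}

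It then remains to rewrite the $\omega$-difference: by the definition of the Bregman distance, $\omega(\pi'(\cdot|s)) - \omega(\pi(\cdot|s)) = \inner{\nabla\omega(\pi(\cdot|s))}{\pi'(\cdot|s)-\pi(\cdot|s)} + B_\omega(\pi'(\cdot|s),\pi(\cdot|s))$ (one may also specialize \Cref{lemma:threepoint}). Substituting this and merging the two inner products against $\pi'(\cdot|s)-\pi(\cdot|s)$ gives exactly the claimed identity.

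The only delicate point is the bookkeeping of the regularizer: the trajectory form of $V^{\pi'}_{\lambda,\alpha}$ carries $\lambda\omega(\pi'(\cdot|s_t))$, whereas the Bellman decomposition of $V^{\pi}_{\lambda,\alpha}(s_t)$ carries $\lambda\omega(\pi(\cdot|s_t))$; these do not cancel, and it is precisely their difference that produces the $\lambda\nabla\omega$ inner product and the $\lambda B_\omega$ term. Everything else is the standard performance-difference telescoping, and the only routine verifications are the interchange of the infinite discounted sum with the expectations (legitimate since $r_\alpha$ is bounded by $R_{\max}$, $\omega$ is bounded by $\log|\Ac|$ on $\Delta_{\Ac}$, and $\gamma\in(0,1)$) and the implicit regularity that $\pi(\cdot|s)$ lies in the relative interior of the simplex so that $\nabla\omega(\pi(\cdot|s))$ and $B_\omega(\cdot,\pi(\cdot|s))$ are defined — which holds for the iterates $\pi_{\theta_t}$ produced by the multiplicative TRPO update in \cref{eq:trpo}.
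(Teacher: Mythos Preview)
Your argument is correct. Both proofs are, at bottom, the $\lambda$-regularized performance difference lemma, but they are organized differently. The paper's proof factors through the regularized Bellman operator $T_\lambda^{\pi'}$: it quotes \cite[Lemma~24]{shani2020adaptive} to rewrite the inner product $\inner{-Q^{\pi}_{\lambda,\alpha}+\lambda\nabla\omega(\pi)}{\pi'-\pi}$ as $-(T_\lambda^{\pi'}V^{\pi}_{\lambda,\alpha}-V^{\pi}_{\lambda,\alpha})-\lambda B_\omega(\pi',\pi)$, and then \cite[Lemma~29]{shani2020adaptive} to identify $\sum_s d_{\pi'}(s)(T_\lambda^{\pi'}V^{\pi}_{\lambda,\alpha}(s)-V^{\pi}_{\lambda,\alpha}(s))$ with $(1-\gamma)(V_\lambda(\pi',r_\alpha)-V_\lambda(\pi,r_\alpha))$. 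You instead carry out the trajectory telescoping directly, obtaining the intermediate form with $\lambda(\omega(\pi')-\omega(\pi))$, and then invoke the Bregman identity (a special case of \Cref{lemma:threepoint}) to split that difference into the $\lambda\nabla\omega$ inner product and the $\lambda B_\omega$ term. Your route is more self-contained since it does not rely on the two cited lemmas, whereas the paper's route makes the role of the Bellman operator explicit; substantively the two computations are equivalent rearrangements of the same identity.
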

\begin{proof}[Proof of \Cref{lemma:fundamentaltransition}]
	Following from \cite[Lemma 24]{shani2020adaptive}, for any $s\in \Sc$, we have 
	\begin{align}
		\inner{-Q^{\pi}_{\lambda,\alpha}(s,\cdot) + \lambda\nabla\omega(\pi(\cdot|s))}{\pi'(\cdot|s) - \pi(\cdot|s)} = -(T_\lambda^{\pi'} V^\pi_{\lambda, \alpha}(s) - V^\pi_{\lambda, \alpha}(s)) -  \lambda B_\omega(\pi'(\cdot|s), \pi(\cdot|s)),\label{eq:trpomid1}
	\end{align}
	where $T_\lambda^{\pi'}$ is the Bellman operator under $\lambda$-regularized MDP, i.e., 
	$$T_\lambda^{\pi'} V^\pi_{\lambda, \alpha}(s) = \sum_{a\in \Ac} \big(\pi'(a|s)r_{\alpha, \lambda}(s,a) + \sum_{s'\in\Sc} \mathsf{P}(s'|s,a)V^{\pi}_{\lambda, \alpha}(s')\big).$$
	
	Furthermore, we have
	\begin{align*}
		&V_\lambda(\pi', r_{\alpha}) - V_\lambda(\pi, r_{\alpha})\\
		&\quad=\sum_{s} \zeta(s) (V^{\pi'}_{\lambda, \alpha}(s) -V^{\pi}_{\lambda, \alpha}(s))  \\
		&\quad\overset{(i)}= \frac{1}{(1-\gamma)}\sum_{s\in\Sc}d_{\pi'}(s)(T_\lambda^{\pi'} V^\pi_{\lambda, \alpha}(s) - V^\pi_{\lambda, \alpha}(s))	\\
		&\quad\overset{(ii)}=-\frac{1}{1-\gamma} \sum_{s\in\Sc} d_{\pi'}(s)\left(\inner{-Q^{\pi}_{\lambda,\alpha}(s,\cdot) + \lambda\nabla\omega(\pi(\cdot|s))}{\pi'(\cdot|s) - \pi(\cdot|s)} + \lambda B_\omega(\pi'(\cdot|s), \pi(\cdot|s))\right),
	\end{align*}
	where $(i)$ follows from \cite[Lemma 29]{shani2020adaptive} and $(ii)$ follows by multiplying \cref{eq:trpomid1} by $d_{\pi'}(s)$ and take the summation over $\Sc$.
\end{proof}
\subsection{Proof of \Cref{thm:trpo1,thm:trpo2}}

Since the unregularized MDP can be viewed as a special case of the regularized MDP, i.e., $\lambda = 0$, in this subsection,  we first develop our proof for the general regularized MDP up to a certain step, and then specialize to the case with $\lambda =0$ for proving \Cref{thm:trpo1} and continue to keep $\lambda$ general for proving \Cref{thm:trpo2}.

To we start the proof, recall that the update of $\theta_t$ specified in \cref{eq:trpo} satisfies, 
\begin{align}
	\pi_{\theta_{t+1}}(\cdot|s) \in \argmin_{\pi \in \Delta_{\Ac}} (\underbrace{\inner{-\hat{Q}^{\pi_{\theta_t}}_{\lambda,\alpha_t}(s, \cdot) + \lambda \nabla \omega (\pi_{\theta_t}(\cdot | s))}{\pi - \pi_{\theta_t}(\cdot|s)} + \eta_t^{-1} B_{\omega} (\pi, \pi_{\theta_t}(\cdot |s))}_{\defeq f_0(\pi)}).\nonumber
\end{align}
Following from the first-order optimality condition, we have
\begin{align}
	\nabla_{\pi} f_0 (\pi_{\theta_{t+1}} (\cdot|s))^\top(\pi -\pi_{\theta_{t+1}}(\cdot|s)) \ge 0, \forall \pi \in \Delta_{\Ac},\nonumber
\end{align}
which together with the fact 
\begin{align*}
	\nabla_{\pi} f_0 (\pi) = -\hat{Q}^{\pi_{\theta_t}}_{\lambda,\alpha_t}(s, \cdot) + \lambda \nabla \omega (\pi_{\theta_t}(\cdot | s)) + \eta_t^{-1} (\nabla\omega(\pi) - \nabla\omega(\pi_{\theta_{t}}(\cdot | s))),
\end{align*}
implies that 
\begin{align}\label{eq:firstorderoptimality}
	\inner{-\hat{Q}^{\pi_{\theta_t}}_{\lambda,\alpha_t}(s, \cdot) + \lambda \nabla\omega(\pi_{\theta_t}(\cdot|s)) +\eta_t^{-1}(\nabla \omega(\pi_{\theta_{t+1}}(\cdot | s)) - \nabla\omega(\pi_{\theta_{t}}(\cdot | s))) }{\pi -\pi_{\theta_{t+1}}(\cdot|s)} \ge 0
\end{align}
holds for any $\pi$.

Taking $\pi = \pi_{\theta^*}(\cdot|s)$ in \cref{eq:firstorderoptimality}, we obtain
\begin{align}
	0 &\le\eta_t \inner{-\hat{Q}^{\pi_{\theta_t}}_{\lambda,\alpha_t}(s, \cdot) + \lambda \nabla\omega(\pi_{\theta_t}(\cdot|s))}{\pi_{\theta^*}(\cdot|s) - \pi_{\theta_t}(\cdot |s)} \nonumber\\
	&\quad + \eta_t \inner{-\hat{Q}^{\pi_{\theta_t}}_{\lambda,\alpha_t}(s, \cdot) + \lambda \nabla\omega(\pi_{\theta_t}(\cdot|s))}{\pi_{\theta_t}(\cdot |s) -\pi_{\theta_{t+1}}(\cdot |s)} \nonumber\nonumber\\
	&\quad + \inner{\nabla\omega(\pi_{\theta_{t+1}}(\cdot | s)) - \nabla\omega(\pi_{\theta_{t}}(\cdot | s)) }{\pi_{\theta^*}(\cdot|s)  - \pi_{\theta_{t+1}}(\cdot|s)}\nonumber\\
	& \overset{(i)}\le \eta_t \inner{-\hat{Q}^{\pi_{\theta_t}}_{\lambda,\alpha_t}(s, \cdot) + \lambda \nabla\omega(\pi_{\theta_t}(\cdot|s))}{\pi_{\theta^*}(\cdot|s)  - \pi_{\theta_t}(\cdot |s)} \nonumber \\
	&\quad + \frac{\eta_t^2\norm{-\hat{Q}^{\pi_{\theta_t}}_{\lambda,\alpha_t}(s, \cdot) + \lambda \nabla\omega(\pi_{\theta_t}(\cdot|s))}_\infty^2}{2} + \frac{\norm{\pi_{\theta_t}(\cdot |s) -\pi_{\theta_{t+1}}(\cdot |s)}_1^2}{2} \nonumber\\
	&\quad + B_{\omega}(\pi_{\theta^*}(\cdot|s) , \pi_{\theta_t}(\cdot| s)) - B_{\omega}(\pi_{\theta^*}(\cdot|s) , \pi_{\theta_{t+1}}(\cdot| s))  - B_{\omega}(\pi_{\theta_{t+1}}(\cdot| s), \pi_{\theta_t}(\cdot| s))\nonumber\\
	& \overset{(ii)}\le \eta_t \inner{-\hat{Q}^{\pi_{\theta_t}}_{\lambda,\alpha_t}(s, \cdot) + \lambda \nabla\omega(\pi_{\theta_t}(\cdot|s))}{\pi_{\theta^*}(\cdot|s)  - \pi_{\theta_t}(\cdot |s)} + \frac{\eta_t^2C_{\omega}(t;\lambda)^2}{2} \nonumber\\
	&\quad + B_{\omega}(\pi_{\theta^*}(\cdot|s) , \pi_{\theta_t}(\cdot| s)) - B_{\omega}(\pi_{\theta^*}(\cdot|s) , \pi_{\theta_{t+1}}(\cdot| s)), \label{eq:middletrpo}
\end{align}
where $(i)$ follows from H\"older's inequality and \Cref{lemma:threepoint}, and $(ii)$ follows from the \Cref{lemma:upperboundofdualnorm} and Pinsker's inequality given by $$\frac{\norm{\pi_{\theta_t}(\cdot |s) -\pi_{\theta_{t+1}}(\cdot |s)}_1^2}{2} \le \kl{\pi_{\theta_{t+1}}(\cdot| s)}{\pi_{\theta_t}(\cdot| s)} =  B_{\omega}(\pi_{\theta_{t+1}}(\cdot| s), \pi_{\theta_t}(\cdot| s)),$$
where $\kl{\cdot}{\cdot}$ denotes the KL-divergence. 

Taking expectation conditioned on $\mathcal{F}_t = \sigma(\theta_0, \theta_1, \cdots, \theta_t)$ over \cref{eq:middletrpo}, we have
\begin{align}\label{eq:fundamentalineqoftrpo}
	0&\le \eta_t \inner{-Q^{\pi_{\theta_t}}_{\lambda,\alpha_t}(s, \cdot) + \lambda \nabla\omega(\pi_{\theta_t}(\cdot|s))}{\pi_{\theta^*}(\cdot|s) - \pi_{\theta_t}(\cdot |s)} + \frac{\eta_t^2C_{\omega}(t;\lambda)^2}{2} \nonumber\\
	& \qquad+ B_{\omega}(\pi_{\theta^*}(\cdot|s) , \pi_{\theta_t}(\cdot| s))-\eptt\left[B_{\omega}(\pi_{\theta^*}(\cdot|s), \pi_{\theta_{t+1}}(\cdot| s))\middle| \mathcal{F}_t \right].
\end{align}

Since \cref{eq:fundamentalineqoftrpo} holds for any state, we multiply it by $d_{\pi_{\theta^*}}(s)$ for each state $s$ and take the summation over $\Sc$. Then we rearrange the resulting bound and obtain
\begin{align}
& \frac{\eta_t^2C_{\omega}(t;\lambda)^2}{2}+ \sum_{s\in\Sc}d_{\pi_{\theta^*}}(s)B_{\omega}(\pi_{\theta^*}(\cdot|s) , \pi_{\theta_t}(\cdot| s))-\sum_{s\in\Sc}d_{\pi_{\theta^*}}(s)\eptt\left[B_{\omega}(\pi_{\theta^*}(\cdot|s), \pi_{\theta_{t+1}}(\cdot| s))\middle| \mathcal{F}_t \right]\nonumber\\
&\quad\ge -\eta_t \sum_{s\in\Sc}d_{\pi_{\theta^*}}(s)\inner{-Q^{\pi_{\theta_t}}_{\lambda,\alpha_t}(s, \cdot) + \lambda \nabla\omega(\pi_{\theta_t}(\cdot|s))}{\pi_{\theta^*}(\cdot|s) - \pi_{\theta_t}(\cdot |s)}\nonumber\\
&\quad \overset{(i)}= \eta_t(1-\gamma)(V_\lambda(\pi_{\theta^*}, r_{\alpha_t}) - V_\lambda (\pi_{\theta_t}, r_{\alpha_t}))  + \eta_t\lambda \sum_{s\in\Sc}d_{\pi_{\theta^*}}(s)B_{\omega}(\pi_{\theta^*}(\cdot|s) , \pi_{\theta_t}(\cdot| s)),\label{eq:midtrpo}
\end{align}
where $(i)$ follows from applying \Cref{lemma:fundamentaltransition} with $\pi = \pi_{\theta_t}$ and $\pi' = \pi_{\theta^*}$. Rearranging  \cref{eq:midtrpo}, we obtain
\begin{align}
&V_\lambda(\pi_{\theta^*}, r_{\alpha_t}) - V_\lambda (\pi_{\theta_t}, r_{\alpha_t})\nonumber\\
&\quad\le \frac{1}{\eta_t (1-\gamma)} \sum_{s\in\Sc} d_{\pi_{\theta^*}}(s)(1-\lambda\eta_t)  \eptt\left[B_{\omega}(\pi_{\theta^*}(\cdot|s), \pi_{\theta_t}(\cdot| s))\right] \nonumber\nonumber\\
&\qquad - \frac{1}{\eta_t (1-\gamma)} \sum_{s\in\Sc} d_{\pi_{\theta^*}}(s) \eptt\left[B_{\omega}(\pi_{\theta^*}(\cdot|s), \pi_{\theta_{t+1}}(\cdot| s))\right] + \frac{\eta_t C_\omega(t, \lambda)^2}{2(1-\gamma)}.\label{eq:midtrpo2}
\end{align}

Furthermore, we proceed the proof as follows:
\begin{align}
	&\eptt\left[g_\lambda(\theta_t)\right] - g_{\lambda}(\theta^*) \nonumber\\
	&\quad=  \eptt\left[g_\lambda(\theta_t) - F_\lambda(\theta_t, \alpha_t)\right] + \eptt\left[ F_\lambda(\theta_t, \alpha_t)- g_{\lambda}(\theta^*)\right]\nonumber\\
	&\quad\overset{(i)}\le \eptt\left[g_\lambda(\theta_t) - F_\lambda(\theta_t, \alpha_t)\right] + \eptt\left[ F_{\lambda}(\theta_t, \alpha_t) - F_{\lambda}(\theta^*, \alpha_t)\right]\nonumber\\
	&\quad\overset{(ii)}= \eptt\left[g_\lambda(\theta_t) - F_\lambda(\theta_t, \alpha_t)\right] + \eptt\left[V_{\lambda}(\pi_{\theta^*}, \alpha_t)  - V_{\lambda}(\pi_{\theta_t}, \alpha_t)\right] \nonumber\\
	&\quad\overset{(iii)}\le  L_{22}^2\eptt\left[\norm{\alpha_t -\alpha_{op}(\theta_t)}_2^2\right]+ \frac{1}{\eta_t (1-\gamma)} \sum_{s\in\Sc} d_{\pi_{\theta^*}}(s)(1-\lambda\eta_t)  \eptt\left[B_{\omega}(\pi_{\theta^*}(\cdot|s), \pi_{\theta_t}(\cdot| s))\right] \nonumber\\
	&\qquad - \frac{1}{\eta_t (1-\gamma)} \sum_{s\in\Sc} d_{\pi_{\theta^*}}(s) \eptt\left[B_{\omega}(\pi_{\theta^*}(\cdot|s), \pi_{\theta_{t+1}}(\cdot| s))\right] + \frac{\eta_t C_\omega(t, \lambda)^2}{2(1-\gamma)}, \label{eq:importtrpo}
\end{align}
where $(i)$ follows because $g_{\lambda}(\theta^*) \ge F_\lambda(\theta^*, \ap(\theta_t))$, $(ii)$ follows from the definition of $F_\lambda(\theta, \alpha)$, and $(iii)$ follows from the gradient Lipschitz condition of $\alpha$ in \Cref{lemma:lipschitzcondtion} and \cref{eq:midtrpo2}.

Next, to prove \Cref{thm:trpo1}, we let $\lambda=0$ and recall $\eta_t = \frac{1-\gamma}{\sqrt{T}}$. Telescoping \cref{eq:importtrpo}, we obtain 
\begin{align*}
	&\frac{1}{T} \sum_{t=0}^{T-1} \eptt\left[g(\theta_t)\right] - g(\theta^*)\\
	&\quad\le \frac{1}{(1-\gamma)^2\sqrt{T}}\sum_{s\in\Sc} d_{\pi_{\theta^*}}(s) \eptt\left[B_{\omega}(\pi_{\theta^*}(\cdot|s), \pi_{\theta_{0}}(\cdot| s)) - B_{\omega}(\pi_{\theta^*}(\cdot|s), \pi_{\theta_{T}}(\cdot| s))\right] \\
	&\qquad + \frac{L_{22}^2}{T}\sum_{t=0}^{T-1}\eptt\left[\norm{\alpha_t -\alpha_{op}(\theta_t)}_2^2\right] + \frac{C_\omega^2}{2\sqrt{T}} \\
	&\quad\overset{(i)}\le L_{22}^2 C_\alpha^2 e^{-\frac{\mu^2}{8L_{22}^2} K} + \frac{48C_r^2L_{22}^2}{\mu^2(1-\gamma)^2}(1+\frac{C_M}{1-\rho})\frac{1}{B} + \frac{(1-\gamma)^2C_{\omega}^2 + 2\log |\Ac|}{2(1-\gamma)^2\sqrt{T}}\\
	&\quad\overset{(ii)}\le \bigO{\frac{1}{(1-\gamma)^2\sqrt{T}}} + \bigO{e^{-(1-\gamma)^2K}}+ \bigO{\frac{1}{(1-\gamma)^4 B}},
\end{align*}
where $(i)$ follows from \Cref{thm:alphaupdate} and because $0\le B_{\omega}(\pi_1,\pi_2)\le \log|\Ac|$ for any $\theta_1, \theta_2$ and $(ii)$ follows because $L_{22}= \bigO{\frac{1}{1-\gamma}}$ and $C_\omega = \bigO{\frac{1}{1-\gamma^{1/2}}}\le \bigO{\frac{1}{1-\gamma}}$. This completes the proof of \Cref{thm:trpo1}.

To prove the \Cref{thm:trpo2}, let $\eta_t = \frac{1}{\lambda(t+2)}$. Then, telescoping \cref{eq:importtrpo} and applying \Cref{thm:alphaupdate}, we obtain
\begin{align*}
		&\frac{1}{T} \sum_{t=0}^{T-1} \eptt\left[g_\lambda(\theta_t)\right] - g_\lambda(\theta^*)\\
		&\quad\le  L_{22}^2C_\alpha^2 e^{-\frac{\mu^2}{8L_{22}^2} K} + \frac{48C_r^2L_{22}^2}{\mu^2(1-\gamma)^2}(1+\frac{ C_M}{1-\rho})\frac{1}{B} + \frac{C_\omega^2(T,\lambda)}{2(1-\gamma)\lambda}\frac{\log(T+1)}{T}  \\
		&\qquad+\frac{\lambda \sum_{s}d_{\pi_{\theta^*}(s)}\eptt[B_\omega(\pi_{\theta^*}(\cdot|s), \pi_{\theta_{0}}(\cdot| s)) - (T+1)B_\omega(\pi_{\theta^*}(\cdot|s), \pi_{\theta_{T}}(\cdot| s))] }{(1-\gamma)T}\\
		&\quad\overset{(i)}\le \bigO{{\frac{1}{(1-\gamma)^3 T}}} + \bigO{e^{-(1-\gamma)^2K}}+ \bigO{\frac{1}{(1-\gamma)^4 B}},
\end{align*}
where $(i)$ follows because  $0\le B_{\omega}(\pi_1, \pi_2)\le \log(|\Ac|)$ for any $\pi_1, \pi_2$, $L_{22}= \bigO{\frac{1}{1-\gamma}}$ and $C_\omega(T,\lambda) = \tilde{\mathcal{O}}\left(\frac{1}{1-\gamma^{1/2}}\right)\le \tilde{\mathcal{O}}\left(\frac{1}{1-\gamma}\right)$. This completes the proof of \Cref{thm:trpo2}.
%

\section{Proof of \Cref{thm:npg}: Global Convergence of NPG-GAIL}


To prove the theorem, we first define some notations. Let  $\lambda_P \defeq \min_{\theta\in\Theta} \left\{\lambda_{min}(F(\theta) + \lambda I)\right\}$, 
$$W_{\theta,\alpha}^{\lambda*} \defeq (F(\theta) + \lambda I)^{-1} \eptt_{(s,a)\sim \nu_{\pi_{\theta}}} \left[A^{\pi_{\theta}}_{\alpha}(s,a)\nabla_{\theta}\log\pi_{\theta}(a|s)\right]$$ 
and $$W_{\theta, \alpha}^* \defeq F(\theta)^\dagger\eptt_{(s,a)\sim \nu_{\pi_{\theta}}} \left[A^{\pi_{\theta}}_{\alpha}(s,a)\nabla_{\theta}\log\pi_{\theta}(a|s)\right].$$ 
For brevity, we denote $W_t^{\lambda*} = W_{\theta_t, \alpha_t}^{\lambda*}$ and $W_t^* = W_{\theta_t, \alpha_t}^*$.
\subsection{Supporting Lemmas}
In this subsection, we give several useful lemmas.
\begin{lemma}(\cite[Lemma 3.2]{agarwal2019optimality})\label{lemma:performancedifference}
	For any policy $\pi$ and $\pi^\prime$ and reward function $r_\alpha$, we have 
	\begin{align*}
		V(\pi, r_\alpha) - V(\pi^\prime, r_{\alpha}) = \frac{1}{1-\gamma} \eptt_{s,a \sim \nu_\pi(s,a)}\left[A^{\pi^\prime}_{\alpha}(s,a)\right].
	\end{align*}
\end{lemma}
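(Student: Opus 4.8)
The plan is to establish the performance difference lemma by the classical telescoping argument of Kakade and Langford. The one fact I would extract first is the Bellman characterization of the advantage: from the definitions $A^{\pi'}_\alpha(s,a) = Q^{\pi'}_\alpha(s,a) - V^{\pi'}_\alpha(s)$ and $Q^{\pi'}_\alpha(s,a) = r_\alpha(s,a) + \gamma\,\eptt_{s'\sim\mathsf{P}(\cdot|s,a)}[V^{\pi'}_\alpha(s')]$, one obtains for every state-action pair
\begin{align*}
A^{\pi'}_\alpha(s,a) = r_\alpha(s,a) + \gamma\,\eptt_{s'\sim\mathsf{P}(\cdot|s,a)}\left[V^{\pi'}_\alpha(s')\right] - V^{\pi'}_\alpha(s).
\end{align*}

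First I would fix a trajectory $(s_0, a_0, s_1, a_1, \dots)$ generated by running the policy $\pi$ from $s_0\sim\zeta$ with transitions $s_{t+1}\sim\mathsf{P}(\cdot|s_t,a_t)$, and evaluate the discounted accumulation of advantages $S \defeq \eptt\big[\sum_{t=0}^\infty \gamma^t A^{\pi'}_\alpha(s_t,a_t)\big]$. Substituting the Bellman identity above and using the tower property to take expectations over $s_{t+1}$, the reward contributions sum to $\eptt[\sum_{t=0}^\infty \gamma^t r_\alpha(s_t,a_t)] = V(\pi, r_\alpha)$, whereas the value-function contributions form the telescoping series $\sum_{t=0}^\infty(\gamma^{t+1}V^{\pi'}_\alpha(s_{t+1}) - \gamma^t V^{\pi'}_\alpha(s_t))$. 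Since $|V^{\pi'}_\alpha(s)|\le R_{\max}/(1-\gamma)$ and $\gamma\in(0,1)$, the tail term $\gamma^{t+1}V^{\pi'}_\alpha(s_{t+1})\to 0$, so the telescoping sum collapses to $-V^{\pi'}_\alpha(s_0)$; taking expectation over $s_0\sim\zeta$ yields $S = V(\pi, r_\alpha) - V(\pi', r_\alpha)$.

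Next I would recompute the same quantity $S$ through the visitation distribution. Directly from the definition $\nu_\pi(s,a) = (1-\gamma)\sum_{t=0}^\infty\gamma^t\prob(s_t=s, a_t=a)$ (with trajectories drawn under $\pi$), every bounded function $f$ obeys $\eptt_{(s,a)\sim\nu_\pi}[f(s,a)] = (1-\gamma)\sum_{t=0}^\infty\gamma^t\,\eptt[f(s_t,a_t)]$. Applying this with $f = A^{\pi'}_\alpha$ gives $S = \frac{1}{1-\gamma}\eptt_{(s,a)\sim\nu_\pi}[A^{\pi'}_\alpha(s,a)]$. Equating the two evaluations of $S$ delivers exactly $V(\pi, r_\alpha) - V(\pi', r_\alpha) = \frac{1}{1-\gamma}\eptt_{(s,a)\sim\nu_\pi}[A^{\pi'}_\alpha(s,a)]$, which is the claim.

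The main (though mild) obstacle is justifying the two analytic interchanges: swapping the infinite sum with the expectation, and discarding the telescoping tail. Both are licensed by absolute convergence, which I would note follows because $r_\alpha$ is bounded by $R_{\max}$, hence $|A^{\pi'}_\alpha(s,a)|\le 2R_{\max}/(1-\gamma)$ uniformly, and the geometric weights $\gamma^t$ are summable; dominated convergence then makes every manipulation rigorous. All remaining steps are direct rewrites of the definitions.
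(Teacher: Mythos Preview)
Your proof is correct and is exactly the classical Kakade--Langford telescoping argument. The paper itself does not prove this lemma at all; it simply cites \cite[Lemma~3.2]{agarwal2019optimality} and uses the result as a black box, so there is nothing further to compare.
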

\begin{lemma}(\cite[Lemma 6]{xu2020improving})\label{lemma:differenceofregularization}
	For any $\theta$ and $\alpha$, we have $\lt{W_{\theta, \alpha}^{\lambda*} - W_{\theta, \alpha}^*} \le C_\lambda\lambda$, where $0<C_\lambda<\infty$ is a constant only depending on the policy class.
\end{lemma}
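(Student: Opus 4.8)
The plan is to reduce the difference between the regularized and pseudo-inverse natural gradients to a single resolvent identity, exploiting that both vectors live in the range of the Fisher matrix. Throughout fix $\theta,\alpha$ and abbreviate $F \defeq F(\theta)$ (a symmetric positive semidefinite matrix), $b \defeq \eptt_{(s,a)\sim\nu_{\pi_\theta}}[A^{\pi_\theta}_\alpha(s,a)\gdt\log\pi_\theta(a|s)]$, $W^{\lambda*}\defeq W_{\theta,\alpha}^{\lambda*} = (F+\lambda I)^{-1}b$ and $W^*\defeq W_{\theta,\alpha}^{*} = F^\dagger b$. First I would show $b\in\mathrm{range}(F)$. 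Since $F = \eptt_{\nu_{\pi_\theta}}[\gdt\log\pi_\theta(a|s)\gdt\log\pi_\theta(a|s)^\top]$ is symmetric, its range is the orthogonal complement of $\ker F$, so it suffices to verify $v^\top b = 0$ for every $v\in\ker F$. For such $v$, $0 = v^\top F v = \eptt_{\nu_{\pi_\theta}}[(v^\top\gdt\log\pi_\theta(a|s))^2]$ forces $v^\top\gdt\log\pi_\theta(a|s)=0$ almost surely under $\nu_{\pi_\theta}$, whence $v^\top b = \eptt_{\nu_{\pi_\theta}}[A^{\pi_\theta}_\alpha(s,a)\,v^\top\gdt\log\pi_\theta(a|s)]=0$. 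Consequently $FF^\dagger b = b$, giving $F W^* = b$, while $(F+\lambda I)W^{\lambda*}=b$ holds by construction.

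Next I would subtract these two identities. Rewriting $FW^* = (F+\lambda I)W^* - \lambda W^*$ and equating with $(F+\lambda I)W^{\lambda*} = b = FW^*$ yields $(F+\lambda I)(W^{\lambda*}-W^*) = -\lambda W^*$, so that $W^{\lambda*}-W^* = -\lambda(F+\lambda I)^{-1}W^*$. The manipulation is worthwhile because the difference now carries an explicit factor $\lambda$, and the residual operator is applied to $W^* = F^\dagger b$, which also lies in $\mathrm{range}(F)$ (the range of the pseudoinverse of a symmetric matrix equals the range of the matrix). Since $F$ and $(F+\lambda I)^{-1}$ are simultaneously diagonalizable and $\mathrm{range}(F)$ is their common invariant subspace, on that subspace $(F+\lambda I)^{-1}$ has operator norm $1/(\sigma_{\min}^+ + \lambda)\le 1/\sigma_{\min}^+$, where $\sigma_{\min}^+$ is the smallest strictly positive eigenvalue of $F$. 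Hence $\lt{W^{\lambda*}-W^*}\le \frac{\lambda}{\sigma_{\min}^+}\lt{W^*}\le \frac{\lambda}{(\sigma_{\min}^+)^2}\lt{b}$, the final step again using $b\in\mathrm{range}(F)$ to bound $\lt{F^\dagger b}\le \lt{b}/\sigma_{\min}^+$.

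It then remains to bound $\lt{b}$ by a policy-class constant. The advantage is uniformly bounded, $|A^{\pi_\theta}_\alpha(s,a)|\le 2R_{\max}/(1-\gamma)$, and the score satisfies $\lt{\gdt\log\pi_\theta(a|s)}\le C_\phi$ by \Cref{assp:generalpolicyparameterization}, so by the triangle inequality $\lt{b}\le \eptt_{\nu_{\pi_\theta}}[|A^{\pi_\theta}_\alpha|\,\lt{\gdt\log\pi_\theta}]\le 2R_{\max}C_\phi/(1-\gamma)$. Collecting these into $C_\lambda \defeq \frac{2R_{\max}C_\phi}{(1-\gamma)(\sigma_{\min}^+)^2}$ delivers $\lt{W^{\lambda*}-W^*}\le C_\lambda\lambda$, as claimed.

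The main obstacle is ensuring that $C_\lambda$ is genuinely finite and independent of both $\theta$ and $\lambda$. The derivation produces the factor $1/(\sigma_{\min}^+)^2$, and although $\sigma_{\min}^+(\theta)>0$ for each fixed $\theta$, a uniform conclusion requires $\inf_{\theta\in\Theta}\sigma_{\min}^+(\theta)>0$; this is exactly the content hidden in the phrase ``a constant only depending on the policy class,'' and it is where the nondegeneracy of the parameterization in \Cref{assp:generalpolicyparameterization} must be invoked so that neither the rank nor the smallest positive Fisher eigenvalue collapses as $\theta$ ranges over $\Theta$. The algebraic identity itself is routine; essentially all of the care lies in this uniformity argument.
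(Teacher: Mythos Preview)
Your argument is correct and is the standard resolvent computation for this kind of bound. The paper itself does not prove this lemma; it is quoted verbatim from \cite[Lemma~6]{xu2020improving} and used as a black box, so there is no ``paper's proof'' to compare against beyond the citation.

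Two minor remarks. First, your final constant $C_\lambda = \frac{2R_{\max}C_\phi}{(1-\gamma)(\sigma_{\min}^+)^2}$ depends on $R_{\max}$ and $\gamma$ as well as on policy-class quantities, so it is not literally ``only depending on the policy class'' as the statement asserts; this is a looseness already present in the quoted lemma rather than a defect of your derivation. Second, you write that the uniform positivity $\inf_{\theta\in\Theta}\sigma_{\min}^+(\theta)>0$ ``is where the nondegeneracy of the parameterization in \Cref{assp:generalpolicyparameterization} must be invoked,'' but note that \Cref{assp:generalpolicyparameterization} only gives boundedness and Lipschitz continuity of the score---it does \emph{not} supply any lower bound on the positive Fisher eigenvalues. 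The uniform lower bound is therefore an additional implicit hypothesis on the policy class (carried over from \cite{xu2020improving}), not a consequence of the stated assumptions. You have correctly isolated this as the only nontrivial point; just be aware that it is an extra condition rather than something derivable from what is written here.
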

\begin{lemma}\label{lemma:linearsa}
	Suppose \Cref{aspt:ergodic,assp:generalpolicyparameterization} hold. Consider the policy update of NPG-GAIL (\Cref{alg:npggail}) with $\beta_W = \frac{\lambda_P}{4(C_\phi^2 +\lambda)^2}$. Then, for all $t = 0, 1, \cdots, T-1$, we have 
	\begin{align*}
	\mE[\ltwo{w_t - {W}_t^{\lambda*}}^2]&\leq \exp\left\{-\frac{\lambda_P^2 T_c}{16(C_\phi^2 +\lambda)^2}\right\}\frac{R_{max}^2C_\phi^2}{\lambda_P^2(1-\gamma)^2}\\
		&\quad + \parat{\frac{1}{\lambda_P} + \frac{\lambda_P}{2(C_\phi^2 +\lambda)^2}} \frac{98R_{max}^2 C_\phi^2[(C_\phi^2 +\lambda)^2 + 4\lambda_P^2][1 +(C_M -1)\rho]}{(1-\rho)(1-\gamma)^2\lambda_P^3 M}.
	\end{align*} 
\end{lemma}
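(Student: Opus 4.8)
The plan is to recognize the inner loop of \Cref{alg:npggail} as a mini-batch linear stochastic approximation (SA) iteration for the fixed point $W_t^{\lambda*}$, and to run a standard contraction argument on $\mE[\ltwo{W_{k+1}-W_t^{\lambda*}}^2]$. First I would fix the outer index $t$ and abbreviate $W^* = W_t^{\lambda*}$, $A = F(\theta_t)+\lambda I$, and $b = \mE_{(s,a)\sim\nu_{\pi_{\theta_t}}}[A^{\pi_{\theta_t}}_{\alpha_t}(s,a)\gdt\log\pi_{\theta_t}(a|s)]$, so that $W^* = A^{-1}b$ and $A$ is the expected (under the stationary distribution $\nu_{\pi_{\theta_t}}$) value of the per-sample matrix $\phi_i\phi_i^\top + \lambda I$, where $\phi_i = \gdt\log(\pi_{\theta_t}(a_i|s_i))$. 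Writing the update as $W_{k+1} = W_k + \beta_W \hat G_k$, I would decompose $\hat G_k = -(A W_k - b) + e_k$, where $e_k = \hat G_k - \mE[\hat G_k \mid W_k]$ plus the bias $\mE[\hat G_k\mid W_k] - (-(AW_k-b))$ caused by the fact that the samples $s_i$ are drawn from a Markov chain rather than exactly from $\nu_{\pi_{\theta_t}}$, and by the EstQ estimator being an unbiased estimate of the true $Q$ (hence of $A^{\pi_{\theta_t}}_{\alpha_t}$ once the baseline term with $a_i'$ is taken into account).

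Next I would expand the squared distance:
\begin{align*}
\ltwo{W_{k+1}-W^*}^2 &= \ltwo{W_k - W^*}^2 - 2\beta_W \inner{A(W_k-W^*)}{W_k-W^*} \\
&\quad + 2\beta_W\inner{e_k}{W_k-W^*} + \beta_W^2\ltwo{\hat G_k}^2.
\end{align*}
Using $\lambda_{\min}(A)\ge\lambda_P$ on the second term (strong monotonicity of the linear map $A$) and bounding $\ltwo{\hat G_k}\le 2(C_\phi^2+\lambda)\ltwo{W_k} + \text{(reward-scale terms)}$ via \Cref{assp:generalpolicyparameterization}(2) and the $R_{\max}/(1-\gamma)$-type bound on the EstQ output, the choice $\beta_W = \lambda_P/(4(C_\phi^2+\lambda)^2)$ makes the coefficient of $\ltwo{W_k-W^*}^2$ contract to $1 - \lambda_P\beta_W/2 \le 1 - \lambda_P^2/(8(C_\phi^2+\lambda)^2)$ (or a similar constant matching the exponent in the statement), at the cost of an additive per-step noise term. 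That noise term splits into (i) a variance contribution of order $R_{\max}^2 C_\phi^2/((1-\gamma)^2 M)$ from averaging $M$ samples, and (ii) a Markovian-bias contribution; for the latter I would invoke the geometric mixing from \Cref{aspt:ergodic} exactly as in \Cref{lemma:noniidsample} and \Cref{lemma:pgerrorbound}, which produces the factor $[1+(C_M-1)\rho]/(1-\rho)$. Iterating the recursion from $W_0 = 0$ over $T_c$ steps, with $\ltwo{W_0 - W^*}^2 = \ltwo{W^*}^2 \le R_{\max}^2 C_\phi^2/(\lambda_P^2(1-\gamma)^2)$ (from $W^* = A^{-1}b$, $\ltwo{A^{-1}}\le 1/\lambda_P$, $\ltwo{b}\le R_{\max}C_\phi/(1-\gamma)$), gives the geometric-decay term $\exp\{-\lambda_P^2 T_c/(16(C_\phi^2+\lambda)^2)\}\cdot R_{\max}^2C_\phi^2/(\lambda_P^2(1-\gamma)^2)$, while the accumulated noise, summed as a geometric series with ratio $1-\lambda_P\beta_W/2$, yields $(\tfrac{1}{\lambda_P}+\tfrac{\lambda_P}{2(C_\phi^2+\lambda)^2})$ times the stated $M$-dependent quantity.

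The main obstacle I expect is controlling the Markovian bias term cleanly: unlike the i.i.d. case, $\mE[\hat G_k \mid W_k]$ is not exactly $-(AW_k - b)$ because the mini-batch of $M$ consecutive samples is drawn along a single trajectory, so $A$ and $b$ are replaced by empirical averages over states that are only approximately $\nu_{\pi_{\theta_t}}$-distributed; moreover $W_k$ is itself correlated with the earlier samples through the shared trajectory. I would handle this by conditioning on the filtration generated by the samples used strictly before block $k$, using the burn-in/geometric-mixing bound of \Cref{aspt:ergodic} to show the conditional bias is $O(\rho^{\cdot})$-small on average, and absorbing cross terms via Young's inequality into the contraction slack $\lambda_P\beta_W/2$ — the same bookkeeping already carried out in the proofs of \Cref{lemma:noniidsample,thm:alphaupdate}. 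The bound $0\le B_\omega\le\log|\Ac|$ plays no role here; instead the only extra ingredients beyond those lemmas are the positive-definiteness $\lambda_{\min}(F(\theta)+\lambda I)\ge\lambda_P$ and the uniform boundedness of $\phi_i$ and of the EstQ output, both already available from \Cref{assp:generalpolicyparameterization} and the construction of EstQ in \Cref{alg:EstQ}.
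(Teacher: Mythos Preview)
Your proposal is correct in outline, but the paper's own proof proceeds very differently: it does not carry out any of the contraction/noise bookkeeping you describe. Instead, the paper observes that the inner iterates $W_0,\ldots,W_{T_c}$ in \Cref{alg:npggail} are exactly a mini-batch linear SA scheme of the form analyzed in \cite{xu2020improving}, with the identifications $A=-(F(\theta_t)+\lambda I)$, $b=\eptt_{(s,a)\sim\nu_{\pi_{\theta_t}}}[A^{\pi_{\theta_t}}_{\alpha_t}(s,a)\nabla_{\theta_t}\log\pi_{\theta_t}(a|s)]$, fixed point $W_t^{\lambda*}=-A^{-1}b$, and stepsize $\beta_W$. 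It then verifies the three hypotheses of \cite[Assumption~3]{xu2020improving} --- the bound $\norm{A}_F\le C_\phi^2+\lambda$ and $\lt{b}\le 2R_{\max}C_\phi/(1-\gamma)$, the strong monotonicity $\inner{w-W_t^{\lambda*}}{A(w-W_t^{\lambda*})}\le -\lambda_P\lt{w-W_t^{\lambda*}}^2$, and the ergodicity from \Cref{aspt:ergodic} --- and invokes \cite[Theorem~4]{xu2020improving} as a black box to read off the stated bound directly.

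What you are doing is essentially reproducing the proof of that cited theorem by hand: the squared-distance expansion, the $\lambda_P$-contraction, the Young's-inequality absorption of cross terms, the Markovian-bias control via geometric mixing, and the geometric summation of noise. This is a valid and more self-contained route, and the constants you obtain would line up with the statement (your contraction rate $1-\lambda_P\beta_W/2$ with $\beta_W=\lambda_P/(4(C_\phi^2+\lambda)^2)$ indeed matches the stated exponent after the usual $1-x\le e^{-x}$ step, up to the slack you allowed). The trade-off is clear: the paper's proof is a two-line reduction to an existing result and avoids repeating a nontrivial Markovian-SA analysis, whereas your approach avoids the external dependency but must redo the delicate handling of the correlation between $W_k$ and past samples that \cite[Theorem~4]{xu2020improving} already packages. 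If you go your route, the step that needs the most care is precisely the one you flagged as the main obstacle; the rest is routine.
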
 
\begin{proof}[Proof of \Cref{lemma:linearsa}]
	At iteration $t$, $W_0, W_1, \cdots, W_{T_c}$ follows the linear SA iteration rule defined in \cite[eq. (3)]{xu2020improving} with $\alpha= \beta_W$, $A = - (F(\theta_t) + \lambda I)$, $b = \eptt_{(s,a)\sim \nu_{\pi_{\theta_t}}} \left[A^{\pi_{\theta_t}}_{\alpha_t}(s,a)\nabla_{\theta_t}\log\pi_{\theta_t}(a|s)\right]$ and $\theta^*= -A^{-1}b= W_t^{\lambda*}$ with $\lt{W_t^{\lambda*}}\le R_\theta = \frac{2C_\phi R_{max}}{\lambda_A(1-\gamma)}$.	It is easy to check that the Assumption 3 in \cite{xu2020improving} holds. Namely, $(i)$, $\norm{A}_F \le C_\phi^2 +\lambda$ and $\lt{b}\le \frac{2R_{max}C_\phi}{1-\gamma}$; $(ii)$, for any $w\in\mathbb{R}^d$, $\inner{w- W_t^{\lambda*}}{A(w- W_t^{\lambda*})}\le - \lambda_p \lt{w- W_t^{\lambda*}}^2$; $(iii)$, The ergodicity of MDP is assumed here.	Thus, applying \cite[Theorem 4]{xu2020improving} completes the proof.
\end{proof}
\subsection{Proof of \Cref{thm:npg}}
Define $D(\theta) = \eptt_{s\sim d_{\pi_{\theta^*}}}[\kl{\pi_{\theta^*}(\cdot|s)}{\pi_{\theta}(\cdot|s)}]$. Then we have
\begin{align}
D(\theta_{t})- D(\theta_{t+1})&= \eptt_{\nu_{\pi_{\theta^*}}}\left[\log(\pi_{\theta_{t+1}}(\cdot|s)) - \log(\pi_{\theta_{t}}(\cdot|s))\right] \nonumber\\
&\overset{(i)}\ge \eptt_{\nu_{\pi_{\theta^*}}} \left[\gdt \log(\pi_{\theta_{t}}(a|s))\right]^\top(\theta_{t+1} - \theta_{t}) -\frac{L_\phi^2}{2}\lt{\theta_{t+1} - \theta_{t}}^2,\nonumber
\end{align}
where $(i)$ follows from the gradient Lipschitz condition on $\log(\pi_{\theta}(\cdot|s))$ in \Cref{assp:generalpolicyparameterization}. 

Recall that the update rule in NPG-GAIL (\Cref{alg:npggail}) is  given by $\theta_{t+1} = \theta_t - \eta w_t$. Then we have 
\begin{align}
&D(\theta_{t})- D(\theta_{t+1})\nonumber\\
&\quad\ge \eta\eptt_{\nu_{\pi_{\theta^*}}} \left[\gdt \log(\pi_{\theta_{t}}(a|s))\right]^\top w_t - \frac{L_\phi^2\eta^2}{2}\lt{w_t}^2 \nonumber\\
&\quad=\eta\eptt_{\nu_{\pi_{\theta^*}}}\left[A^{\pi_{\theta_t}}_{\alpha_t}(s,a)\right]+ \eta\eptt_{\nu_{\pi_{\theta^*}}}\left[ \gdt \log(\pi_{\theta_{t}}(a|s))^\top W_t^*- A^{\pi_{\theta_t}}_{\alpha_t}(s,a)\right]\nonumber\\
&\qquad + \eta\eptt_{\nu_{\pi_{\theta^*}}} \left[\gdt \log(\pi_{\theta_{t}}(a|s))\right]^\top(W_t^{\lambda*}-W_t^*)+ \eta\eptt_{\nu_{\pi_{\theta^*}}} \left[\gdt \log(\pi_{\theta_{t}}(a|s))\right]^\top (w_t - W_t^{\lambda*})\nonumber \\
&\qquad  -\frac{L_\phi^2\eta^2}{2}\lt{w_t}^2\nonumber\\
&\quad\overset{(i)} = (1-\gamma)\eta \left(V(\pi_{\theta^*}, r_{\alpha_t}) - V(\pi_{\theta_{t}}, r_{\alpha_t})\right)+ \eta\eptt_{\nu_{\pi_{\theta^*}}}\left[ \gdt \log(\pi_{\theta_{t}}(a|s))^\top W_t^*- A^{\pi_{\theta_t}}_{\alpha_t}(s,a)\right]\nonumber\\
&\qquad + \eta\eptt_{\nu_{\pi_{\theta^*}}} \left[\gdt \log(\pi_{\theta_{t}}(a|s))\right]^\top(W_t^{\lambda*}-W_t^*)+ \eta\eptt_{\nu_{\pi_{\theta^*}}} \left[\gdt \log(\pi_{\theta_{t}}(a|s))\right]^\top (w_t - W_t^{\lambda*}) \nonumber\\
&\qquad  -\frac{L_\phi^2\eta^2}{2}\lt{w_t}^2\nonumber\\
&\quad \overset{(ii)}\ge (1-\gamma)\eta \left(V(\pi_{\theta^*}, r_{\alpha_t}) - V(\pi_{\theta_{t}}, r_{\alpha_t})\right) -\frac{L_\phi^2\eta^2}{2}\lt{w_t}^2 \nonumber\\
&\qquad + \eta\eptt_{\nu_{\pi_{\theta^*}}} \left[\gdt \log(\pi_{\theta_{t}}(a|s))\right]^\top(W_t^{\lambda*}-W_t^*)+ \eta\eptt_{\nu_{\pi_{\theta^*}}} \left[\gdt \log(\pi_{\theta_{t}}(a|s))\right]^\top (w_t - W_t^{\lambda*})\nonumber\\
&\qquad - \eta\sqrt{\eptt_{\nu_{\pi_{\theta^*}}}\left[ (\gdt \log(\pi_{\theta_{t}}(a|s))^\top W_t^*- A^{\pi_{\theta_t}}_{\alpha_t}(s,a))^2\right]} \nonumber\\
&\quad\overset{(iii)}\ge (1-\gamma)\eta \left(V(\pi_{\theta^*}, r_{\alpha_t}) - V(\pi_{\theta_{t}}, r_{\alpha_t})\right) -\frac{L_\phi^2\eta^2}{2}\lt{w_t}^2 \nonumber\\
&\qquad + \eta\eptt_{\nu_{\pi_{\theta^*}}} \left[\gdt \log(\pi_{\theta_{t}}(a|s))\right]^\top(W_t^{\lambda*}-W_t^*)+ \eta\eptt_{\nu_{\pi_{\theta^*}}} \left[\gdt \log(\pi_{\theta_{t}}(a|s))\right]^\top (w_t - W_t^{\lambda*}) \nonumber\\
&\qquad  - \eta\sqrt{C_d\eptt_{\nu_{\pi_{\theta_{t}}}}\left[ (\gdt \log(\pi_{\theta_{t}}(a|s))^\top W_t^*- A^{\pi_{\theta_t}}_{\alpha_t}(s,a))^2\right]}, \label{eq:middle}
\end{align} 
where $(i)$ follows from \Cref{lemma:performancedifference}, $(ii)$ follows from the concavity of $f(x) = \sqrt{x}$ and Jensen's inequality, and $(iii)$ follows from the fact that $(\gdt \log(\pi_{\theta_{t}}(a|s))^\top W_t^*- A^{\pi_{\theta_t}}_{\alpha_t}(s,a))^2 \ge 0$ and $\left\|\frac{\nu_{\pi_{\theta^*}}}{\nu_{\pi_{\theta_t}}}\right\|_\infty \le\frac{1}{(1-\gamma)\min\left\{\zeta(s)\right\}}\defeq C_d$. 

Continuing to bound \cref{eq:middle}, we have
\begin{align}
&D(\theta_{t})- D(\theta_{t+1})\nonumber\\ 
&\quad \overset{(i)}\ge (1-\gamma)\eta \left(V(\pi_{\theta^*}, r_{\alpha_t}) - V(\pi_{\theta_{t}}, r_{\alpha_t})\right) -\frac{L_\phi^2\eta^2}{2}\lt{w_t}^2 - \eta\sqrt{C_d}\zeta'\nonumber\\
&\qquad + \eta\eptt_{\nu_{\pi_{\theta^*}}} \left[\gdt \log(\pi_{\theta_{t}}(a|s))\right]^\top(W_t^{\lambda*}-W_t^*)+ \eta\eptt_{\nu_{\pi_{E}}} \left[\gdt \log(\pi_{\theta_{t}}(a|s))\right]^\top (w_t - W_t^{\lambda*}) \nonumber\\
&\quad \overset{(ii)}\ge (1-\gamma)\eta \left(V(\pi_{\theta^*}, r_{\alpha_t}) - V(\pi_{\theta_{t}}, r_{\alpha_t})\right)- \eta\sqrt{C_d}\zeta' - \eta C_\phi C_\lambda\lambda\nonumber\\
&\qquad - \eta C_\phi \lt{w_t - W_t^{\lambda*}}-\frac{L_\phi^2\eta^2}{2}\lt{w_t}^2 \nonumber\\
&\quad \overset{(iii)}\ge (1-\gamma)\eta \left(V(\pi_{\theta^*}, r_{\alpha_t}) - V(\pi_{\theta_{t}}, r_{\alpha_t})\right)- \eta\sqrt{C_d}\zeta'- \eta C_\phi C_\lambda\lambda\nonumber\\
&\qquad - \eta C_\phi \lt{w_t - W_t^{\lambda*}} -L_\phi^2\eta^2\lt{w_t - W_t^{\lambda*}}^2 - L_\phi^2\eta^2\lt{W_t^{\lambda*}}^2  \nonumber\\
&\quad \overset{(iv)}\ge (1-\gamma)\eta \left(V(\pi_{\theta^*}, r_{\alpha_t}) - V(\pi_{\theta_{t}}, r_{\alpha_t})\right)- \eta\sqrt{C_d}\zeta' - \eta C_\phi C_\lambda\lambda\nonumber\\
&\qquad - \eta C_\phi \lt{w_t - W_t^{\lambda*}} -L_\phi^2\eta^2\lt{w_t - W_t^{\lambda*}}^2 - \frac{L_\phi^2\eta^2}{\lambda_P^2}\lt{\gdt V(\theta_{t},r_{\alpha_t})}^2, \label{eq:lasteq}
\end{align}
where $(i)$ follows from the definition of $\zeta^\prime$ in the statement of \Cref{thm:npg},  $(ii)$ follows from the upper bound on $\lt{\gdt \pi_{\theta}(a|s)}$ in \Cref{assp:generalpolicyparameterization}, \Cref{lemma:differenceofregularization} and Cauchy-Schwartz inequality, $(iii)$ follows from the fact $\lt{A+B}^2 \le 2\lt{A}^2 +2\lt{B}^2$, and $(iv)$ follows from the definition of $W_t^{\lambda*}$ and because $\lambda_P I \preceq F(\theta_t)+ \lambda I$.

Rearranging \cref{eq:lasteq}, we obtain
\begin{align}
V(\pi_{\theta^*}, r_{\alpha_t}) - V(\pi_{\theta_{t}}, r_{\alpha_t})&\le \frac{D(\theta_t) -D(\theta_{t+1})}{\eta(1-\gamma)} + \frac{\sqrt{C_d}\zeta'}{1-\gamma} + \frac{ C_\phi C_\lambda\lambda}{1-\gamma}+ \frac{C_\phi}{1-\gamma} \lt{w_t - W_t^{\lambda*}}\nonumber\\
&\quad + \frac{L_\phi^2\eta}{1-\gamma}\lt{w_t - W_t^{\lambda*}}^2+ \frac{L_\phi^2\eta}{\lambda_P^2(1-\gamma)}\lt{\gdt V(\theta_{t},r_{\alpha_t})}^2. \label{eq:reallasteq}
\end{align}

Finally, we complete the proof as follows:
\begin{align*}
&\frac{1}{T} \sum_{t=0}^{T-1} \eptt\left[g(\theta_t)\right]  - g(\theta^*) \\
&\quad= \frac{1}{T} \sum_{t=0}^{T-1} \eptt\left[g(\theta_t) - F(\theta_t, \alpha_t)\right] +  \frac{1}{T} \sum_{t=0}^{T-1} \eptt\left[F(\theta_t, \alpha_t) - g(\theta^*)\right]\\
&\quad\overset{(i)}\le\frac{1}{T} \sum_{t=0}^{T-1} \eptt\left[g(\theta_t) - F(\theta_t, \alpha_t)\right] +  \frac{1}{T} \sum_{t=0}^{T-1}(F(\theta_t, \alpha_t) - F(\theta^*, \alpha_t))\nonumber \\
&\quad= \frac{1}{T} \sum_{t=0}^{T-1} \eptt\left[g(\theta_t) - F(\theta_t, \alpha_t)\right] + \frac{1}{T} \sum_{t=0}^{T-1}(V(\pi_{\theta^*}, r_{\alpha_t})  - V(\pi_{\theta_t}, r_{\alpha_t}))\nonumber\\
&\quad\overset{(ii)}\le \frac{1}{T} \sum_{t=0}^{T-1} \eptt\left[g(\theta_t) - F(\theta_t, \alpha_t)\right] +  \frac{D(\theta_{0})- D(\theta_{T})}{(1-\gamma)\eta T} + \frac{\sqrt{C_d}\zeta'}{1-\gamma}+ \frac{C_\phi C_\lambda\lambda}{1-\gamma} \nonumber\\
&\qquad+\frac{C_\phi }{(1-\gamma)T} \sum_{t=0}^{T-1}\lt{w_t - W_t^{\lambda*}}+\frac{L_\phi^2\eta}{(1-\gamma)T}\sum_{t=0}^{T-1}\lt{w_t - W_t^{\lambda*}}^2 +\frac{L_\phi^2\eta R_{max}^2C_\phi^2}{(1-\gamma)^3\lambda_P^2}\\
&\quad\overset{(iii)}\le L_{22}^2C_\alpha^2 e^{-\frac{\mu^2}{8L_{22}^2} K} + \frac{48C_r^2L_{22}^2}{\mu^2(1-\gamma)^2}(1+\frac{\rho C_M}{1-\rho})\frac{1}{B} + \frac{\eptt\left[D(\theta_{0})- D(\theta_{T})\right]}{(1-\gamma)^2\sqrt{T}} +\frac{\sqrt{C_d}\zeta'}{1-\gamma}+ \frac{C_\phi C_\lambda\lambda}{1-\gamma}\\
&\qquad+\frac{C_\phi }{(1-\gamma)T} \sum_{t=0}^{T-1}\lt{w_t - W_t^{\lambda*}}+\frac{L_\phi^2}{T^{3/2}}\sum_{t=0}^{T-1}\lt{w_t - W_t^{\lambda*}}^2 +\frac{L_\phi^2 R_{max}^2C_\phi^2}{(1-\gamma)^2\lambda_P^2\sqrt{T}}\\
 &\quad\overset{(iv)}\le L_{22}^2C_\alpha^2 e^{-\frac{\mu^2}{8L_{22}^2} K} + \frac{48C_r^2L_{22}^2}{\mu^2(1-\gamma)^2}(1+\frac{\rho C_M}{1-\rho})\frac{1}{B} + \frac{\eptt\left[D(\theta_{0})- D(\theta_{T})\right]}{(1-\gamma)^2\sqrt{T}} +\frac{\sqrt{C_d}\zeta'}{1-\gamma}+ \frac{C_\phi C_\lambda\lambda}{1-\gamma} \nonumber\\
 &\qquad+\frac{C_\phi}{(1-\gamma)}\resizebox{0.8\hsize}{!}{$\sqrt{\exp\left\{-\frac{\lambda_P^2 T_c}{16(C_\phi^2 +\lambda)^2}\right\}\frac{R_{max}^2C_\phi^2}{\lambda_P^2(1-\gamma)^2} + \parat{\frac{1}{\lambda_P} + \frac{\lambda_P}{2(C_\phi^2 +\lambda)^2}} \frac{98R_{max}^2 C_\phi^2[(C_\phi^2 +\lambda)^2 + 4\lambda_P^2][1 +(C_M -1)\rho]}{(1-\rho)(1-\gamma)^2\lambda_P^3 M}}$}\\\
 &\qquad+\frac{L_\phi^2}{\sqrt{T}} \left(\resizebox{0.8\hsize}{!}{$\exp\left\{-\frac{\lambda_P^2 T_c}{16(C_\phi^2 +\lambda)^2}\right\}\frac{R_{max}^2C_\phi^2}{\lambda_P^2(1-\gamma)^2} + \parat{\frac{1}{\lambda_P} + \frac{\lambda_P}{2(C_\phi^2 +\lambda)^2}} \frac{98R_{max}^2 C_\phi^2[(C_\phi^2 +\lambda)^2 + 4\lambda_P^2][1 +(C_M -1)\rho]}{(1-\rho)(1-\gamma)^2\lambda_P^3 M}$}\right)\\
&\qquad +\frac{L_\phi^2 R_{max}^2C_\phi^2}{(1-\gamma)^2\lambda_P^2\sqrt{T}}\\
&\quad\overset{(v)}\le \bigO{\frac{1}{(1-\gamma)^2 \sqrt{T}}} + \bigO{e^{-(1-\gamma)^2K}}+ \bigO{\frac{1}{(1-\gamma)^4 B}}\\
&\qquad +\bigO{\frac{\zeta^\prime}{(1-\gamma)^{3/2}}}+\bigO{\frac{\lambda}{1-\gamma}}+ \bigO{e^{-T_c}} + \bigO{\frac{1}{(1-\gamma)^2\sqrt{M}}}, 	
\end{align*}
where $(i)$ follows because $g(\theta^*) = F(\theta^*, \alpha_{op}(\theta^*)) \ge F(\theta^*, \alpha_t)$ and $(ii)$ follows from \cref{eq:reallasteq} and because $\lt{\gdt V(\theta_{t},\alpha_{t})} \le \frac{R_{max}C_\phi}{1-\gamma}$, $(iii)$ follows from \Cref{lemma:lipschitzcondtion} and \Cref{thm:alphaupdate}, and the fact $\eta = \frac{1-\gamma}{\sqrt{T}}$, $(iv)$ follows from \Cref{lemma:linearsa}, and $(v)$ follows because $L_{22}= \bigO{\frac{1}{1-\gamma}}$ and $C_d= {\mathcal{O}}\left(\frac{1}{1-\gamma }\right)$.

\end{document}